\def\eqref#1{equation~\ref{#1}}
\def\1{\bm{1}}
\DeclareMathAlphabet{\mathsfit}{\encodingdefault}{\sfdefault}{m}{sl}
\SetMathAlphabet{\mathsfit}{bold}{\encodingdefault}{\sfdefault}{bx}{n}
\lstdefinestyle{py}{
  language=Python,
  basicstyle=\ttfamily\small,
  keywordstyle=\bfseries,
  commentstyle=\itshape\color{gray!70},
  stringstyle=\color{teal!70!black},
  showstringspaces=false,
  frame=single,
  breaklines=true,
  tabsize=2,
}
\theoremstyle{plain} 
\newtheorem{theorem}{Theorem}[section]
\newtheorem{lemma}[theorem]{Lemma}
\newtheorem{corollary}[theorem]{Corollary}
\theoremstyle{definition} 
\newtheorem{definition}[theorem]{Definition}
\theoremstyle{remark} 
\newtheorem{remark}[theorem]{Remark}
\definecolor{headergray}{RGB}{245,245,248}
\definecolor{zebralite}{RGB}{251,251,253}
\definecolor{oursbg}{HTML}{E6F4F1}   
\definecolor{bestc}{HTML}{1A73E8}    
\definecolor{secc}{HTML}{F29900}     
\definecolor{zebradark}{RGB}{60,60,60} 
\newcommand{\rkhead}[1]{\cellcolor{headergray}\bfseries #1}
\newcommand{\blkhead}[1]{\rowcolor{headergray}\bfseries #1}
\newcommand{\zebraA}{\rowcolor{zebralite}}
\definecolor{bestc}{HTML}{1A73E8}
\definecolor{secc}{HTML}{F29900}
\newcommand{\best}[1]{{\bfseries\color{bestc}#1}}
\newcommand{\second}[1]{{\bfseries\color{secc}#1}}
\title{
  \begin{minipage}{1.10\textwidth}
    
    \adjustbox{valign=c}{\includegraphics[height=1.2cm]{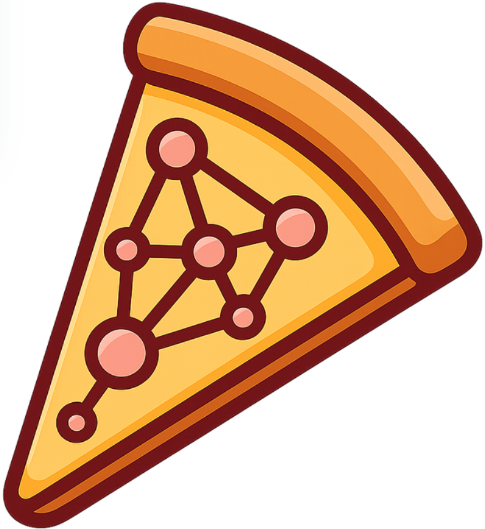}}
    \begin{minipage}{0.8\textwidth}
     SliceFine: The Universal Winning-Slice Hypothesis for Pretrained Networks
    \end{minipage}
  \end{minipage}
}
\author{
\begin{tabular}{c}
\\
Md Kowsher\textsuperscript{1,2}\thanks{This work was done during an internship at Meta.}, 
Ali O. Polat\textsuperscript{1}, 
Ehsan Mohammady Ardehaly\textsuperscript{1}, 
Mehrdad Salehi\textsuperscript{1} \\[3pt]
Zia Ghiasi\textsuperscript{1}, 
Prasanth Murali\textsuperscript{1}, 
Chen Chen\textsuperscript{2} \\[6pt]
\normalfont
\textsuperscript{1}Meta \quad 
\textsuperscript{2}University of Central Florida \\[4pt]
\normalfont
\faGithub\ \href{https://github.com/facebookresearch/SliceFine}{\textcolor{red}{github.com/facebookresearch/SliceFine}} \\
\normalfont
\faGlobe\ \href{https://facebookresearch.github.io/SliceFine/}{\textcolor{red}{facebookresearch/SliceFine}}
\end{tabular}
}
\begin{document}

\maketitle

\begin{abstract}
This paper presents a theoretical framework that explains why fine-tuning small, randomly selected subnetworks (\emph{slices}) within pre-trained models is sufficient for downstream adaptation. We prove that pretrained networks exhibit a \textbf{universal winning slice} property, arising from two phenomena: (1) \textbf{spectral balance}---the eigenspectra of different weight matrix slices are remarkably similar---and (2) \textbf{high task energy}---their backbone representations (pretrained weights) retain rich, task-relevant features. This leads to the \emph{Universal Winning Slice Hypothesis}, which provides a theoretical foundation for parameter-efficient fine-tuning (PEFT) in large-scale models. Inspired by this, we propose \textbf{\emph{SliceFine}}, a PEFT method that uses this inherent redundancy by updating only selected slices of the original weights---introducing zero new parameters, unlike adapter-based approaches. Empirically, \emph{SliceFine} matches the performance of SOTA PEFT methods across various language and vision tasks, while significantly improving training speed, memory efficiency, and model compactness. Our work bridges theory and practice, offering a theoretically grounded alternative to existing PEFT techniques.
\end{abstract}
\vspace{-0.5cm}
\begin{figure*}[!htb]

 \begin{center}

      \includegraphics[width=1.001\linewidth]{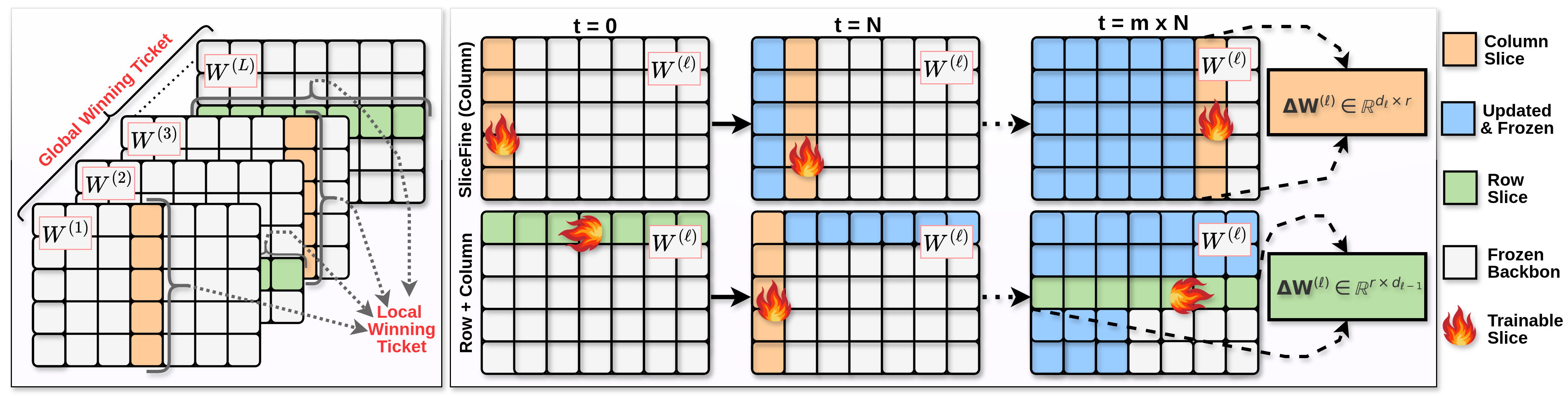}
   \end{center}
\vspace{-0.3cm}
\caption{
\textbf{(Left) Winning Tickets.} In a pretrained network, a \emph{randomly chosen slice} of a layer \(W^{(\ell)} \in \mathbb{R}^{d_\ell \times d_{\ell-1}}\) \emph{acts as a local winning ticket}: tuning only that slice lowers the loss while keeping the backbone frozen. A few such slices (row, column, or row-column) selected across layers constitute a \emph{global winning ticket}. 
\textbf{(Right) SliceFine.} At step $t$, only a slice of the weight matrix $W^{(\ell)}$ is updated; all other entries remain fixed. Every \(N\) steps, we activate a new slice at a different position for learning; the previously
active slice retains its learned update but is frozen.
Top: \emph{column sweep}—the slice slides across columns. 
Bottom: \emph{row–column alternation}—the slice alternates between a column block and a row block to cover complementary directions. Similarly, in \emph{row sweep}—the slice slides across rows.
This schedule updates only a tiny portion of the model at a time while gradually covering many regions; applying it across several layers yields a global winner.
}

\label{fig:slice}
\end{figure*}
\vspace{-0.4cm}
\section{Introduction}

Large pretrained models work well across many tasks, yet we still lack a simple picture of \emph{why} small changes are often enough to adapt them. Popular parameter-efficient fine-tuning (PEFT) methods add small modules (e.g., adapters, low-rank layers) and train only those parts while freezing the rest \citep{hu2021lora, kowsher2024propulsion, zhang2023adaptive, liu2024dora, prottasha2025peft}. This raises a basic question: why do these small modules work, and do we actually need to \emph{add} new parameters—or is there already enough useful structure inside the pretrained weights themselves?

This paper addresses this question by introducing the \emph{Universal Winning Slice Hypothesis}, which provides a simple, testable explanation for fine-tuning behavior in large-scale pretrained models and guides a very light form of adaptation. First, we briefly fix terminology and state the \textit{Universal Winning–Slice Hypothesis}.

\textbf{Terminology.}  Let $f_{\theta_0}: \mathbb{R}^d \to \mathbb{R}^k$ denote a pretrained neural network of depth $L$, where each layer $\ell$ is parameterized by a weight matrix $W^{(\ell)} \in \mathbb{R}^{d_\ell \times d_{\ell-1}}$. A \emph{slice} of a layer $l$ is a contiguous set of rows or columns of a single weight matrix \(W^{(\ell)}\), specified by a mask $M^{(\ell)} \in \{0,1\}^{d_\ell \times d_{\ell-1}}$. A \emph{winning slice} (local winner)
is a slice whose update alone reduces the loss at \(\theta_0\).
A \emph{slice set} \(\mathcal{T}=\{M_i\}_{i=1}^m\) is a small collection of slices across layers.
When the joint update over \(\mathcal{T}\) attains near full fine-tuning performance,
\(\mathcal{T}\) is a \emph{global winning ticket}. We call $r$ the \emph{slice rank} (number of selected columns or
rows). \emph{Slice rank is not the matrix rank of $W^{(\ell)}$;}
it measures the slice \emph{width} (capacity).

\noindent\textbf{Universal Winning–Slice Hypothesis (UWSH). }
\textit{In a dense, pretrained network, any random slice with sufficient width has a local winning ticket: training only that slice while freezing the rest improves downstream performance (a local winner). Moreover, tuning a small set of such slices across layers can match full fine-tuning accuracy while updating far fewer parameters (a global winner).}

This view differs from the Lottery Ticket Hypothesis (LTH) \citep{frankle2019lottery, yu2021bertlottery, chen2020lotterybert}, which posits a \emph{special}, sparse subnetwork (“winning ticket”) that must be found by pruning and then trained—often from its original initialization—to match full–network performance. Most analyses of LTH rely on carefully identifying such subnetworks via iterative pruning by randomly \citep{bai2022dual} or importance-based selection \citep{you2022early}. In contrast, we study large-scale, dense, pretrained networks and establish a theoretical framework showing that a random simple \emph{slice} can serve as a winning ticket. Consequently, our goal is to fine-tune only these slices—without adding new parameters—as a PEFT strategy for adapting pretrained models to downstream tasks. The reason every sufficiently wide slice can be a winning ticket is twofold: First, \emph{spectral balance}. Take any layer and split its weight matrix into simple groups of rows or columns (slices). In Figure~\ref{fig:bert_eigen_spectra}, if we look at each group’s covariance and its eigenvalues, we find that the average size of the eigenvalues and the way they decay are very similar across groups, implying that all slices have comparable capacity for fine-tuning. Second, \emph{high task energy}. After pretraining, the backbone features of a model already line up a large share of their variation with directions that are useful for downstream tasks. This is reflected in the high cumulative explained variance of the top PCA components of the centered features (details in Lemma~\ref{lemma:pca_ntk}), or, equivalently, a “lazy” NTK spectrum where a few top directions dominate. When most of the useful signal lives in a small set of directions, a small slice that has enough rank will inevitably touch those directions and can move the model in the right way. 

Together, these two facts give a simple rule of thumb. Because slices are balanced in strength, \emph{which} slice we pick is not critical. Because the backbone already concentrates task energy, a slice in $W^{(\ell)}$ overlaps with the task-relevant subspace, produces a nonzero restricted gradient, and reduces the loss. This picture also clarifies common PEFT observations: adapters help because the backbone already carries most of what is needed (see Corollary~\ref{cor:backbone}).

\textit{As every sufficiently wide slice in  $W^{(\ell)}$ has a local winning ticket, can we fine-tune \emph{only} a slice?} Based on the UWSH, we propose \textbf{\emph{SliceFine}}, which trains a set of slices \(\mathcal{T}\) across different layers (to get a global winning ticket) while keeping the backbone frozen and \textbf{adds no new parameters}. The per-layer cost scales with the slice rank, and in practice very small ranks—often \(r{=}1\)—already match full fine-tuning or strong PEFT baselines on downstream tasks. Although Figure~\ref{fig:static_dynamic} shows good accuracy on downstream tasks when training only a fixed slice in $W^{(\ell)}$, to cover more directions over time we freeze the active slice after a fixed number of steps $N$ and then activate a new slice at a different position, effectively performing block\mbox{-}coordinate descent across the layer. In practice, we do not need to traverse all possible positions; switching the active slice every $N=500$ training steps and making only $5$--$10$ switches in total achieves optimal performance across all of our experiments.  Figure~\ref{fig:slice} (right top) illustrates a \emph{column-slice} schedule: at \(t{=}0\) the first column group is trained; after \(N\) steps the slice shifts to the next column group, and so on. Figure~\ref{fig:slice} (right bottom) shows an \emph{alternating} schedule: at \(t{=}0\) a column slice is trained; at \(t{=}N\) the slice switches to a row group; subsequent shifts continue alternating between columns and rows. 

\textbf{Our contributions are summarized as follows}: (i) A theoretical and testable hypothesis, \textbf{UWSH}: in pretrained networks, an arbitrary sufficiently wide slice is a \emph{local winning ticket}; multiple slices across layers form a \emph{global winning ticket}. (ii) A PEFT method \emph{SliceFine} by following UWSH: trains only a set of slice \(\mathcal{T}\) across different layers, with no new parameters. (iii) Broad empirical study across language and vision showing accuracy on par with—or better than—strong PEFT baselines, with lower memory, faster throughput, and smaller model footprints. (iv) Ablations that connect rank choice, slice selection, switching interval, initialization, and backbone pruning to the simple picture above, using explained variance/NTK laziness as a guide.

\section{Winner Slices in Pretrained Networks} \label{sec:winning}

Let $Y \in \mathbb{R}^{c \times n}$ be the label encoding matrix for $n$ downstream examples across $c$ classes, and let $P_Y \in \mathbb{R}^{n\times n}$ be the orthogonal projector onto the column space of $Y$.  The \emph{task covariance} at layer $\ell$ is $
\Sigma_{\mathrm{task}} := \frac{1}{n} \, \Phi_\ell \, P_Y \, \Phi_\ell^\top$ and task subspace $ k_{\mathrm{task}} := \mathrm{rank}(\Sigma_{\mathrm{task}}).$

Consider a network $f_{\theta_0}$ is \emph{pretrained} on a large dataset so that, for each layer $\ell$, the intermediate representations $\phi_{\ell}(x) \in \mathbb{R}^{d_\ell}$ satisfy: $\mathrm{rank}(\Phi_{\ell}) = \mathrm{rank}([\phi_{\ell}(x_1), \dots, \phi_{\ell}(x_n)]) \geq k_{\mathrm{task}},$
where $\Phi_{\ell} \in \mathbb{R}^{d_\ell \times n}$ is the matrix of downstream representations and $k_{\mathrm{task}}$ is the intrinsic dimension of the downstream task in the representation space at layer $\ell$, i.e., the smallest number of orthogonal directions needed to achieve perfect linear classification of the labels.

Formally, the slice parameters \(M^{(\ell)} \odot W^{(\ell)}\) (where \(\odot\) denotes element-wise multiplication) are trainable, while all other entries remain fixed at their pretrained values.  \textit{Unlike pruning in LTH \citep{yu2021bertlottery, chen2020lotterybert}, the frozen weights are not zeroed out; they continue to contribute during forward propagation, providing a representational scaffold that allows the trainable slice to adapt effectively.} The learned increment is $\Delta W^{(\ell)} = M^{(\ell)} \odot U^{(\ell)},$ where $U^{(\ell)}$ denotes the gradient-driven update. Given a downstream dataset $D=\{(x_i,y_i)\}_{i=1}^n$ and convex per-sample loss $\ell$, the fine-tuning objective is: $\mathcal{L}(\theta) = \frac{1}{n}\sum_{i=1}^n \ell(f_\theta(x_i),y_i).$

Let $U_{k_{\mathrm{task}}} \in \mathbb{R}^{d_\ell \times k_{\mathrm{task}}}$ denote the top $k_{\mathrm{task}}$ left singular vectors of $\Phi_\ell$, spanning the task-relevant subspace. The projection operator is $P_{U_{k_{\mathrm{task}}}} = U_{k_{\mathrm{task}}} U_{k_{\mathrm{task}}}^\top.$ The quantity $k_{\mathrm{task}}$ captures the minimum number of feature-space directions required
for perfect linear separation of the downstream labels.  
In \emph{SliceFine}, we must determine whether each slice of a pretrained weight matrix overlaps enough with the \(k_{\mathrm{task}}\)\mbox{-}dimensional task subspace to reduce downstream loss. The lemma below shows that, in pretrained networks $f_{\theta_0}$, slices from the $W^{(\ell)}$ have similar average spectral energy and comparable eigenvalue decay—a property we call \emph{spectral balance}.

\begin{lemma}[Spectral Balance Across Slices]\label{lemma:spectral_balance}
Consider a pretrained layer $W^{(\ell)}$ partitioned into $k$ disjoint groups $\{W_g\}_{g=1}^k$. 
Let $\Sigma_g := W_g W_g^\top$ and let $\lambda_1(\Sigma_g) \ge \dots \ge \lambda_{d_\ell/k}(\Sigma_g)$ 
denote its eigenvalues in descending order. Although each group is \emph{anisotropic}—its eigenvalues decay from large to small—the groups exhibit
\emph{spectral balance} in the sense that: $
\frac{\frac{1}{d_\ell/k} \sum_{i=1}^{d_\ell/k} \lambda_i(\Sigma_g)}
     {\frac{1}{d_\ell/k} \sum_{i=1}^{d_\ell/k} \lambda_i(\Sigma_{g'})} 
\approx 1,
\quad 
\forall g,g' \in \{1,\dots,k\},$ 
and their decay profiles are boundedly similar: $
\max_{1 \le i \le d_\ell/k} \frac{\left| \lambda_i(\Sigma_g) - \lambda_i(\Sigma_{g'}) \right|}{\lambda_i(\Sigma_{g'})} \le \rho,
$ for some small $\rho \ll 1$ independent of $g, g'$.
Thus, no group is disproportionately ``weak'' or ``strong'' in average spectral energy, 
and each slice retains non-trivial overlap with the task-relevant subspace.
\end{lemma}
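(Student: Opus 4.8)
The content of the lemma is that the rows of $W^{(\ell)}$ are \emph{statistically homogeneous}: after pretraining, $w_1,\dots,w_{d_\ell}\in\R^{d_{\ell-1}}$ behave like (approximately) exchangeable draws from one distribution with covariance $C=\E[w_jw_j^\top]$. This is the regularity observed empirically in Figure~\ref{fig:bert_eigen_spectra}, and it is the only place the ``pretrained'' hypothesis enters. Granting it, the plan is: (i) prove the average-energy claim from a trace identity plus scalar concentration, and (ii) prove the decay-profile claim by showing every block Gram matrix is governed by the \emph{same} population object and then invoking Weyl's inequality. The constant $\rho$ will emerge as $\rho\asymp\epsilon_m/\lambda_{\min}$, with $\epsilon_m$ a concentration radius that vanishes as the block width $m:=d_\ell/k$ grows and $\lambda_{\min}$ the smallest eigenvalue of $C$ that exceeds this radius.

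\textbf{Step 1: average spectral energy.} Since $\Sigma_g=W_gW_g^\top$, the block-averaged eigenvalue is $\tfrac1m\sum_{i=1}^m\lambda_i(\Sigma_g)=\tfrac1m\Tr(\Sigma_g)=\tfrac1m\norm{W_g}_F^2=\tfrac1m\sum_{j\in g}\norm{w_j}^2$, i.e.\ the mean squared row norm inside block $g$. Under homogeneity each $\norm{w_j}^2$ has common mean $\mu=\Tr(C)$ and bounded variance, so Bernstein/Hoeffding gives $\bigl|\tfrac1m\sum_{j\in g}\norm{w_j}^2-\mu\bigr|=O(m^{-1/2})$ with high probability, uniformly over the $k$ blocks after a union bound. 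The ratio of any two block averages is therefore $1+O(m^{-1/2})\approx1$ whenever the slice is ``sufficiently wide,'' i.e.\ $m$ large.

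\textbf{Step 2: decay profile.} The nonzero eigenvalues of $\Sigma_g$ are those of the block Gram matrix $G_g:=W_g^\top W_g=\sum_{j\in g}w_jw_j^\top$, a sum of $m$ i.i.d.\ rank-one terms drawn from a \emph{common} law. When $m\gg d_{\ell-1}$, matrix Bernstein yields $\norm{m^{-1}G_g-C}_2\le\epsilon_m$ with high probability, with the \emph{same} $C$ for every $g$; Weyl's inequality then gives $|\lambda_i(m^{-1}G_g)-\lambda_i(C)|\le\epsilon_m$ for all $i$, hence $|\lambda_i(\Sigma_g)-\lambda_i(\Sigma_{g'})|\le 2m\,\epsilon_m$ for every pair $g,g'$; dividing by $\lambda_i(\Sigma_{g'})\ge m(\lambda_i(C)-\epsilon_m)$ makes the factor $m$ cancel and yields the relative bound with $\rho=2\epsilon_m/(\lambda_{\min}(C)-\epsilon_m)$, independent of $g,g'$. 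In the proportional regime $m\asymp d_{\ell-1}$ (the typical square-weight case) one replaces ``$m^{-1}G_g\to C$ in operator norm'' by ``the empirical spectral distribution of $m^{-1}G_g$ converges to the same deterministic deformation of the spectrum of $C$,'' and uses eigenvalue rigidity to conclude that $\lambda_i(\Sigma_g)$ and $\lambda_i(\Sigma_{g'})$ both lie within $o(m)$ of a common classical location—again $g$-independent. Finally, because each block's top spectrum matches that of $C$ on its effective-rank part, $\mathrm{rank}(W_g)\ge k_{\mathrm{task}}$ and the block's leading singular subspace cannot be orthogonal to $\mathrm{span}(U_{k_{\mathrm{task}}})$, giving the non-trivial-overlap conclusion.

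\textbf{Main obstacle.} The delicate point is that the lemma asks for a \emph{relative} bound over \emph{all} indices $i$, whereas near the bottom of the spectrum $\lambda_i(C)$ can be comparable to $\epsilon_m$ and the ratio is then uncontrolled. I would resolve this by restricting the maximum to the effective rank $\{i:\lambda_i(C)>\epsilon_m\}$—equivalently, assuming $C$ is well-conditioned on the subspace carrying essentially all of its energy—which is exactly the regime relevant to the application, since only directions with non-negligible energy can interact with the $k_{\mathrm{task}}$-dimensional task subspace. With that restriction $\rho\ll1$ for wide layers. A secondary difficulty is that ``homogeneity of rows'' is an assumption rather than a theorem about training dynamics; I would state it explicitly as a mild, empirically validated regularity condition and make clear that all quantitative rates ($O(m^{-1/2})$, $\epsilon_m$, $\rho$) are expressed through it.
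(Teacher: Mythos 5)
The paper never proves Lemma~\ref{lemma:spectral_balance}: it is presented as an empirically validated regularity of pretrained weights, supported only by Figure~\ref{fig:bert_eigen_spectra} and the reported inter-group dispersion values ($\rho\approx 10^{-3}$), and it is then \emph{used} as a hypothesis in Theorem~\ref{thm:universal_winning_ticket} and Lemma~\ref{lem:backbone_alignment}. So your proposal is not a re-derivation of an existing argument; it is an attempt to turn an empirical observation into a theorem, which is a genuinely different (and more ambitious) route than the paper takes.

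On its own merits the plan is reasonable but the logical work is almost entirely carried by the homogeneity hypothesis: assuming the rows $w_j$ are (approximately) i.i.d.\ draws from one law with covariance $C$ is already very close to assuming spectral balance at the population level, so the concentration machinery mainly converts that assumption into finite-$m$ rates rather than explaining why pretraining produces balance. Beyond that, three technical points need care. First, rows of a trained weight matrix are not independent, and matrix Bernstein (and the scalar Bernstein step) require independence; exchangeability alone does not suffice, so you must either assume independence outright or use a concentration tool adapted to weak dependence. Second, your Step 2 is phrased for the regime $m\gg d_{\ell-1}$, but the slices actually used in the paper are thin ($m=r$ as small as $1$--$5$, so $m\ll d_{\ell-1}$); there $\Sigma_g=W_gW_g^\top$ is a small Gram matrix whose eigenvalues concentrate around $\Tr(C)$ only under additional near-orthogonality of rows, and your proportional-regime rigidity remark does not cover this case either. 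Third, restricting the maximum to the effective rank $\{i:\lambda_i(C)>\epsilon_m\}$ is necessary for your relative bound, but it silently weakens the lemma as stated, whose maximum runs over all $1\le i\le d_\ell/k$ (and which would otherwise divide by eigenvalues near or equal to zero when $m>d_{\ell-1}$). If you present this, state the homogeneity/independence condition and the index restriction explicitly as assumptions, and note that the paper itself supplies only empirical evidence at exactly this point.
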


This property guarantees that no slice is degenerate with respect to the task subspace:
all slices retain comparable spectral energy and a non-zero projection onto task-relevant directions. 
Consequently, each slice admits a non-zero restricted gradient with respect to the downstream loss and can independently reduce the loss—i.e., it qualifies as a \emph{local winning slice}.

\begin{figure*}[!htb]

 \begin{center}

      \includegraphics[width=0.9941\linewidth]{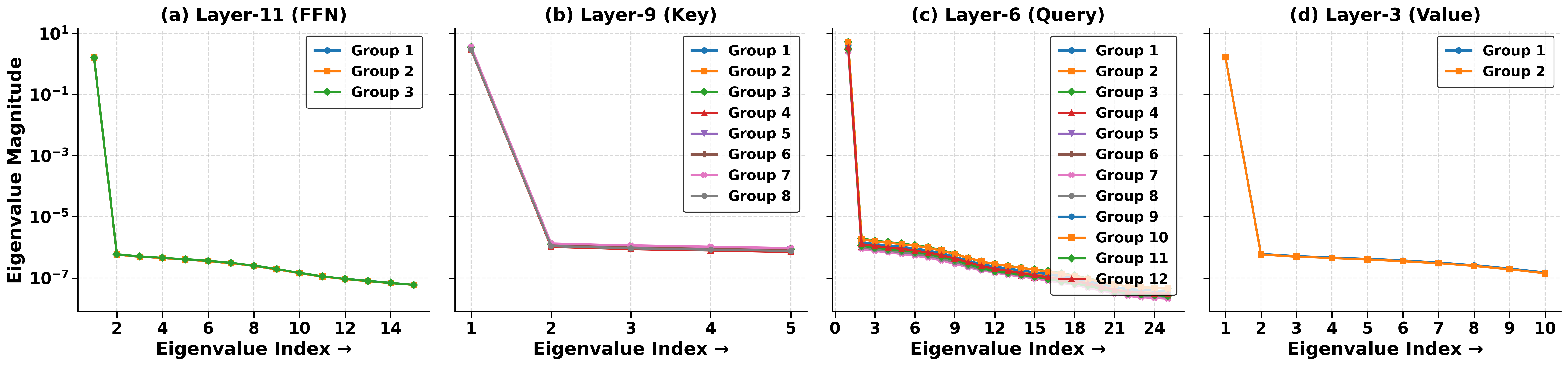}
   \end{center}
\vspace{-0.4cm}
\caption{
    Eigenvalue spectra of FFN, Key, Query, and Value weight matrices from different layers of a pretrained \texttt{RoBERTa-base} model.   For each matrix, weights are partitioned into groups, and the eigenvalues of the within-group covariance $\Sigma_g=W_g^{(\ell)} W_g^{(\ell)\top}$ are plotted in descending order. 
    }
\label{fig:bert_eigen_spectra}
\vspace{-0.5cm}
\end{figure*}

Figure~\ref{fig:bert_eigen_spectra} shows that, although each group’s eigen-spectrum is anisotropic—dominated by a few large eigenvalues—the decay profiles and the average spectral energy are nearly identical across groups. Groups are constructed by partitioning the weight matrix into $n$ equal blocks: for a row-wise partition (first two figures) we split the $d_\ell$ rows into $n$ groups (each of rank $r = d_\ell/n$), and for a column-wise partition (last two figures) we split the $d_{\ell-1}$ columns into $n$ groups (each of rank $r =d_{\ell-1}/n$). Empirically, the average inter-group variance metric $\rho$ computed across all groups and eigen-components is $0.00125$, $0.00192$, $0.00255$, and $0.00102$, respectively, indicating negligible dispersion. This supports the \emph{spectral balance} property of Lemma~\ref{lemma:spectral_balance}, implying that no group is disproportionately weak or strong. 

Spectral balance is the key precondition that motivates the following two definitions. We first define a \emph{local winning slice} to capture the ability of a slice to reduce downstream loss, and then a \emph{global winning ticket} to describe a set of slices $\tau$ across different layers whose joint optimization achieves full fine-tuning performance.

\begin{definition}[Local Winner]
A mask $M$ is a \emph{local winning slice} if there exists an update $U$ such that $
\mathcal{L}(\theta_0 + M \odot U) \leq \mathcal{L}(\theta_0) - \delta
$ for some $\delta>0$, and the restricted gradient is non-zero:
$
\|\nabla_{M} \mathcal{L}(\theta_0)\|_2 =
\Bigg\|
\frac{\partial \mathcal{L}(\theta)}{\partial (M \odot W^{(\ell)})}\Bigg|_{\theta = \theta_0}
\Bigg\|_2 > 0.
$ Under spectral balance, every slice has comparable spectral energy and retains
overlap with the task subspace, ensuring its restricted gradient is non-zero
and enabling it to independently reduce loss.

\end{definition}

\begin{definition}[Global Winner]
A set of masks $\mathcal{T} =\{M_i\}_{i=1}^m$ forms a \emph{global winning ticket} if there exist updates $\{U_i\}_{i=1}^m$ such that
$
\mathcal{L}\Big(\theta_0 + \sum_{i=1}^m M_i \odot U_i\Big) \leq \epsilon,
$ for some $\epsilon>0$, achieving performance close to full fine-tuning while updating only a small fraction of weights.
\end{definition}

While the local winning slice definition ensures each slice can make independent progress,
the global winning ticket definition captures the compositional effect:
if different slices contribute complementary directions in the task subspace,
their combined span can cover the full $k_{\mathrm{task}}$ subspace, matching
full fine-tuning performance.

\begin{theorem}[Universal Winning Ticket] \label{thm:universal_winning_ticket}
Let $f_{\theta_0}$ be a pretrained network of depth $L$. For each layer $\ell$, let $\Phi_\ell = [\phi_\ell(x_1),\dots,\phi_\ell(x_n)]$ have singular values $\sigma_1 \geq \sigma_2 \geq \dots$ satisfying $
\frac{\sum_{j=1}^{k_{\mathrm{task}}}\sigma_j^2}{\sum_j \sigma_j^2} \geq \eta,$ for some $\eta \in (0,1)$. Assume further that every slice vector $w_j$ of $W^{(\ell)}$ has non-trivial projection into the top-$k_{\mathrm{task}}$ subspace: $
\|P_{U_{k_{\mathrm{task}}}} w_j\|_2^2 \geq \gamma > 0.
$

Then:
1. (\emph{Local winners}) 
For any binary mask $M$ selecting a slice of weights in layer $\ell$, 
if the restricted Jacobian $J_M(x) := \nabla_{M \odot W^{(\ell)}} f_{\theta_0}(x)$ 
satisfies $\mathrm{rank}(J_M) \ge k_{\mathrm{task}} > 0$, 
then the gradient on the slice is nonzero: $
\left\|
\frac{\partial \mathcal{L}(\theta)}{\partial (M \odot W^{(\ell)})}
\right\|_F \Bigg|_{\theta=\theta_0} > 0.$
Consequently, there exists a perturbation $U$ supported on $M$ such that, 
for sufficiently small $\eta > 0$, $
\mathcal{L}(\theta_0 + \eta \, M \odot U) 
< \mathcal{L}(\theta_0) - \delta,$ making the slice a \emph{local winner}. 
This follows from spectral balance and projection properties, 
which ensure $J_M$ retains task-relevant directions.

2. (\emph{Global tickets}) There exists a collection $\{M_i\}_{i=1}^m$, with $m \ll \sum_\ell d_\ell$, such that joint or sequential optimization yields $
\mathcal{L}\Big(\theta_0 + \sum_{i=1}^m M_i \odot U_i\Big) \leq \epsilon,$ for arbitrarily small $\epsilon$. This holds because the combined Jacobians $
\dim \mathrm{span}(J_{M_1}(x),\dots,J_{M_m}(x)) \to k_{\mathrm{task}},$ incrementally span the task-relevant subspace, enabling near full fine-tuning performance while updating only a small subset of weights.

\end{theorem}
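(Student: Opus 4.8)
The plan is to run both parts through the first-order linearization of $f_{\theta_0}$ around the pretrained weights, exploiting convexity of the per-sample loss $\ell$ in the network output together with the two structural facts already in hand: the energy concentration ($\eta$) and the uniform per-slice overlap ($\gamma$), the latter guaranteed across slices by Lemma~\ref{lemma:spectral_balance}.

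\emph{Part 1 (local winners).} First I would write the restricted gradient by the chain rule as $\nabla_{M\odot W^{(\ell)}}\mathcal{L}|_{\theta_0}=\frac1n\sum_{i=1}^n J_M(x_i)^\top r_i$ with output-space residuals $r_i:=\nabla_f \ell(f_{\theta_0}(x_i),y_i)$. Since $f_{\theta_0}$ has not been adapted to the downstream task, the $r_i$ are not all zero, and because an $\eta$-fraction of the label-explaining energy of $\Phi_\ell$ sits in the top-$k_{\mathrm{task}}$ subspace, the residual mass the labels actually constrain lies (up to this $\eta$) in $\mathrm{span}(U_{k_{\mathrm{task}}})$. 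The hypothesis $\mathrm{rank}(J_M)\ge k_{\mathrm{task}}$ together with the uniform overlap $\|P_{U_{k_{\mathrm{task}}}}w_j\|_2^2\ge\gamma$ is then used to lower-bound the projection of the residuals onto $\mathrm{range}(J_M)$ on the order of $\sqrt{\gamma\eta}\,\|r\|$, forcing $\sum_i J_M(x_i)^\top r_i\neq 0$. With the restricted gradient nonzero, take $U$ to be its negative (supported on $M$) and Taylor-expand: $\mathcal{L}(\theta_0+\eta\,M\odot U)=\mathcal{L}(\theta_0)-\eta\|\nabla_{M\odot W^{(\ell)}}\mathcal{L}\|_F^2+O(\eta^2)$, where smoothness of $f_{\theta_0}$ and $\ell$ controls the remainder; for $\eta$ small enough the decrease exceeds $\delta:=\tfrac{\eta}{2}\|\nabla_{M\odot W^{(\ell)}}\mathcal{L}\|_F^2>0$, so $M$ is a local winner.

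\emph{Part 2 (global tickets).} I would build $\{M_i\}$ greedily. Put $\mathcal{S}_t:=\mathrm{span}\big(\bigcup_{i\le t}\mathrm{range}(J_{M_i})\big)$. Part 1 shows $\mathcal{S}_1$ already meets the task subspace; while $\mathcal{S}_t\not\supseteq\mathrm{span}(U_{k_{\mathrm{task}}})$, pick a task direction $v\notin\mathcal{S}_t$ and, using spectral balance (every slice carries comparable, non-degenerate overlap with the task subspace) and the fact that disjoint slices act on disjoint coordinate blocks, choose $M_{t+1}$ whose restricted Jacobian has a nonzero component along $v$, so $\dim\mathcal{S}_{t+1}\ge\dim\mathcal{S}_t+1$. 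After $m=O(k_{\mathrm{task}}/r)$ steps, $\dim\mathrm{span}(J_{M_1}(x),\dots,J_{M_m}(x))\to k_{\mathrm{task}}$. Since the downstream labels are perfectly linearly classifiable from $\Phi_\ell$ inside that subspace, the linearized response $f_{\theta_0}(x_i)+\sum_j J_{M_j}(x_i)U_j$ can be steered to label-realizing targets for all $i$ at once; convexity of $\ell$ yields a joint minimizer with $\mathcal{L}\le\epsilon$, and the same value is attained by sequential block-coordinate descent over the $U_j$ (each sweep is a convex subproblem), which is exactly the SliceFine schedule.

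\emph{Main obstacle.} The crux is the alignment step in Part 1: converting the static facts $\mathrm{rank}(J_M)\ge k_{\mathrm{task}}$ and $\|P_{U_{k_{\mathrm{task}}}}w_j\|_2^2\ge\gamma$ into the dynamic conclusion $\sum_i J_M(x_i)^\top r_i\neq0$, i.e.\ ruling out a conspiratorial orthogonality between the residuals and $\mathrm{range}(J_M)$. This needs a genuinely quantitative overlap bound---e.g.\ lower-bounding $\sigma_{k_{\mathrm{task}}}\!\big(P_{U_{k_{\mathrm{task}}}}J_M\big)$ via $\gamma$ and the pullback geometry from the output to layer $\ell$, and lower-bounding the task-subspace mass of the residuals via $\eta$ and the pre-fine-tuning loss---so that a Cauchy--Schwarz/singular-value argument closes the gap. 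A secondary gap is the validity of the linearization away from $\theta_0$ used in Part 2; I would either invoke the lazy/NTK regime (small-norm updates, wide layers) so the Jacobian is essentially constant along the optimization path, or weaken exact realizability to an $\epsilon$-approximate statement and let the slice count and training budget grow as $\epsilon\to0$.
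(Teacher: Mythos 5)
Your proposal follows essentially the same route as the paper's proof: the chain-rule decomposition $\nabla_{M\odot W^{(\ell)}}\mathcal{L}|_{\theta_0}=\frac1n\sum_i J_M(x_i)^\top r_i$, a nonzero-restricted-gradient claim drawn from the rank/overlap assumptions, and a smoothness-controlled descent step for Part 1, followed by slices whose Jacobians span the task subspace, the linearized (NTK) regime, and convexity of the loss in the logits for Part 2. The ``main obstacle'' you flag---ruling out orthogonality between the residuals and $\mathrm{range}(J_M)$---is exactly the step the paper itself leaves as a bare assertion in this proof and only makes quantitative later under the additional alignment hypothesis of Lemma~\ref{lem:backbone_alignment}, so your sketch is no less complete than the paper's own argument.
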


Theorem~\ref{thm:universal_winning_ticket} follows naturally: spectral balance ensures that
\emph{any} slice is a local winner (Part~1), while the finite rank $k_{\mathrm{task}}$
determines the number of slices required to assemble a global ticket (Part~2). Details of the proof are provided in Appendix~\ref{proof:universal_winning_ticket}. 

We now examine how large a slice must be to act as a winner. Empirically, our ablation study (Appendix~\ref{ab:rank_winner}) shows that training a 
\emph{slice} with rank one ($r=1$ i.e., updating a single column or a single row of a pretrained weight matrix)  is often sufficient. 
Theoretically, we find that this rank requirement is determined by the spectral structure of the representation space, which can be understood via the relationship between the linearized NTK kernel and the PCA decomposition of the features. We observe a strong dependence on the domain. When the feature spectrum exhibits high cumulative explained variance (CEV) and the model remains in a lazy NTK regime after fine-tuning, a slice rank \(r\) suffices to capture the task directions. Conversely, when the spectrum has lower CEV and the NTK is less lazy, larger slices are needed to cover enough task-relevant directions. Since, in the linearized regime, the NTK spectrum matches the PCA spectrum of the centered features (Lemma~\ref{lemma:pca_ntk}), the PCA curve gives a direct guide: a steeper curve suggests a slice suffices; a flatter curve calls for a larger slice.

\begin{lemma}[PCA Decomposition of the Representation/Linearized NTK Kernel] \label{lemma:pca_ntk}
Let $H\in\mathbb{R}^{n\times d}$ be hidden representations and
$\tilde H := H - \frac{1}{n}\mathbf{1}\mathbf{1}^\top H$ the centered features.
Let $\Sigma := \frac{1}{n-1}\tilde H^\top \tilde H$ with eigendecomposition
$\Sigma = V\Lambda V^\top$, where $V\in\mathbb{R}^{d\times r}$ has orthonormal
columns and $\Lambda = \mathrm{diag}(\lambda_1,\dots,\lambda_r)$ with
$\lambda_i>0$.

Define $P := \tilde H V \in \mathbb{R}^{n\times r}$. Then the (centered) feature
Gram matrix (a.k.a. linearized representation kernel) $
K := \tilde H \tilde H^\top \in \mathbb{R}^{n\times n}$ admits the exact decomposition $
\boxed{K \;=\; P P^\top}.$

Moreover, the nonzero eigenvalues of $K$ are $(n-1)\lambda_1,\dots,(n-1)\lambda_r$.
Consequently, for any $k\le r$,$
\frac{\sum_{i=1}^k \lambda_i}{\sum_{j=1}^r \lambda_j}
\;=\;
\frac{\sum_{i=1}^k \mu_i}{\sum_{j=1}^r \mu_j},
\qquad
\mu_i := (n-1)\lambda_i,$ so PCA explained-variance ratios in feature space match those of $K$.

\end{lemma}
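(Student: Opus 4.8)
The plan is to reduce everything to the thin singular value decomposition of the centered feature matrix and then read off both the kernel identity and the eigenvalue correspondence directly. Write $\tilde H = U_H S V^\top$ for the thin SVD, where $U_H\in\mathbb{R}^{n\times r}$ and $V\in\mathbb{R}^{d\times r}$ have orthonormal columns, $S=\mathrm{diag}(s_1,\dots,s_r)$ with $s_1\ge\dots\ge s_r>0$, and $r=\mathrm{rank}(\tilde H)$. The first bookkeeping step is to check that the $V$ appearing here is exactly the $V$ in the statement: since $\tilde H^\top\tilde H = V S^2 V^\top$, the columns of $V$ are orthonormal eigenvectors of $\tilde H^\top\tilde H$, hence of $\Sigma=\frac{1}{n-1}\tilde H^\top\tilde H$, with eigenvalues $\lambda_i = s_i^2/(n-1)>0$. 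Retaining only the strictly positive eigenvalues forces the number of columns to equal $\mathrm{rank}(\Sigma)=\mathrm{rank}(\tilde H)=r$, so the two $V$'s coincide up to the usual freedom of basis within repeated-eigenvalue blocks, which is irrelevant to all quantities below.

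Next I would establish the boxed identity $K=PP^\top$. Substituting the SVD, $P=\tilde H V = U_H S V^\top V = U_H S$ using $V^\top V = I_r$, so $PP^\top = U_H S^2 U_H^\top$. On the other side, $K=\tilde H\tilde H^\top = U_H S V^\top V S U_H^\top = U_H S^2 U_H^\top$, hence $K=PP^\top$. I would also record the SVD-free one-liner for readers who prefer it: $VV^\top$ is the orthogonal projector onto the row space of $\tilde H$ (it is the projector onto $\mathrm{range}(\Sigma)=\mathrm{range}(\tilde H^\top)$), so $\tilde H VV^\top = \tilde H$, and therefore $PP^\top = \tilde H\,VV^\top\,\tilde H^\top = \tilde H\tilde H^\top = K$.

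The spectra then fall out immediately. From $K=U_H S^2 U_H^\top$ with $U_H$ having orthonormal columns, the nonzero eigenvalues of $K$ are exactly $s_1^2,\dots,s_r^2$; since $s_i^2=(n-1)\lambda_i$, these equal $(n-1)\lambda_1,\dots,(n-1)\lambda_r=:\mu_1,\dots,\mu_r$, as claimed. For the explained-variance statement, note that for any $k\le r$ the common factor $(n-1)$ cancels between numerator and denominator, giving $\frac{\sum_{i=1}^k\lambda_i}{\sum_{j=1}^r\lambda_j}=\frac{\sum_{i=1}^k\mu_i}{\sum_{j=1}^r\mu_j}$.

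I do not expect a genuine obstacle here: the content is a single SVD substitution plus one projector identity. The only place requiring care is the first paragraph — confirming that the eigendecomposition $\Sigma=V\Lambda V^\top$ with strictly positive $\lambda_i$ has precisely $r=\mathrm{rank}(\tilde H)$ columns, and that the centering step $H\mapsto H-\frac{1}{n}\mathbf{1}\mathbf{1}^\top H$ enters only through the definition of $\tilde H$ and plays no further role. I would also add one sentence clarifying that identifying $K=\tilde H\tilde H^\top$ with the ``linearized NTK / representation kernel'' is simply the definition of that kernel in the linearized regime, so no separate NTK calculation is needed for this lemma; the downstream use (a steep PCA curve $\Leftrightarrow$ few dominant kernel directions $\Leftrightarrow$ small sufficient slice rank) follows directly from the eigenvalue equality.
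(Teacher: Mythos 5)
Your proposal is correct and follows essentially the same route as the paper's proof: a thin SVD of $\tilde H$, identification of $V$ and $\lambda_i = s_i^2/(n-1)$, the substitution $P=\tilde H V = U_H S$ giving $PP^\top = U_H S^2 U_H^\top = K$, and cancellation of the $(n-1)$ factor for the variance ratios. Your projector one-liner $PP^\top=\tilde H VV^\top \tilde H^\top=\tilde H\tilde H^\top$ is a nice compact variant of the paper's SVD-free remark (which instead invokes that $AB$ and $BA$ share nonzero spectra), but the substance is the same.
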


\begin{corollary}[Minimal Slice Rank from PCA/NTK Spectrum] \label{cor:rank}
Let $k_{\mathrm{task}}(\tau)$ be the smallest integer $m$ such that 
$\sum_{i=1}^m \lambda_i / \sum_{j=1}^r \lambda_j \ge \tau$ for a threshold 
$\tau \in (0,1]$. 
If the slice rank $r_{\mathrm{slice}} \ge k_{\mathrm{task}}(\tau)$, then with high 
probability that the slice intersects the task-relevant subspace, and thus can serve as 
a local winner. 
For familiar domains, $k_{\mathrm{task}}(\tau)$ is small due to a steep PCA/NTK spectrum; 
for unfamiliar domains, it is larger due to a flatter spectrum.
\end{corollary}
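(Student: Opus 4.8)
The plan is to treat Corollary~\ref{cor:rank} as a bookkeeping consequence of Lemma~\ref{lemma:pca_ntk}, Lemma~\ref{lemma:spectral_balance}, and Part~1 of Theorem~\ref{thm:universal_winning_ticket}, with a single genericity argument in the middle. First I would use Lemma~\ref{lemma:pca_ntk} to pin down the object called the ``task-relevant subspace.'' Since the explained-variance ratios of the feature covariance $\Sigma = V\Lambda V^\top$ coincide with those of the linearized kernel $K=\tilde H\tilde H^\top$, the integer $k_{\mathrm{task}}(\tau)$ defined by the partial-sum threshold $\sum_{i=1}^{m}\lambda_i/\sum_{j=1}^{r}\lambda_j\ge\tau$ is simultaneously (i) the number of top PCA directions capturing a $\tau$-fraction of feature variance and (ii) the number of top NTK eigendirections capturing a $\tau$-fraction of kernel energy. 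I would then set $\mathcal{S}_\tau := \mathrm{span}(v_1,\dots,v_{k_{\mathrm{task}}(\tau)})$, a subspace of dimension exactly $k_{\mathrm{task}}(\tau)$, and identify it with (a refinement of) the $k_{\mathrm{task}}$-dimensional subspace $U_{k_{\mathrm{task}}}$ appearing in Theorem~\ref{thm:universal_winning_ticket}; taking $\tau$ at the variance cutoff $\eta$ recovers exactly that theorem's hypothesis.

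The core step is to show that a randomly chosen slice of rank $r_{\mathrm{slice}}\ge k_{\mathrm{task}}(\tau)$ produces a restricted Jacobian $J_M(x)=\nabla_{M\odot W^{(\ell)}}f_{\theta_0}(x)$ with $\mathrm{rank}(J_M)\ge k_{\mathrm{task}}(\tau)$ and nontrivial image along $\mathcal{S}_\tau$. I would factor $J_M$ through the pre-activation at layer $\ell$ by the chain rule: the directions $J_M$ can move are spanned by the $r_{\mathrm{slice}}$ selected rows (or columns) of $W^{(\ell)}$ composed with the downstream Jacobian, so generically $\mathrm{rank}(J_M)=\min(r_{\mathrm{slice}},\,\text{rank of the downstream map})$, and high task energy ($\eta$ close to $1$, i.e. a steep spectrum) guarantees the downstream map carries at least $k_{\mathrm{task}}(\tau)$ usable output directions. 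Spectral balance (Lemma~\ref{lemma:spectral_balance}) then upgrades this to a probabilistic statement: because the per-slice covariances $\Sigma_g=W_gW_g^\top$ have $\rho$-balanced eigenvalues and each slice retains the non-trivial projection $\|P_{\mathcal{S}_\tau}w_j\|_2^2\ge\gamma$ assumed in Theorem~\ref{thm:universal_winning_ticket}, all but an $O(\rho)$-fraction of slices have their selected row block projecting onto $\mathcal{S}_\tau$ with rank $\min(r_{\mathrm{slice}},k_{\mathrm{task}}(\tau))=k_{\mathrm{task}}(\tau)$. Hence with probability $\ge 1-O(\rho)$ the slice intersects $\mathcal{S}_\tau$, so $\mathrm{rank}(J_M)\ge k_{\mathrm{task}}>0$, and Part~1 of Theorem~\ref{thm:universal_winning_ticket} delivers a nonzero restricted gradient and an on-support perturbation $U$ with $\mathcal{L}(\theta_0+\eta\,M\odot U)<\mathcal{L}(\theta_0)-\delta$, which is exactly the Local Winner condition.

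Finally I would dispatch the ``familiar versus unfamiliar domain'' clause as pure monotonicity of the partial sums $\sum_{i=1}^{m}\lambda_i$: a rapidly decaying PCA/NTK spectrum reaches the $\tau$-fraction at small $m$, so $k_{\mathrm{task}}(\tau)$ is small, whereas a flat spectrum forces $m$ large; one can make this quantitative with a power law $\lambda_i\propto i^{-\alpha}$, giving $k_{\mathrm{task}}(\tau)$ decreasing in the steepness $\alpha$, but nothing beyond Lemma~\ref{lemma:pca_ntk} is needed here.

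The main obstacle I anticipate is the genericity/non-degeneracy argument in the core step: spectral balance as stated controls eigenvalue \emph{dispersion} across slices, not directly the \emph{measure} of ``atypical'' slices for which the selected rows collapse onto a proper subspace of $\mathcal{S}_\tau$, so turning the $\rho$ bound into the ``with high probability'' claim will require an auxiliary incoherence/near-isotropy hypothesis on the row directions relative to $\mathcal{S}_\tau$ (the natural companion of spectral balance) together with a standard anti-concentration estimate for the smallest singular value of $P_{\mathcal{S}_\tau}$ applied to a random rank-$r_{\mathrm{slice}}$ row block. The remaining pieces --- identifying $\mathcal{S}_\tau$ via Lemma~\ref{lemma:pca_ntk}, invoking Theorem~\ref{thm:universal_winning_ticket} Part~1, and the spectral-monotonicity remark --- are essentially routine once this estimate is in place.
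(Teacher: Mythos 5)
Your route is essentially the one the paper takes. The paper gives no standalone proof of Corollary~\ref{cor:rank}: it treats it as exactly the chaining you describe---Lemma~\ref{lemma:pca_ntk} identifies $k_{\mathrm{task}}(\tau)$ with the cutoff of the (shared) PCA/NTK spectrum, spectral balance (Lemma~\ref{lemma:spectral_balance}) together with the standing projection assumption $\|P_{U_{k_{\mathrm{task}}}}w_j\|_2^2\ge\gamma>0$ of Theorem~\ref{thm:universal_winning_ticket} guarantees the slice's restricted Jacobian retains task-relevant directions, Part~1 of that theorem (equivalently Lemma~\ref{lem:backbone_alignment}) then gives the nonzero restricted gradient and guaranteed decrease, and the familiar-versus-unfamiliar clause is just the monotonicity of the partial sums, exactly as in your last step. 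Where you diverge is instructive: you attempt to \emph{derive} the ``with high probability'' phrasing from spectral balance via an incoherence-plus-anti-concentration estimate on $P_{\mathcal{S}_\tau}$ applied to a random row block, and you correctly flag that spectral balance as stated bounds eigenvalue dispersion across slices, not the measure of degenerate slices, so this step needs an additional non-degeneracy hypothesis. The paper never closes that gap either---it sidesteps it by \emph{assuming} the deterministic overlap bound $\gamma>0$ for every slice vector (so the corollary's ``high probability'' is, in effect, a hedge on a deterministic hypothesis) and supports the probabilistic, domain-dependent reading only empirically in Appendix~\ref{app:rank_ntk_pca}. In substance your proposal matches the paper's argument, and it is more explicit than the paper about the auxiliary assumption that a genuinely probabilistic version of the claim would require.
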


The proof of Lemma~\ref{lemma:pca_ntk} is given in Appendix~\ref{proof:lemma_pca_ntk}. 
Additional discussion and empirical evidence connecting slice rank, NTK laziness, and 
PCA variance profiles are provided in Appendix~\ref{app:rank_ntk_pca}.

Now we turn our attention to analyzing how much \emph{task energy} the frozen backbone
already carries for an $r$-rank slice to be a winner. We measure this with the PCA/NTK cumulative explained variance of the features: when the CEV is high, most of the feature energy lies in a small set of directions that separate the labels (the “task subspace”). From a small calibration set, estimate \(k_{\mathrm{task}}(\tau)\): the fewest principal components that explain at least a fraction \(\tau\) of the feature variance. Then set the slice size \(r_{\text{slice}}\ge k_{\mathrm{task}}(\tau)\). Under the same spectral-balance assumption as in Lemma~\ref{lem:backbone_alignment}, any slice of this size has nonzero overlap with the task subspace, so its gradient is nonzero and the loss decreases when we update it. In short: if the backbone shows high task energy, a slice that is large enough will train well. This also explains the failure modes. If we weaken the backbone (e.g., heavy pruning or strong domain shift), the CEV curve becomes flatter, the effective task subspace grows, and the slice’s overlap shrinks. Lemma~\ref{lem:backbone_alignment} then predicts smaller gradients and smaller guaranteed improvement. To recover performance we must increase \(r_{\text{slice}}\) or combine several slices until their total span reaches \(k_{\mathrm{task}}(\tau)\). These trends match our experiments: familiar domains (high CEV) work with very small slices, while unfamiliar or degraded backbones require larger slices (Details in Appendix \ref{app:backbone}).

\section{SliceFine: Slice as Efficient Fine-tuning}
Section~\ref{sec:winning} established two key facts: (i) \emph{spectral balance}—different row/column groups in a pretrained layer have similar spectral strength—and (ii) \emph{high task energy}—the frozen features already concentrate variance along task-relevant directions. Theorem~\ref{thm:universal_winning_ticket} implies a simple recipe: every slice whose rank satisfies $r_{\text{slice}}\!\ge\!k_{\mathrm{task}}(\tau)$ has a winning ticket. This observation directly motivates a practical PEFT method \emph{SliceFine}, which trains one small slice at a time and moves it across positions to accumulate task-aligned directions.  A slice is a row  or column mask of rank $r$ applied to $W^{(\ell)}\!\in\!\mathbb{R}^{d_\ell\times d_{\ell-1}}$. At step $t$, only entries in the active mask $M_\ell(t)$ are trainable and we apply an increment $U^{(\ell)}_t$ supported on $M_\ell(t)$, $
W^{(\ell)} \;\leftarrow\; W^{(\ell)} \;+\; M_\ell(t)\odot U^{(\ell)}_t ,
$ leaving all non–masked entries fixed at their pretrained values (plus any past increments when they were previously active). After every $N$ steps in Figure~\ref{fig:slice}, we \emph{move} the mask to a new position, i.e., $M_\ell(t+N)\in\mathcal{M}_\ell$ where $\mathcal{M}_\ell$ is the set of admissible row/column slices of rank $r$; in practice we use a simple cyclic sweep (left–to–right for row, top–to–bottom for column), though random or coverage–aware choices are also possible (Appendix~\ref{app:random_slice}). Over $K$ distinct positions $\{M_{\ell,i}\}_{i=1}^K$, the accumulated update equals $\Delta W^{(\ell)}=\sum_{i=1}^K M_{\ell,i}\odot U^{(\ell)}_i$, and the linearized effect is $f_{\theta_0+\Delta\theta}(x)\approx f_{\theta_0}(x)+\sum_{i=1}^K J_{M_{\ell,i}}(x)\,\mathrm{vec}(U^{(\ell)}_i)$, i.e., block–coordinate descent over Jacobian blocks. Empirically, we apply \emph{SliceFine} to all linear layers or to selected layers (e.g., \texttt{nn.Linear} in PyTorch) to obtain a global winning ticket, as detailed in Table~\ref{tab:optimal_rank}.

\emph{SliceFine} has three design knobs: \emph{rank} $r$, \emph{slice selection}, and \emph{switching interval} $N$. Rank controls capacity; by Corollary~\ref{cor:rank}, taking $r\!\ge\!k_{\mathrm{task}}(\tau)$ ensures the active slice intersects the task subspace, yielding a nonzero restricted gradient and a guaranteed decrease (Lemma~\ref{lem:backbone_alignment}). We estimate $k_{\mathrm{task}}(\tau)$ once from frozen features on a small calibration set; empirically, a very small rank (often $r{=}1$)  is sufficient for efficient fine-tuning. 

For slice selection, spectral balance implies robustness to position: any row or column slice of rank $r$ behaves similarly. We therefore initialize the mask at the first (index-0) position and perform a deterministic sweep across positions, which simplifies implementation and makes the training dynamics easy to track. We also verify that purely random mask placements perform comparably (Appendix~\ref{app:random_slice}). The interval $N$ trades off per–slice adaptation and coverage: small $N$ explores more positions but can slow early convergence; large $N$ adapts deeper on each slice but delays coverage. 

Our ablation~\ref{ab:interval} shows a broad, task–dependent sweet spot (e.g., $N\!\approx\!100$–$500$ for STS–B/QNLI \citep{wang2018glue} and $N\!\approx\!500$–$1500$ for MRPC/SST–2 \citep{wang2018glue}), consistent with the theory that each visited slice contributes additional task–aligned directions until the combined span matches the task dimension. Computationally, per–layer cost scales as $O(d_\ell \times r)$ (row) or $O(d_{\ell-1} \times r)$ (column) with zero auxiliary parameters, and optimizer state is maintained only for active entries, yielding favorable memory and throughput characteristics (Details in Section~\ref{ab:efficiency}). The implementation details are presented in the Appendix \ref{app:implementation}.

\section{Experiments}
In this section, we conduct experiments across a diverse set of downstream tasks spanning text, image, and video, thereby covering commonsense reasoning, mathematical reasoning, natural language understanding, image classification, and video action recognition. 
This broad evaluation design allows us to examine whether the theoretical guarantees of the Winner Slice Hypothesis translate into consistent practical gains across modalities. We report results for six variants that differ only in slice rank and orientation: \emph{SliceFine-1R}, \emph{SliceFine-1C}, \emph{SliceFine-1RC},
\emph{SliceFine-5R}, \emph{SliceFine-5C}, \emph{SliceFine-5RC}. Here “1R/5R’’ denotes a \emph{row} slice of rank $r\in\{1,5\}$; “1C/5C’’ denotes a \emph{column} slice of the same rank; and “1RC/5RC’’ alternates row and column slices with the same total rank per switch. Additionally, Table~\ref{tab:optimal_rank} reports performance across slice ranks \(r\in\{1,2,4,8,32,64,128\}\) (see Appendix~\ref{ab:optimal_rank} for details) and Figure~\ref{fig:winner_slice_ablation}(a) shows increasing higher rank tends to gradually overfit.  Across all tables, the best result is highlighted in \textcolor{blue}{blue} and the second best in \textcolor{orange}{orange}. The complete experimental setup is described in detail in the appendices. 
Appendix~\ref{app:dataset} provides dataset descriptions, Appendix~\ref{app:baseline} lists all baseline PEFT methods, 
Appendix~\ref{app:hyperparameter} outlines hyperparameter choices and training protocols, 
and Appendix~\ref{app:models} details the backbone models used in each modality.

\textbf{Main Results.} On commonsense and mathematical reasoning with \emph{LLaMA-3B}, \emph{SliceFine} (Table~\ref{tab:LLM_results}) matches or surpasses strong PEFT baselines—including LoRA, AdaLoRA, and RoCoFT—while using fewer trainable parameters.
For example, \emph{SliceFine-5RC} attains the highest average score ($82.13\%$) in math reasoning, outperforming AdaLoRA by $+1.07$ points while requiring significantly fewer trainable parameters. 
Even rank one column or row slice consistently surpasses lighter baselines such as BitFit and Prefix Tuning. 

On visual downstream tasks (Table~\ref{tab:vtab_video}), SliceFine matches or outperforms strong low-rank baselines such as VeRA and AdaLoRA, while updating only $0.08$M–$0.41$M parameters. For instance, on VTAB-1K image classification using  \texttt{ViT-Base-Patch16-224}, \emph{SliceFine-5R} reaches $88.85\%$ average accuracy, surpassing LoRA ($88.08\%$) and AdaLoRA ($87.96\%$).  On video recognition using \texttt{VideoMAE-base}, \emph{SliceFine-5RC} achieves $73.09\%$, outperforming HRA ($72.53\%$) and MiSS ($72.99\%$), highlighting that slice winners generalize beyond static perception to spatio-temporal modeling.

\begin{table*}[t]
\centering

\setlength{\tabcolsep}{3pt}
\begin{adjustbox}{max width=\textwidth}
\begin{tabular}{
l l
S[table-format=2.2]
S[table-format=2.2] S[table-format=2.2] S[table-format=2.2] S[table-format=2.2] S[table-format=2.2] S[table-format=2.2] S[table-format=2.2] S[table-format=2.2] S[table-format=2.2]
S[table-format=2.2] S[table-format=2.2] S[table-format=2.2] S[table-format=2.2] S[table-format=2.2] S[table-format=2.2]
}
\toprule
 &  &  &
\multicolumn{8}{c}{\textbf{Commonsense Reasoning}} & \multicolumn{6}{c}{\textbf{Math Reasoning}} \\
\cmidrule(lr){4-12}\cmidrule(lr){13-18}
\textbf{LLM} & \textbf{Method} & {\#TTPs} & \multicolumn{1}{c}{BoolQ} & \multicolumn{1}{c}{PIQA} & \multicolumn{1}{c}{SIQA} & \multicolumn{1}{c}{H.Sw.} & \multicolumn{1}{c}{W.Gra.} & \multicolumn{1}{c}{ARCe} & \multicolumn{1}{c}{ARCc} & \multicolumn{1}{c}{OBQA} & \multicolumn{1}{c}{Avg.} &
\multicolumn{1}{c}{M.Ar.} & \multicolumn{1}{c}{G.8K} & \multicolumn{1}{c}{A.S.} & \multicolumn{1}{c}{\text{Se.Eq}} & \multicolumn{1}{c}{S.MP} & \multicolumn{1}{c}{Avg.} \\
\midrule

\multirow{1}{*}{\rotatebox{270}{LLaMA-3$_{8B}$}} 
& Prefix   & 38.09 & 70.85 & 81.91 & 78.66 & 79.98 & 75.11 & 74.40 & 59.88 & 73.08 & 74.23 & 87.29 & 71.66 & 84.24 & 82.93 & 54.20 & 76.06 \\
& AdaLoRA  & 33.05 & 71.05 & 82.11 & \best{82.27} & 91.76 & 79.31 & 79.17 & 61.91 & 77.14 & 77.71 & 91.57 & 79.68 & 89.22 & 83.97 & \second{63.51}& 81.41 \\
& VeRA     & 15.05 & 69.32 & 81.20 & 80.39 & 91.15 & 79.47 & 78.26 & 62.02 & 76.73 & 77.32 & \second{92.16 }& 78.36 & 88.91 & 84.48 & 61.71 & 81.07 \\
& LoRA     & 13.77 & 71.25 & 82.01 & 79.47 & 91.96 & \best{80.29} & \best{79.83 }& 62.22 & 77.55 & 78.12 & 91.47 & 78.15 & 84.99 & \best{84.96  }& 62.02 & 80.32 \\
& RoCoFT   & 6.90  & \second{72.47} & 82.62 & 80.79 & 91.67 & 79.78 & 79.17 & 62.22 & \best{79.37} & 78.51 & 91.38 & 80.49 & 88.51 & 83.46 & 63.13 & 81.55 \\
& HRA      & \second{6.25}  & 72.27 & \second{82.82} & 80.29 & 91.97 & 79.37 & 79.07 & 62.52 & \second{79.27} & 78.44 & 92.18 & 80.29 & 89.42 & 83.36 & 62.83 & 81.62 
\\
\cline{2-18}
\vspace{-0.3cm}
\\
\rowcolor{oursbg} & \textit{SliceFine-1R} & \best{ 2.18} & 70.88 & 81.16 & 79.12 & 90.19 & 78.04 & 78.58 & 61.95 & 78.73 & 77.33 & 91.65 & 79.85 & 88.36 & \second{84.83  }& 62.55 & 81.45 \\
\rowcolor{oursbg} & \textit{SliceFine-1C}  &\best{ 2.18} & 70.94 & 81.08 & 78.95 & 90.98 & 78.00 & 77.56 & 61.14 & 77.70 & 77.04 & 91.44 & 78.80 & 87.21 & 82.72 & 61.73 & 80.38 \\

\rowcolor{oursbg} & \textit{SliceFine-1RC}  &\best{ 2.18} & 71.02 & 82.46 & \second{81.26} & \second{92.61} & 80.28 & 79.80 & 61.93 & 79.18 & 78.66 & 92.12 & 80.11 & \best{89.77 }& 82.18 & 63.55 & 81.55 \\
\rowcolor{oursbg} & \textit{SliceFine-5R}  & 6.89 & 71.96 & 82.40 & 81.20 & 92.58 & 80.23 & 79.77 & \second{62.88} & 78.92 & \second{78.74 }& \best{92.15} & \second{81.05 }& 89.70 & 84.11 & 63.50 & \second{82.08} \\
\rowcolor{oursbg} & \textit{SliceFine-5C}  & 6.89 & \best{72.49} & \best{82.86} & 80.97 & 91.97 & 79.71 & \second{79.82} & 62.47 & 78.40 & 78.73 & 92.11 & 80.52 & 89.11 & 83.55 & 63.08 & 81.72 \\
\rowcolor{oursbg} & \textit{SliceFine-5RC} & 6.89 & 72.01 & 82.45 & 81.25 & \best{92.64} & 80.27 & \second{79.82} & \best{62.92 } & 78.97 & \best{78.79} & 92.10 & \best{81.10} & \second{89.75  }& 84.16 & \best{63.53} & \best{82.13} \\
\midrule

\end{tabular}
\end{adjustbox}
\vspace{-0.2cm}
\caption{Commonsense and math reasoning with LLaMA-3B. 
SliceFine (shaded) rivals or surpasses baselines such as LoRA and AdaLoRA while using far fewer trainable parameters (\#TTPs).}

\label{tab:LLM_results}

\end{table*}
\begin{table*}[t]
\centering
\scriptsize
\setlength{\tabcolsep}{3pt}   
\renewcommand{\arraystretch}{1.05} 

\begin{adjustbox}{max width=\textwidth} 
\begin{tabular}{
l
S[table-format=3.3] S[table-format=2.2] S[table-format=2.2]
S[table-format=2.2] S[table-format=2.2] S[table-format=2.2]
S[table-format=2.2] S[table-format=3.3]
S[table-format=1.3]
S[table-format=2.2] S[table-format=2.2] S[table-format=2.2] S[table-format=2.2]
S[table-format=1.3]
}

\toprule
& \multicolumn{9}{c}{\textbf{Image}} & \multicolumn{5}{c}{\textbf{Video}} \\
\cmidrule(lr){2-10}\cmidrule(lr){11-15}
\makecell{\textbf{PEFT}} &
\multicolumn{1}{c}{\textbf{Caltech}} &
\multicolumn{1}{c}{\textbf{Flowers}} &
\multicolumn{1}{c}{\textbf{Pets}} &
\multicolumn{1}{c}{\textbf{Camel.}} &
\multicolumn{1}{c}{\textbf{Euro.}} &
\multicolumn{1}{c}{\makecell{\textbf{Retino.}}} &
\multicolumn{1}{c}{\textbf{\makecell{KITTI}}} &
\multicolumn{1}{c}{\textbf{Avg}} &
\multicolumn{1}{c}{\textbf{\#TTPs}} &
\multicolumn{1}{c}{\textbf{UCF101}} &
\multicolumn{1}{c}{\textbf{Kinetics}} &
\multicolumn{1}{c}{\textbf{HMDB}} &
\multicolumn{1}{c}{\textbf{Avg}} &
\multicolumn{1}{c}{\textbf{\#TTPs}} \\
\midrule

Full         & 89.92 & 97.41 & 85.87 & 81.65 & 88.12 & 73.62 & 77.93 & 84.93 & 85.83 & 92.30 & 55.23 & 65.79 & 74.99 & 86.65 \\

VeRA      & 91.53 & 99.19 & 91.04 & 86.45 & 92.97 & 74.25 & 77.92 & 87.62 & 0.240 & 92.28 & 57.21 & 66.77 & 72.09 & 0.242 \\
BitFit    & 90.58 & 98.93 & 91.06 & 86.73 & 93.07 & 74.39 & 79.26 & 87.72 & \second{0.101} & 92.38 & 57.31 & 66.87 & 72.19 & \second{0.104} \\
DiffFit      & 90.26 & 99.24 & \best{91.73 }& 86.79 & 92.53 & 74.46 & 81.01 & 88.00 & 0.148 & 92.80 & 57.73 & 67.29 & 72.61 & 0.150 \\
SHiRA       & 88.72 & 99.14 & 89.64 & 81.85 & 91.93 & 74.28 & 80.57 & 86.59 & 0.667 & 91.82 & 57.75 & 63.31 & 70.96 & 0.668 \\
LayerNorm   & 89.79 & 99.13 & 91.41 & 86.29 & 92.88 & 74.82 & 78.16 & 87.50 & 3.041 & 92.16 & 57.09 & 66.65 & 71.97 & 3.867 \\
MiSS         & 89.83 & 99.18 & 91.48 & 86.74 & 91.59 & 73.21 & 85.64 & 88.24 & 0.667 & 93.18 & 58.11 & 67.67 & 72.99 & 0.668 \\
Propulsion    & 91.55 & 99.25 & 91.44 & 86.23 & \second{94.77} & 72.36 & 80.27 & 87.96 & 0.165 & 93.47 & 57.40 & 66.96 & 72.61 & 0.166 \\
LoHA         & 92.13 & 99.28 & 91.07 & 85.84 & 93.38 & 73.25 & 80.83 & 87.97 & 1.667 & 93.63 & 57.56 & 67.12 & 72.77 & 1.669 \\
DoRA        & 91.86 & 99.27 & 91.08 & 85.88 & 91.42 & \best{75.28} & 80.46 & 87.89 & 0.834 & 92.84 & 57.77 & 67.33 & 72.65 & 0.836 \\
Bone       & 92.14 & 99.23 & 91.05 & 86.28 & 92.83 & 73.71 & \best{80.91} & 88.02 & 0.414 & 93.82 & 57.75 & 67.31 & 72.96 & 0.415 \\
RoCoFT     & 92.56 & 99.31 & 91.19 & \best{87.84} & 93.54 & 74.27 & 79.63 & 88.33 & 0.415 & 93.14 & \best{58.07} & \best{67.63} & 72.95 & 0.417 \\
HRA       & 92.16 & 99.32 & 91.36 & 86.74 & 93.82 & 74.87 & 78.18 & 88.06 & 0.491 & 92.72 & 57.65 & 67.21 & 72.53 & 0.493 \\
LoRA         & 92.03 & 99.18 & 90.92 & \second{87.73} & 92.65 & 74.23 & 80.42 & 88.08 & 0.833 & 93.88 & \second{57.81 }& 67.37 & 73.02 & 0.835 \\
Ada-LoRA     & 91.62 & 99.24 & 91.18 & 87.54 & 92.37 & 73.84 & 79.94 & 87.96 & 2.011 & \second{94.43} & 57.66 & 67.22 & 73.04 & 2.027 \\

\midrule
\rowcolor{oursbg}\textit{SliceFine-1R}  & 91.64 & 99.12 & 91.26 & 86.88 & 93.83 & 74.18 & 80.11 & 88.15 & \best{0.084} & 93.14 & 57.16 & 66.29 & 72.20 & \best{0.087} \\
\rowcolor{oursbg}\textit{SliceFine-1C}  & 91.78 & \second{99.72} & 91.12 & 86.63 & 94.24 & 74.10 & 79.73 & 88.19 & \best{0.084} & 93.35 & 57.72 & 66.31 & 72.46 & \best{0.087} \\
\rowcolor{oursbg}\textit{SliceFine-1RC} & 91.74 & 99.19 & 91.51 & 86.41 & 94.70 & 74.46 & 79.12 & 88.17 & \best{0.084} & 93.36 & 57.10 & 67.30 & 72.59 & \best{0.087} \\
\rowcolor{oursbg}\textit{SliceFine-5R}  & \best{92.75} & \best{99.73 }& \second{91.64} & 87.72 & 94.70 & \best{75.28} & 80.11 & \best{88.85} & 0.415 & 94.54 & 57.16 & \second{67.49 } & 73.06 & 0.417 \\
\rowcolor{oursbg}\textit{SliceFine-5C} & \second{92.64} & 99.64 & 91.15 & 87.78 & 94.42 & 75.14 & \second{80.39} & 88.74 & 0.415 & \best{94.53} & 57.68 & 67.44 & \best{73.22 }& 0.417 \\
\rowcolor{oursbg}\textit{SliceFine-5RC}  & 92.28 & 99.53 & 91.71 & 87.58 & \best{94.89} & \second{75.20} & 80.89 & \second{88.83} & 0.415 & 94.17 & 57.69 & 67.40 & \second{73.09} & 0.417 \\

\bottomrule
\end{tabular}
\end{adjustbox}
\vspace{-0.2cm}
\caption{Performance comparison on VTAB-1K image and video benchmarks. SliceFine achieves competitive or superior performance to SOTA PEFT baselines with significantly fewer parameters. 
}
\vspace{-0.4cm}
\label{tab:vtab_video}
\end{table*}

\begin{figure*}[!htb]

 \begin{center}

      \includegraphics[width=0.9941\linewidth]{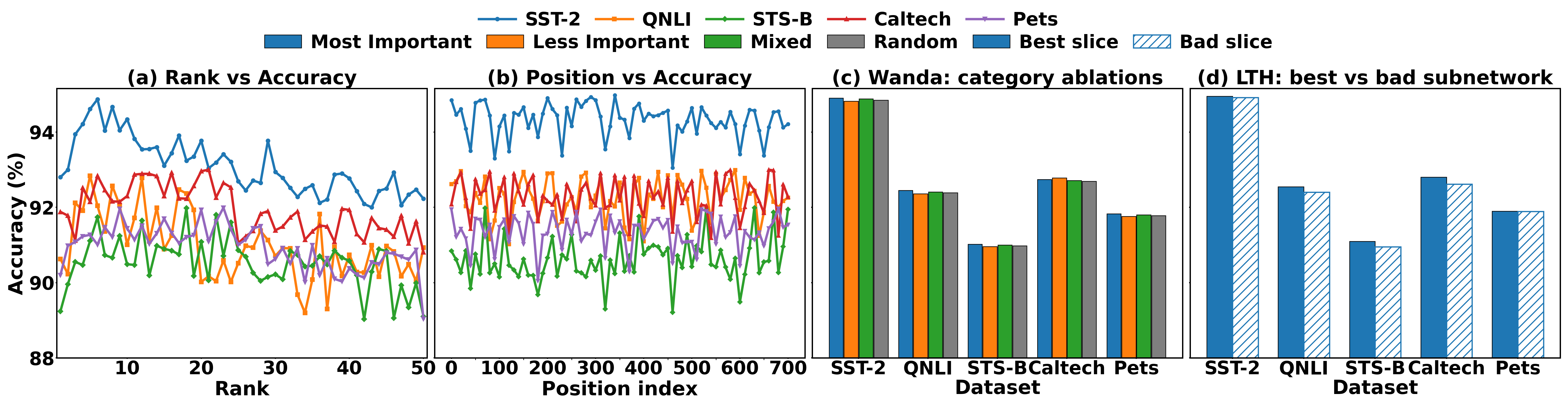}
   \end{center}
\vspace{-0.451cm}
    \caption{
    Empirical evidence for the robustness of slice selection strategies across tasks.  
    (a) \textbf{Rank vs.\ Accuracy:} Increasing the slice rank improves accuracy up to a point, after which validation accuracy declines, indicating gradual overfitting.
    (b) \textbf{Position vs.\ Accuracy:} accuracy remains stable across slice positions, within $\pm 1\%$ of the anchor accuracy.  
    (c) \textbf{Wanda category ablations:} accuracy is insensitive to whether slices are chosen from most important, less important, mixed, or random weights.  
    (d) \textbf{LTH comparison:} even ``bad'' slices perform comparably to the ``best'' slices, supporting the \emph{winner-slice property}—pretrained networks contain many capable subnetworks.}
    \label{fig:winner_slice_ablation}
\end{figure*}
\textbf{Is any slice a winner?} 
Empirically, we investigate the \emph{Universal Winning–Slice Hypothesis} through three complementary experiments, all of which suggest that slice selection is largely insensitive—supporting the \emph{winner-slice property}. As shown in Figure~\ref{fig:winner_slice_ablation}(b), performance remains stable across slice positions. 
Using a fixed slice rank of $5$, we evaluate accuracy when training slices at different positions of the weight matrix. 
Accuracy remains within $\pm 1\%$ of the anchor across all positions, indicating that row and column slices contribute comparably to the task-relevant subspace.

In Figure~\ref{fig:winner_slice_ablation}(c), we adopt the Wanda pruning heuristic \citep{sunsimple} to rank weights by importance. 
Given a weight matrix $W \in \mathbb{R}^{d_\ell \times  d_{\ell-1}}$ and activations $X \in \mathbb{R}^{s \times  d_{\ell-1}}$ from a sequence of length $s$, Wanda defines the importance of entry $(i,j)$ as $
S_{ij} = |W_{ij}| \cdot \|X_{\cdot j}\|_2,$ where $\|X_{\cdot j}\|_2$ is the $\ell_2$ norm of the $j$-th input feature across the batch. 
To score a slice, we aggregate over its entries:$
S_{\text{slice}} = \sum_{(i,j) \in \text{slice}} S_{ij}.$ We then select slices from the most important, least important, mixed, or random categories. 
Figure~\ref{fig:winner_slice_ablation}(c) shows that all categories yield nearly identical accuracy, confirming that slice winners emerge regardless of weight importance.

Finally, Figure~\ref{fig:winner_slice_ablation}(d) compares “good” and “bad” slices using the Lottery Ticket Hypothesis framework following \cite{frankle2019lottery}. 
Standard LTH seeks a sparse binary mask $M \in \{0,1\}^d$ such that the pruned subnetwork $\theta \odot M$ matches the accuracy of the full model: $
\mathcal{L}(f_{\theta \odot M}(x),y) \approx \mathcal{L}(f_{\theta}(x),y).$ We extract both “winning” and “losing” subnetworks and use their masks to define slices. 
Surprisingly, even slices derived from “bad” subnetworks perform comparably to those from “good” ones, further reinforcing that pretrained networks contain many capable subnetworks and that every slice can be a winner.


\begin{figure*}[!htb]

 \begin{center}

      \includegraphics[width=0.9941\linewidth]{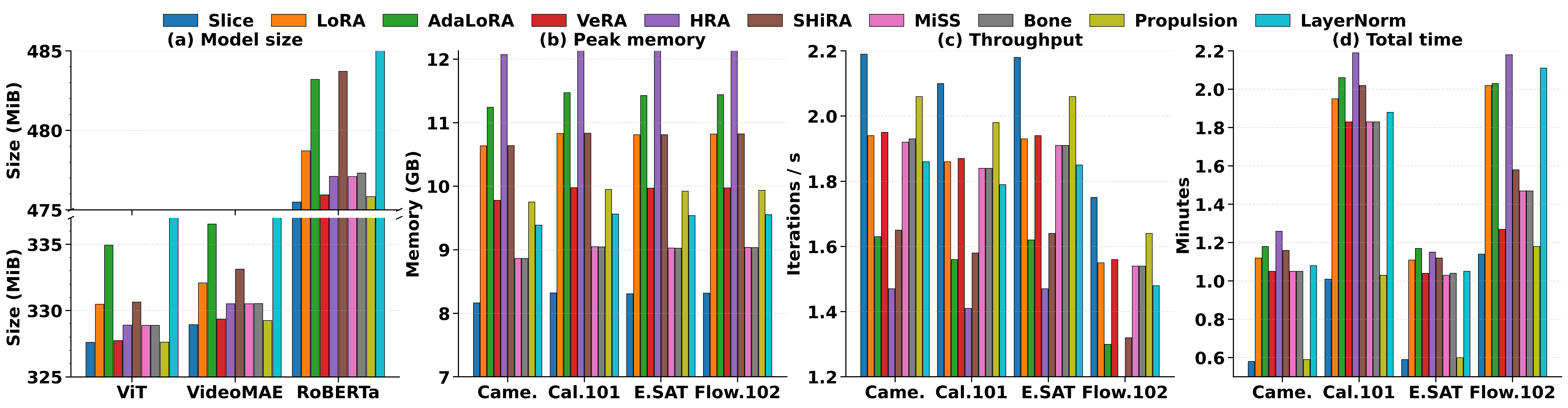}
   \end{center}
\vspace{-0.3cm}
    \caption{
Comparison of PEFT methods on (a) PEFT model size,  (b) peak memory, (c) throughput, and (d) total training time across 
\textbf{ViT}, \textbf{VideoMAE}, \textbf{RoBERTa}, and multiple datasets.  
}\label{fig:efficiency}
\vspace{-0.2cm}

\end{figure*}

\begin{table}[t]
    \centering
    \setlength{\tabcolsep}{2.0pt}
    \scalebox{0.80}{
    \begin{tabular}{l|llll}
    \hline
        \textbf{Methods} & \textbf{Space}  & \textbf{Time} & \textbf{\#TTPs} & \textbf{\#APs}  \\
        \hline
        FT & $O(d_\ell \times d_{\ell-1})$ & $O(d_\ell \times d_{\ell-1})$ &  $d_\ell \cdot d_{\ell-1}$ & $0$\\
        $(\text{IA})^3$ \citep{liu2022few} & $O(d_{k}+d_{v}+d_{ff})$ & $O(d_{k}+d_{v}+d_{ff})$ & $d_{k}+d_{v}+d_{ff}$ & $d_{k}+d_{v}+d_{ff}$\\
        Prompt \citep{lester2021power}  & \(O(d_\ell \times l_{p})\) & \(O(d_\ell \times l_{p})\) & $l_p \cdot d_\ell$& $l_p \cdot d_\ell$\\
        Prefix  \citep{li2021prefix} & \(O(L \times d_\ell \times l_{p})\) & \(O(L \times d_\ell \times l_{p})\) & $L \cdot l_p \cdot d_\ell$  & $L \cdot l_p \cdot d_\ell$\\ 
        LoRA \citep{hu2021lora}& $O((d_\ell + d_{\ell-1}) \times r)$ & $O((d_\ell + d_{\ell-1}) \times r)$ & $2 \cdot d_\ell \cdot r$ & $2 \cdot d_\ell \cdot r$\\
        LoRA-FA \citep{zhang2023lora} & $O((d_\ell + d_{\ell-1}) \times r)$ & $O((d_\ell + d_{\ell-1}) \times r)$ & $d  \cdot r$ & $2  \cdot d  \cdot r$\\
        AdaLoRA \citep{zhang2023adaptive}& $O((d_\ell + d_{\ell-1}+r) \times r)$ & $O((d_\ell + d_{\ell-1}+r) \times r)$ & $2  \cdot d  \cdot r+r^2$ & $2  \cdot d  \cdot r+r^2$\\
        LoHA  \citep{hyeon2021fedpara} & $O(2r  \times (d_\ell + d_{\ell-1}))$ & $O(2r  \times (d_\ell + d_{\ell-1}))$ & $4  \cdot d_\ell  \cdot r$ & $4  \cdot d_\ell  \cdot r$\\
        Propulsion \citep{kowsher2024propulsion}& $O(d)$ & $O(d)$ & $d$ & $d$\\
        \hline
        \emph{SliceFine} (row) & \(O(d_\ell \times r)\)  & \(O(d_\ell \times r)\) & $r  \cdot d_\ell$ & $0$ \\
        \emph{SliceFine} (column) & \(O(d_{\ell-1} \times r)\)  & \(O(d_{\ell-1} \times r)\) & $r  \cdot d_{\ell-1}$ & $0$ \\

        \hline       
    \end{tabular}}
    \vspace{-0.2cm}
    \caption{
    Comparison of space and time complexity, total trainable parameters (\#TTPs), 
    and additional parameters (\#APs) introduced by different PEFT methods for a single 
    layer $W^{(\ell)} \in \mathbb{R}^{d_\ell \times d_{\ell-1}}$. 
    We denote $d_k, d_v,$ and $d_{ff}$ as the dimensions of the three learned vectors in IA$^{3}$, 
    and $l_p$ as the prompt length in prompt tuning and prefix tuning. 
    For LoRA-type methods, $r$ denotes the rank. 
    SliceFine achieves $O(d_\ell \times r)$ or $O(d_{\ell-1} \times r)$ complexity with no additional parameters, 
    in contrast to other methods that incur higher asymptotic costs or parameter overhead.
    }

    \label{table:complexity}
    \vspace{-0.5cm}
\end{table}
\textbf{Efficiency Analysis.}\label{ab:efficiency}
In this section, we discuss the efficiency of \emph{SliceFine}. All methods are trained under identical conditions: 10 epochs, batch size 64, fixed sequence lengths, BF16 precision, and the same learning schedules. Reported runtime is the wall-clock time for a full training run, averaged over three seeds, and throughput is measured in iterations per second on identical hardware (NVIDIA A100 80GB). Table~\ref{table:complexity} summarizes asymptotic costs.  Slice training scales as $O(d_\ell \times r)$ (row) or $O(d_{\ell-1} \times r)$ (column) and introduces no additional parameters (\#APs = 0). 
In contrast, LoRA-style and adapter-style baselines incur $2r(d_\ell+d_{\ell-1})$ or higher parameter overhead, significantly increasing both space and time complexity. 

Figure~\ref{fig:efficiency} presents empirical results across ViT, VideoMAE, and RoBERTa backbones on four VTAB-1K datasets. 
Figure~\ref{fig:efficiency}(a) shows that slices yield compact model sizes, reducing storage by $3$--$5\%$ compared to LoRA and AdaLoRA. 
Figure~\ref{fig:efficiency}(b) demonstrates reduced peak memory usage: slices consistently use $2$--$4$GB less memory ($\approx 18\%$ savings on average) compared to HRA and AdaLoRA, enabling training under stricter GPU memory budgets. 
Figure~\ref{fig:efficiency}(c) reports throughput, where slices achieve the fastest iteration rates, averaging $2.05$ iterations/s versus $1.78$ for LoRA and $1.62$ for HRA, representing a $15$--$25\%$ speedup. 
Finally, Figure~\ref{fig:efficiency}(d) shows total wall-clock training time: slice training completes $10$ epochs in $1.05$ minutes on average, compared to $1.83$ minutes for HRA and $2.12$ minutes for LayerNorm tuning, corresponding to a $42$--$50\%$ reduction.

Overall: (i) \emph{SliceFine} supports the UWSH means any slice with sufficient rank acts as a local winner, so training slice suffices for PEFT without adding adapters; 
(ii) empirically, a rank–1 slice already delivers competitive performance; and 
(iii) \emph{SliceFine} achieves comparable or superior accuracy to the baseline while requiring significantly fewer trainable parameters, resulting in faster training and lower memory consumption. More detailed results are provided in Appendix~\ref{app:details_resutls}. In addition, we present detailed ablation studies in Appendix~\ref{ablation}, including the effect of slice rank (\ref{ab:rank_winner}) and the switching interval (\ref{ab:interval}).

\section{Related Work}
\textbf{Lottery Ticket Hypothesis.} LTH \citep{frankle2019lottery} says a large dense network contains sparse subnetworks (“winning tickets”) that, after pruning and retraining, can match the full model. Later work tested this on modern architectures \citep{burkholz2022strong} and surveyed the evidence \citep{liu2024survey}. Pruning methods to find these subnetworks were studied in \citet{zhou2019pruning}. The \emph{strong} LTH \citep{ramanujan2020hidden} shows that some untrained sparse subnetworks already perform well, with further analysis and CNN results in \citet{pensia2020optimal,dacunha2022strong}.  In pretrained models, similar findings appear: BERT has winning-ticket subnetworks \citep{chen2020lotterybert,yu2021bertlottery}; pruning before fine-tuning can find them \citep{wu2022pruning}; and ImageNet-pretrained CNNs also contain sparse subnetworks that keep downstream accuracy \citep{chen2021lottery,iofinova2022well}.

\textbf{PEFT.} PEFT adapts large models by updating a small subset of parameters while freezing most of the backbone. Prior work spans prompt-based approaches that optimize soft prompts or prefixes \citep{lester2021power,li2021prefix}, adapter layers inserted into residual blocks \citep{houlsby2019parameter}, and weight-space updates such as LoRA \citep{hu2021lora}, BitFit and LayerNorm tuning \citep{zaken2021bitfit,zhao2023tuning}, and multiplicative gating like IA$^3$ \citep{liu2022few}. Numerous refinements improve placement, rank, and overhead: AdaLoRA adapts rank during training \citep{zhang2023adaptive}, LoRA-FA and VeRA reduce parameter and optimizer cost \citep{zhang2023lora,kopiczko2023vera}, and other variants explore alternative factorizations or injection points \citep{hyeon2021fedpara,edalati2022krona,balazy2024lora}. Beyond LoRA, DoRA stabilizes updates via weight decomposition \citep{liu2024dora}, READ augments with residual adapters \citep{wang2023read}, LoRA+ and KronA target better efficiency \citep{hayou2024lora,edalati2022krona}, and Q-PEFT conditions adaptation on task queries \citep{peng2024qpeft}. Orthogonal lines combine PEFT with sparsity: structured and unstructured pruning (e.g., SparseGPT, Wanda, LLM-Pruner) remove redundancy with limited loss \citep{frantar2023sparsegpt,sun2023simple,ma2023llmpruner}, and domain adaptation strategies help avoid overfitting \citep{gururangan2020dont}. Closer to subnetwork training, RoCoFT trains a fixed subset of rows/columns and reports strong results, but the chosen subnetwork remains static over training \citep{kowsher2024rocoft}.  

In contrast to methods that add auxiliary parameters (adapters, prompts, low-rank factors) or remove weights via pruning, \emph{SliceFine} updates a set of slices across layers of the original network—without adding parameters—and can move the active slice during training, yielding block–coordinate adaptation motivated by spectral balance and high task energy in the pretrained backbone.

\section{Conclusion}
This work proposes the \emph{Universal Winning–Slice Hypothesis (UWSH)}: in pretrained networks satisfying spectral balance and high task energy, every sufficiently wide slice of a weight matrix acts as a local winning ticket, and a small set of slices across layers forms a global winning ticket. Building on this view, \emph{SliceFine} fine-tunes only a small, moving slice per layer—adding no new parameters—while matching strong PEFT baselines in accuracy and improving efficiency in memory, speed, and model size. Experiments across text, image, and video tasks show competitive performance compared to SOTA PEFT methods.

\bibliography{iclr2026_conference}
\bibliographystyle{iclr2026_conference}
\clearpage

\appendix
\small
\tableofcontents
\normalsize
\section*{The Use of Large Language Models (LLMs)}
We used a large language model (GPT) solely for minor writing assistance, such as grammar checking, language polishing, and improving readability. No content generation, ideation, experimental design, or analysis was performed by the LLM. All research contributions, technical content, and results presented in this paper are entirely the work of the authors.

\section{Proof of Theorem \textbf{Universal Winning Ticket}} \label{proof:universal_winning_ticket}
\begin{proof}[\textbf{Proof of Theorem~\ref{thm:universal_winning_ticket}}]

Let $\mathcal{L}(\theta)=\tfrac{1}{n}\sum_{i=1}^n \ell(f_\theta(x_i),y_i)$ denote the empirical downstream loss on dataset $D=\{(x_i,y_i)\}_{i=1}^n$, with pretrained initialization $\theta_0$. For any slice $M\subseteq W^{(\ell)}$, define the restricted gradient
\begin{equation}
g_M \;=\; \nabla_{M}\mathcal{L}(\theta) 
= \tfrac{\partial \mathcal{L}(\theta)}{\partial (M\odot W^{(\ell)})}
= \tfrac{1}{n}\sum_{i=1}^n J_M(x_i)^\top \nabla_f \ell(f_\theta(x_i),y_i),
\end{equation}
where $J_M(x)=\tfrac{\partial f_\theta(x)}{\partial (M\odot W^{(\ell)})}\in\mathbb{R}^{k\times |M|}$ is the slice Jacobian and $\nabla_f \ell(\cdot,y_i)\in\mathbb{R}^k$ is the gradient with respect to logits. By assumption there exists at least one $i$ with $\nabla_f \ell(f_{\theta_0}(x_i),y_i)\neq 0$, and by the projection condition each $J_M(x)$ has nontrivial overlap with the $k_{\mathrm{task}}$-dimensional task subspace. Hence $g_M\neq 0$ for all nonempty $M$.

Assume slice-restricted Lipschitz smoothness: there exists $L_M>0$ such that
\begin{equation}
\mathcal{L}(\theta_0+M\odot\Delta)
\le \mathcal{L}(\theta_0) + g_M^\top \mathrm{vec}(\Delta) + \tfrac{L_M}{2}\|\mathrm{vec}(\Delta)\|_2^2
\end{equation}
for all $\Delta$ with $\|\Delta\|_2\le \rho$. Choosing $\Delta=-\alpha g_M$ yields
\begin{equation}
\mathcal{L}(\theta_0 - \alpha M\odot g_M)
\le \mathcal{L}(\theta_0) - \alpha\|g_M\|_2^2 + \tfrac{L_M}{2}\alpha^2\|g_M\|_2^2,
\end{equation}
which is minimized at $\alpha^\star=1/L_M$. Thus for any $\alpha\in(0,2/L_M)$ we obtain strict decrease
\begin{equation}
\mathcal{L}(\theta_0 - \alpha M\odot g_M) \le \mathcal{L}(\theta_0) - \tfrac{\|g_M\|_2^2}{2L_M} < \mathcal{L}(\theta_0).
\end{equation}
Therefore every slice admits a loss-decreasing update.

Next, let $\{M_1,\dots,M_m\}$ be slices chosen such that their Jacobians span the task subspace:
\begin{equation}
\dim \mathrm{span}\{J_{M_1}(x),\dots,J_{M_m}(x)\}=k_{\mathrm{task}}.
\end{equation}
By the spectral balance assumption such a finite $m\le k_{\mathrm{task}}$ exists. Define the combined Jacobian
\begin{equation}
J_{\mathrm{combined}}(x)=[J_{M_1}(x)\;|\;\cdots\;|\;J_{M_m}(x)].
\end{equation}
In the linearized (NTK) regime,
\begin{equation}
f_{\theta_0+\Delta\theta}(x)\approx f_{\theta_0}(x)+J(x)\Delta\theta,
\end{equation}
and restriction to updates supported on $\{M_i\}$ suffices to realize any perturbation in the $k_{\mathrm{task}}$ subspace. Since losses such as cross-entropy are convex in the logits, it follows that for any $\epsilon>0$ there exist slice updates $\{U_i\}$ such that
\begin{equation}
\mathcal{L}\!\Big(\theta_0+\sum_{i=1}^m M_i\odot U_i\Big) \le \epsilon.
\end{equation}
Hence the union of finitely many slices constitutes a global winning ticket.
\end{proof}

\begin{figure*}[t]
\centering

\begin{subfigure}{\linewidth}
  \centering
  \includegraphics[width=\linewidth]{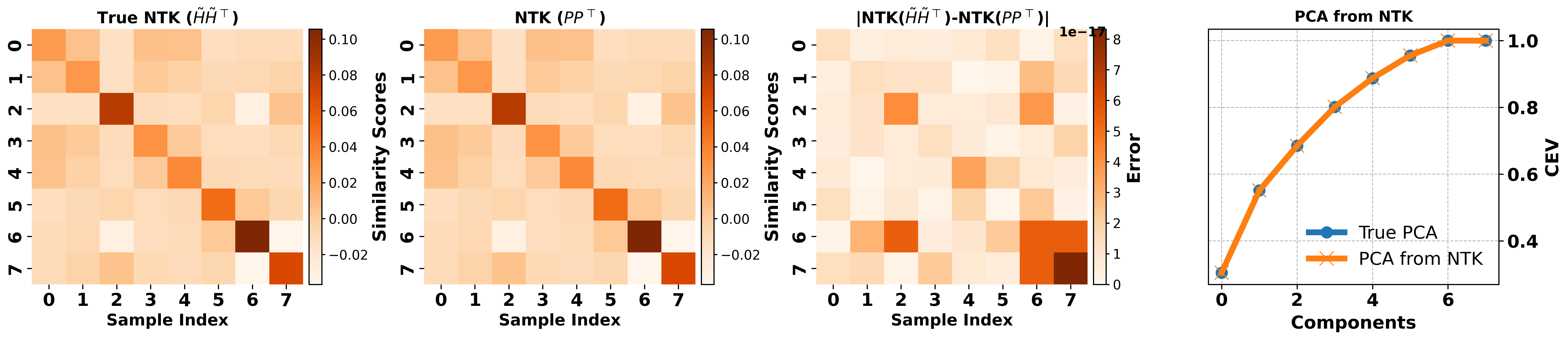}
  \caption{Layer 1}
  \label{fig:pca_ntk_l1}
\end{subfigure}

\medskip

\begin{subfigure}{\linewidth}
  \centering
  \includegraphics[width=\linewidth]{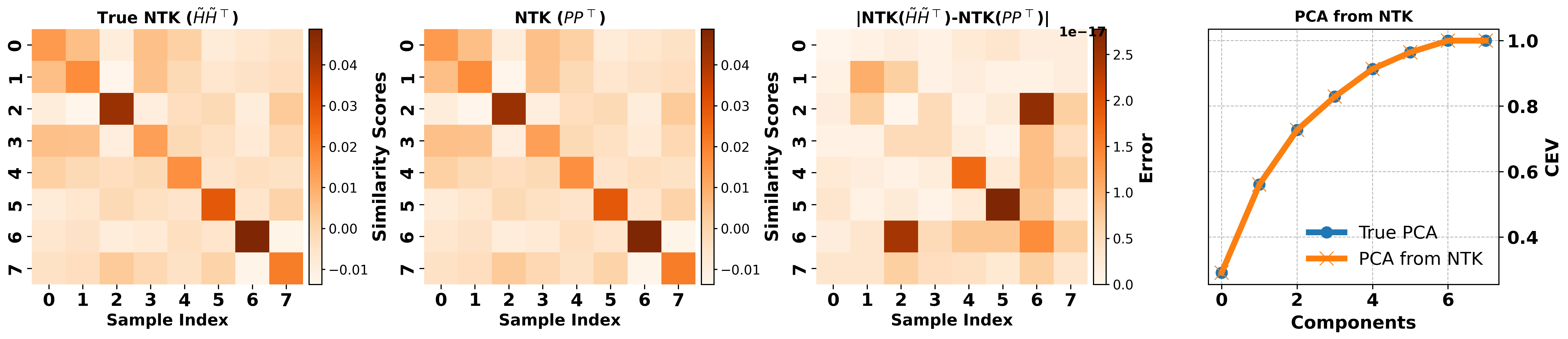}
  \caption{Layer 5}
  \label{fig:pca_ntk_l5}
\end{subfigure}

\medskip

\begin{subfigure}{\linewidth}
  \centering
  \includegraphics[width=\linewidth]{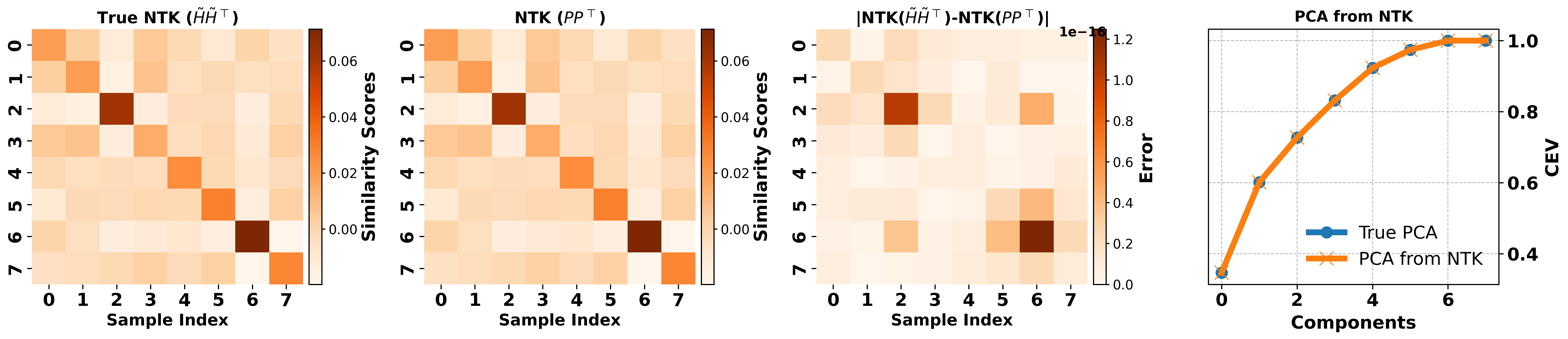}
  \caption{Layer 11}
  \label{fig:pca_ntk_l11}
\end{subfigure}

\caption{\textbf{PCA/NTK agreement across layers.}
Each row shows (i) the centered representation kernel $K=\tilde H\tilde H^\top$,
(ii) the reconstruction $PP^\top$ with $P=\tilde H V$, (iii) the absolute
difference $\lvert K-PP^\top\rvert$, and (iv) cumulative explained variance
from PCA on $\Sigma$ versus eigenvalues from $K$. Results are shown for
\texttt{RoBERTa-base} layers 1, 5, and 11. Spectra and CEV curves overlap,
confirming Lemma~\ref{lemma:pca_ntk} across depth.}
\label{fig:pca_ntk_layers}
\end{figure*}
\section{PCA Decomposition of the Representation/Linearized NTK Kernel)} \label{proof:lemma_pca_ntk}

\begin{proof} [\textbf{Proof of Lemma~\ref{lemma:pca_ntk}}] 
Let $\tilde H\in\mathbb{R}^{n\times d}$ for the centered feature matrix. Since centering removes the mean in each feature coordinate, we have $\mathrm{rank}(\tilde H)=:r\le \min\{n-1,d\}$. Consider the thin singular value decomposition (SVD)
\[
\tilde H \;=\; U S V^\top,
\]
where $U\in\mathbb{R}^{n\times r}$ and $V\in\mathbb{R}^{d\times r}$ have orthonormal columns ($U^\top U = V^\top V = I_r$), and $S=\mathrm{diag}(s_1,\dots,s_r)\in\mathbb{R}^{r\times r}$ with singular values $s_1\ge\cdots\ge s_r>0$.

The empirical covariance of the centered features is
\[
\Sigma \;=\; \frac{1}{n-1}\tilde H^\top \tilde H
\;=\; \frac{1}{n-1} \, V S^2 V^\top.
\]
Thus $V$ diagonalizes $\Sigma$ and its positive eigenvalues are
\[
\lambda_i \;=\; \frac{s_i^2}{n-1}, \qquad i=1,\dots,r,
\]
so that $\Lambda=\mathrm{diag}(\lambda_1,\dots,\lambda_r)=\frac{1}{n-1}S^2$ gives the eigendecomposition $\Sigma=V\Lambda V^\top$ stated in the lemma.

Define $P := \tilde H V$. Using the SVD, $P = (USV^\top)V = US$, hence $P\in\mathbb{R}^{n\times r}$ has orthogonal columns and $P P^\top = (US)(US)^\top = U S^2 U^\top$. On the other hand, the centered feature Gram matrix (a.k.a.\ linearized representation kernel)
\[
K \;=\; \tilde H \tilde H^\top \;=\; (USV^\top)(VSU^\top) \;=\; U S^2 U^\top.
\]
Therefore $K = P P^\top$ exactly, which is the desired decomposition.

The eigen-decomposition of $K$ follows immediately: because $K=U S^2 U^\top$ with $U$ orthonormal, the nonzero eigenvalues of $K$ are the diagonal entries of $S^2$, namely $s_i^2$, $i=1,\dots,r$. Using the relation $s_i^2 = (n-1)\lambda_i$ derived above, the nonzero spectrum of $K$ is $\mu_i := (n-1)\lambda_i$, $i=1,\dots,r$. Equivalently, $K$ and $\Sigma$ share the same nonzero eigenvectors in their respective spaces ($U$ in $\mathbb{R}^n$ and $V$ in $\mathbb{R}^d$) and have spectra that differ by the scalar factor $(n-1)$. This scaling also implies equality of explained-variance ratios. Indeed,
\[
\frac{\sum_{i=1}^k \lambda_i}{\sum_{j=1}^r \lambda_j}
\;=\;
\frac{\sum_{i=1}^k \tfrac{1}{n-1} s_i^2}{\sum_{j=1}^r \tfrac{1}{n-1} s_j^2}
\;=\;
\frac{\sum_{i=1}^k s_i^2}{\sum_{j=1}^r s_j^2}
\;=\;
\frac{\sum_{i=1}^k \mu_i}{\sum_{j=1}^r \mu_j},
\]
since $\mu_i=s_i^2$. Hence PCA explained-variance ratios computed from the feature covariance $\Sigma$ coincide exactly with those computed from the Gram kernel $K$.

For completeness, one can also see the spectral correspondence without SVD, using the algebraic fact that $AB$ and $BA$ have the same nonzero eigenvalues (including multiplicities). Taking $A=\tilde H$ and $B=\tfrac{1}{n-1}\tilde H^\top$, the products $AB=\tfrac{1}{n-1}\tilde H \tilde H^\top$ and $BA=\tfrac{1}{n-1}\tilde H^\top \tilde H$ share their nonzero spectra; this gives again that the positive eigenvalues of $K$ equal $(n-1)$ times those of $\Sigma$ and yields the same variance-ratio identity.

Finally, in the context of linearized training around $\theta_0$, when the readout is linear in the representation $\tilde H$ or the network is considered in the (first-order) tangent regime, the kernel governing function-space updates reduces to the representation kernel $K=\tilde H \tilde H^\top$; the lemma thus characterizes its spectrum through the PCA of $\tilde H$, completing the proof.
\end{proof}

Empirically, Figure~\ref{fig:pca_ntk_layers} validates the PCA/NTK correspondence on \texttt{RoBERTa-base} using $n{=}8$ examples at layers 1, 5, and 11. For each layer we compute the centered representation kernel $K=\tilde H\tilde H^\top$ (``True NTK'' panels), the PCA basis $V$ from $\Sigma=\frac{1}{n-1}\tilde H^\top\tilde H$, and the reconstruction $P P^\top$ with $P=\tilde H V$ (middle-left panels). The absolute difference maps $\lvert K-PP^\top\rvert$ (middle-right) are uniformly small, confirming the exact decomposition $K=PP^\top$ up to numerical error. The rightmost plots compare cumulative explained variance derived from the eigenvalues of $\Sigma$ (``True PCA'') versus those induced by $K$ (``PCA from NTK''); the curves overlap across depth, showing that PCA variance ratios in feature space match those of the representation kernel. Together, these results support Lemma~\ref{lemma:pca_ntk} empirically across multiple layers.

\begin{remark}[Relation to the NTK]
The identity $K=\tilde H\tilde H^\top=PP^\top$ describes the (centered) representation kernel. 
It coincides with the empirical NTK $K_{\theta_0}(x,x')=\langle\nabla_\theta f_{\theta_0}(x),\nabla_\theta f_{\theta_0}(x')\rangle$
when the parameter subset with respect to which the NTK is computed corresponds to a linear readout over fixed features (e.g., last-layer weights with upstream layers frozen). 
For general multi-layer adaptations, the full NTK requires Jacobians w.r.t.\ all trainable parameters and does not reduce to a pure feature Gram matrix.
\end{remark}

\section{Rank, NTK, and PCA} \label{app:rank_ntk_pca}
This section quantifies how \emph{domain familiarity} affects the slice rank needed for good adaptation. The main idea is simple: when the frozen backbone already organizes features along task-relevant directions, a small slice rank is enough; when the task is unfamiliar, a larger rank is needed. We make this precise using PCA on hidden features and the linearized NTK view: a steep PCA spectrum (high cumulative explained variance, CEV) indicates that the task subspace is low-dimensional and small slices should suffice; a flat spectrum indicates the opposite.

At the beginning, we warm-start \texttt{RoBERTa-base} on MRPC for 100 steps (batch size 64, BF16) to bias its features toward paraphrase semantics without fully adapting the model. We then fine-tune with SliceFine at ranks \(r\in\{1,3,5,7\}\) on six GLUE tasks, and report three diagnostics that map directly to the analysis.

\subsection{PCA cumulative explained variance (CEV) Analysis}
For layer \(\ell\), let the centered features be \(\tilde H_\ell\in\mathbb{R}^{n\times d_\ell}\), the covariance \(\Sigma_\ell=\frac{1}{n-1}\tilde H_\ell^\top \tilde H_\ell\), and eigenvalues \(\lambda_1\ge\cdots\ge\lambda_{r_\ell}>0\). The CEV after \(k\) components is
\[
\mathrm{CEV}_\ell(k)=\frac{\sum_{i=1}^{k}\lambda_i}{\sum_{j=1}^{r_\ell}\lambda_j},
\]
which summarizes how concentrated the feature spectrum is.

\begin{figure*}[!htb]
 \begin{center}
      \includegraphics[width=0.9999941\linewidth]{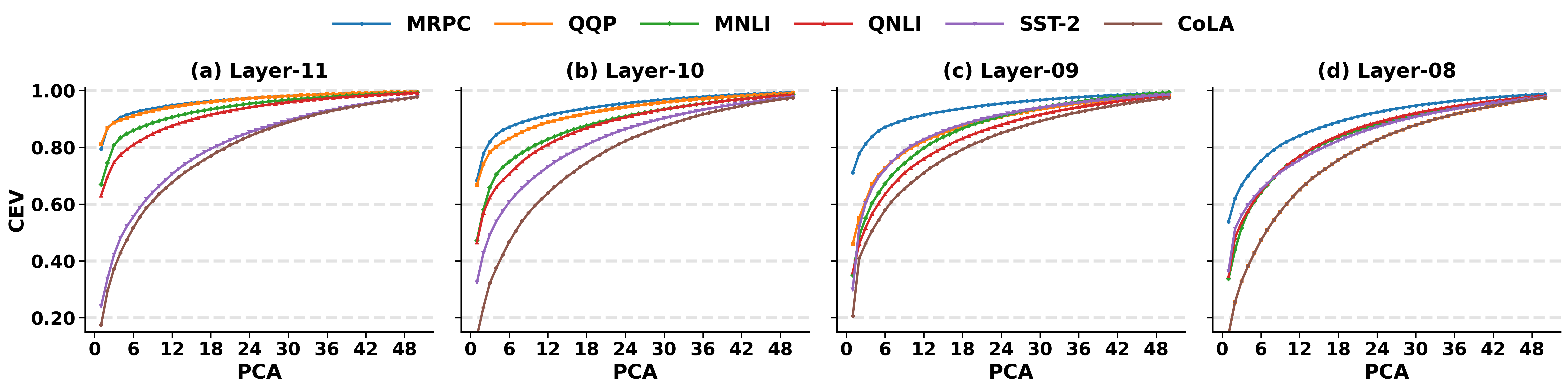}
 \end{center}
\caption{PCA cumulative explained variance (CEV) for eight GLUE tasks. Tasks such as MRPC and QQP reach over 85\% variance with few components, indicating a compact task subspace after MRPC warm-start. Tasks such as SST-2 and CoLA accumulate variance more slowly, suggesting higher intrinsic dimensionality and a need for larger slice ranks.}
\label{fig:CEV}
\end{figure*}

Interpretation of Figure~\ref{fig:CEV}. MRPC and QQP cross 85\% variance with only a few principal components, consistent with a low-dimensional task subspace when the backbone is biased toward paraphrase semantics (As warm-start with MRPC dataset). In contrast, SST-2 and CoLA require more components, pointing to higher intrinsic dimensionality. This matches the rank rule-of-thumb: a steep spectrum suggests that a small slice rank is enough; a flatter spectrum suggests that a larger rank is needed.

\subsection{Prediction shift via Kullback–Leibler divergence}
Let \(\theta_b\) be the warm-started backbone and \(\theta_r\) the model after SliceFine with rank \(r\). For a \(k\)-class problem with logits \(z_\theta(x)\in\mathbb{R}^k\) and probabilities \(p_\theta(y\mid x)=\mathrm{softmax}(z_\theta(x))\), define the dataset-averaged divergence
\[
\mathrm{KL}_{\mathcal{D}}(\theta_r\,\|\,\theta_b)
=\frac{1}{n}\sum_{i=1}^n\sum_{c=1}^k p_{\theta_r}(c\mid x_i)\,
\log\!\frac{p_{\theta_r}(c\mid x_i)}{p_{\theta_b}(c\mid x_i)}.
\]
Small KL means the fine-tuned model stays close to the backbone’s predictions (a “lazy” update); large KL means stronger shifts.

\begin{figure*}[!htb]
 \begin{center}
      \includegraphics[width=0.9999941\linewidth]{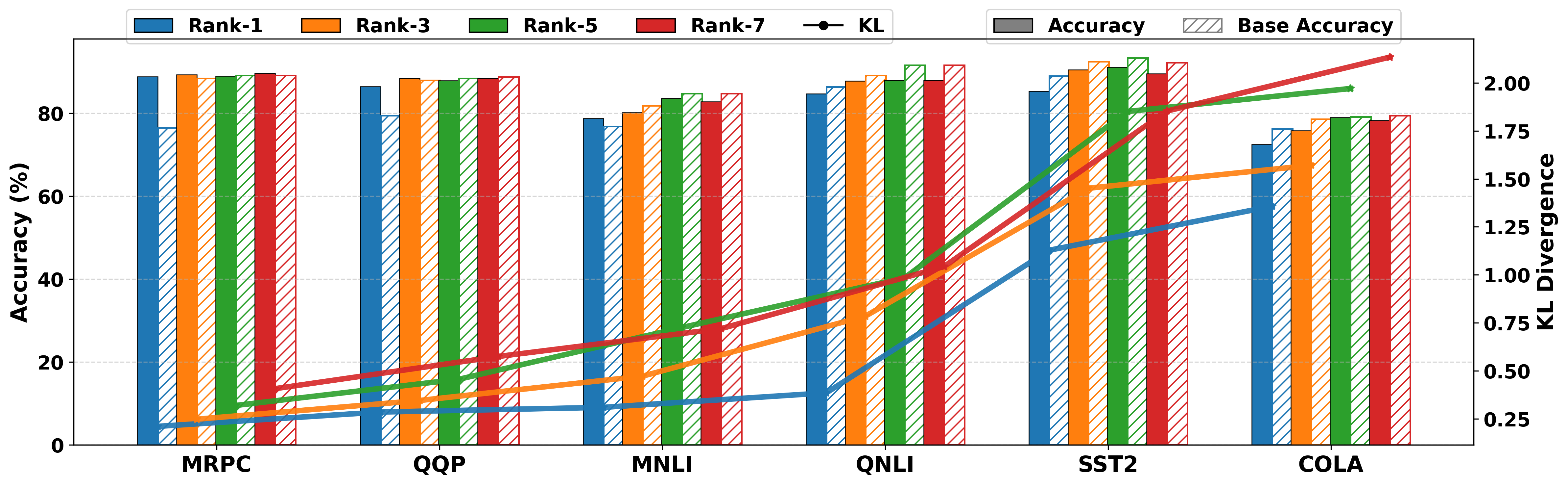}
 \end{center}
\caption{Accuracy (bars), base accuracy (hatched bars), and KL divergence to the warm-started backbone (lines) across six GLUE tasks (MRPC, QQP, MNLI, QNLI, SST-2, CoLA) for different slice ranks. Tasks close to MRPC (MRPC, QQP) saturate at very low rank and show small KL; tasks with flatter CEV (SST-2, CoLA) benefit from higher ranks and show larger KL.}
\label{fig:rank_slice_kl}
\end{figure*}

Interpretation of Figure~\ref{fig:rank_slice_kl}. MRPC and QQP achieve high accuracy with \(r=1\) and display low KL, consistent with high task energy already present in the backbone. SST-2 and CoLA gain from larger ranks and show higher KL, indicating that more task-relevant directions must be engaged to move predictions away from the warm-start.

\subsection{Layer-wise representation change via centered kernel alignment (CKA)}
For layer \(\ell\), let \(H_b^{(\ell)},H_r^{(\ell)}\in\mathbb{R}^{n\times d_\ell}\) be hidden states from \(\theta_b\) and \(\theta_r\); center by \(\tilde H:=H-\frac{1}{n}\mathbf{1}\mathbf{1}^\top H\). Linear CKA is
\[
\mathrm{CKA}\!\left(H_b^{(\ell)},H_r^{(\ell)}\right)
=\frac{\left\|\tilde H_b^{(\ell)\top}\tilde H_r^{(\ell)}\right\|_F^{2}}
{\left\|\tilde H_b^{(\ell)\top}\tilde H_b^{(\ell)}\right\|_F\,
 \left\|\tilde H_r^{(\ell)\top}\tilde H_r^{(\ell)}\right\|_F}.
\]
Higher CKA means the representations stay closer to the backbone.

\begin{figure*}[!htb]
 \begin{center}
      \includegraphics[width=0.9999941\linewidth]{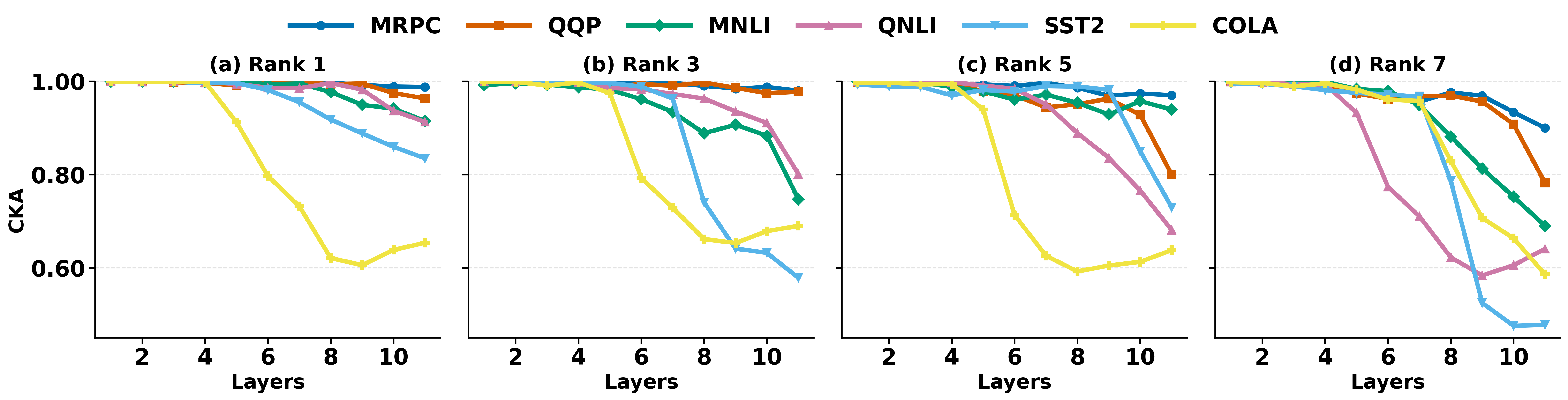}
 \end{center}
\caption{Layer-wise CKA between pre- and post-fine-tuning representations for ranks 1, 3, 5, and 7 across MRPC, QQP, MNLI, QNLI, SST-2, and CoLA. Larger CKA indicates smaller feature drift. MRPC/QQP remain close to the backbone even at higher ranks; SST-2/CoLA show larger drift in upper layers as rank increases.}
\label{fig:CKA}
\end{figure*}

Interpretation of Figure~\ref{fig:CKA}. For MRPC and QQP, CKA remains high in final layer even at larger ranks, consistent with a lazy regime: only modest adjustments are needed. For SST-2 and CoLA, CKA decreases as rank grows—especially in upper layers—showing that these tasks require broader subspace updates. CKA is a representation-level proxy; the kernel–PCA link that motivates this interpretation is given in Lemma~\ref{lemma:pca_ntk}.

In summary, warming on MRPC increases CEV for semantically similar tasks, so low ranks work well and changes remain small (low KL, high CKA). For tasks with flatter CEV, larger ranks are needed, with larger prediction shifts and representation changes. These trends are exactly what the PCA/NTK view predicts: when the backbone concentrates task energy in a few directions, small slices suffice; when it does not, larger slices are required.

\section{Backbone Dependence} \label{app:backbone}

A slice can only reduce loss if its updates move the model along task-relevant directions already encoded by the pretrained backbone.
Formally, the loss gradient at the output is propagated back through the frozen features, and the effect of a slice update is controlled by two factors:
(i) the \emph{task energy} present in the backbone representations (how much of the feature variance lies in the subspace that separates the labels), and
(ii) the \emph{overlap} between the slice Jacobian and that task subspace.
If the pretrained network retains high principal energy on the task (large PCA cumulative explained variance), then all slices—by spectral balance—have nontrivial projection onto that subspace, yielding a nonzero restricted gradient and guaranteed decrease.
Conversely, if we ablate the backbone (e.g., heavy pruning or strong domain shift), the representation kernel loses energy, the accessible task dimension shrinks, and the slice gradient collapses unless we increase the slice rank.

\begin{lemma}[Backbone energy \& alignment condition for local winners] \label{lem:backbone_alignment}
Let $\theta_0$ be a pretrained network and, at layer $\ell$, let
$\Phi_\ell=[\phi_\ell(x_1),\dots,\phi_\ell(x_n)]$ have singular values $\{\sigma_j\}$.
Let $U_{k_{\mathrm{task}}}$ be the top $k_{\mathrm{task}}$ left singular vectors and assume the task-energy ratio
$E:=\frac{\sum_{j=1}^{k_{\mathrm{task}}}\sigma_j^2}{\sum_j \sigma_j^2}>0$.
For a slice mask $M$ in layer $\ell$, define the feature-space slice Jacobian
$J_M^{\phi}(x)=\frac{\partial \phi_\ell(x)}{\partial (M\odot W^{(\ell)})} \in \mathbb{R}^{d_\ell \times |M|}$ and set
\[
\gamma_{\min}(M):=\sigma_{\min}\!\big(U_{k_{\mathrm{task}}}^\top J_M^{\phi}(x)\big),\qquad
\beta(M):=\big\|(I-P_{U_{k_{\mathrm{task}}}})\, J_M^{\phi}(x)\big\|_2 .
\]
Assume \emph{spectral balance}: there exist constants $\underline{\gamma}>0$ and $c\in[0,1)$ such that
$\gamma_{\min}(M)\ge \underline{\gamma}$ and $\beta(M)\le c\,\underline{\gamma}$ for all admissible slices $M$.
Let $\mathcal{L}$ be convex in the logits and let $J_{\mathrm{out}}^{(\ell)}=\frac{\partial f_{\theta_0}}{\partial \phi_\ell}$.
Define the feature-space gradient $g_{\phi}=(J_{\mathrm{out}}^{(\ell)})^\top \nabla_f \mathcal{L}$ and decompose
$g_{\phi}=g_{\phi,\mathrm{task}}+g_{\phi,\perp}$ with
$g_{\phi,\mathrm{task}}:=P_{U_{k_{\mathrm{task}}}}g_{\phi}$ and $g_{\phi,\perp}:=(I-P_{U_{k_{\mathrm{task}}}})g_{\phi}$.
Assume the \emph{gradient alignment} condition: for some $\rho\in[0,1)$,
\[
\|g_{\phi,\perp}\|\le \rho\,\|g_{\phi,\mathrm{task}}\|,\qquad \|g_{\phi,\mathrm{task}}\|>0.
\]
Assume further that $\mathcal{L}$ is $L$-smooth along the slice coordinates (equivalently, $\lambda_{\max}(H_M)\le L<\infty$ with $H_M:=\nabla^2_{M}\mathcal{L}(\theta_0)$).
Then
\[
\big\|\nabla_{M}\mathcal{L}(\theta_0)\big\|
=\big\|(J_M^{\phi}(x))^\top g_{\phi}\big\|
\;\ge\; (1-c\rho)\,\underline{\gamma}\,\|g_{\phi,\mathrm{task}}\|\;>\;0,
\]
and the slice-only update $U^\star=-\lambda_{\max}(H_M)^{-1}\nabla_{M}\mathcal{L}(\theta_0)$ satisfies
\[
\mathcal{L}\big(\theta_0+M\odot U^\star\big)
\;\le\; \mathcal{L}(\theta_0)
\;-\;\frac{(1-c\rho)^2\,\underline{\gamma}^{\,2}\,\|g_{\phi,\mathrm{task}}\|^2}{2\,\lambda_{\max}(H_M)} .
\]
\end{lemma}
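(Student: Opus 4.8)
The plan is to reduce the statement to two standard ingredients: a chain-rule identity expressing the slice gradient through the feature-space Jacobian, and the descent lemma for smooth functions. First I would write $\nabla_{M}\mathcal{L}(\theta_0)=(J_M^{\phi}(x))^\top g_{\phi}$ by the chain rule, since the slice parameters $M\odot W^{(\ell)}$ influence the loss only through the layer-$\ell$ representation $\phi_\ell$ and then through $f_{\theta_0}$; here $g_\phi=(J_{\mathrm{out}}^{(\ell)})^\top\nabla_f\mathcal{L}$, which I decompose as $g_\phi=g_{\phi,\mathrm{task}}+g_{\phi,\perp}$ exactly as in the statement. If one insists on the full empirical loss rather than a single $x$, the same identity holds with the Jacobians and logit residuals aggregated over the $n$ examples, and nothing downstream changes.

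\textbf{Lower-bounding the slice gradient.} Next I would split $(J_M^\phi)^\top g_\phi$ along this decomposition and bound the two pieces. For the task part, write $g_{\phi,\mathrm{task}}=U_{k_{\mathrm{task}}}a$ with $\|a\|=\|g_{\phi,\mathrm{task}}\|$ (orthonormal columns of $U_{k_{\mathrm{task}}}$), so that $(J_M^\phi)^\top g_{\phi,\mathrm{task}}=(U_{k_{\mathrm{task}}}^\top J_M^\phi)^\top a$ and hence
\[
\|(J_M^\phi)^\top g_{\phi,\mathrm{task}}\|\;\ge\;\sigma_{\min}(U_{k_{\mathrm{task}}}^\top J_M^\phi)\,\|a\|\;=\;\gamma_{\min}(M)\,\|g_{\phi,\mathrm{task}}\|\;\ge\;\underline{\gamma}\,\|g_{\phi,\mathrm{task}}\|.
\]
For the orthogonal part, I would use that $P_{U_{k_{\mathrm{task}}}}$ is symmetric idempotent and $(I-P_{U_{k_{\mathrm{task}}}})g_{\phi,\perp}=g_{\phi,\perp}$ to rewrite $(J_M^\phi)^\top g_{\phi,\perp}=((I-P_{U_{k_{\mathrm{task}}}})J_M^\phi)^\top g_{\phi,\perp}$, giving $\|(J_M^\phi)^\top g_{\phi,\perp}\|\le\beta(M)\,\|g_{\phi,\perp}\|\le c\underline{\gamma}\,\rho\,\|g_{\phi,\mathrm{task}}\|$ after invoking $\beta(M)\le c\underline{\gamma}$ and the alignment bound $\|g_{\phi,\perp}\|\le\rho\|g_{\phi,\mathrm{task}}\|$. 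The reverse triangle inequality then yields $\|\nabla_M\mathcal{L}(\theta_0)\|\ge(1-c\rho)\,\underline{\gamma}\,\|g_{\phi,\mathrm{task}}\|$, which is strictly positive since $c,\rho\in[0,1)$ force $c\rho<1$ and $\|g_{\phi,\mathrm{task}}\|>0$ by hypothesis.

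\textbf{From gradient bound to descent.} Finally, setting $L:=\lambda_{\max}(H_M)$ and $U^\star=-L^{-1}\nabla_M\mathcal{L}(\theta_0)$, I would apply the quadratic upper model implied by $L$-smoothness along the slice coordinates, $\mathcal{L}(\theta_0+M\odot U)\le\mathcal{L}(\theta_0)+\langle\nabla_M\mathcal{L}(\theta_0),\mathrm{vec}(U)\rangle+\tfrac{L}{2}\|\mathrm{vec}(U)\|^2$, evaluate it at $U^\star$ to obtain $\mathcal{L}(\theta_0+M\odot U^\star)\le\mathcal{L}(\theta_0)-\tfrac{1}{2L}\|\nabla_M\mathcal{L}(\theta_0)\|^2$, and substitute the lower bound above to reach the stated decrease $-(1-c\rho)^2\underline{\gamma}^{2}\|g_{\phi,\mathrm{task}}\|^2/(2\lambda_{\max}(H_M))$. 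Convexity in the logits, though listed as a hypothesis, is not needed for this purely local inequality; it enters only when chaining several slices into the global-ticket guarantee of Theorem~\ref{thm:universal_winning_ticket}.

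\textbf{Where the difficulty lies.} Most of this is bookkeeping; the load-bearing step is the singular-value inequality $\|A^\top a\|\ge\sigma_{\min}(A)\|a\|$ with $A=U_{k_{\mathrm{task}}}^\top J_M^\phi$, which only carries content when $A$ has full row rank $k_{\mathrm{task}}$ — i.e.\ when the slice is wide enough to meet the effective task dimension. This is precisely the ``sufficiently wide slice'' requirement, quantified through the feature/NTK spectrum by Lemma~\ref{lemma:pca_ntk} and Corollary~\ref{cor:rank}: a steep cumulative-explained-variance curve keeps $k_{\mathrm{task}}(\tau)$ small, so $\gamma_{\min}(M)\ge\underline{\gamma}>0$ holds even for a narrow slice, whereas a flat spectrum inflates $k_{\mathrm{task}}$ and forces larger $r_{\text{slice}}$. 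A secondary, purely technical point is that the descent-lemma quadratic model should be applied with the Hessian bound holding on the short segment $[\theta_0,\theta_0+M\odot U^\star]$, not merely at $\theta_0$, which the stated $L$-smoothness assumption supplies.
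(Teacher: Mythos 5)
Your proposal is correct and follows essentially the same route as the paper's proof: the chain-rule identity $\nabla_M\mathcal{L}(\theta_0)=(J_M^\phi)^\top g_\phi$, the task/orthogonal split with the singular-value lower bound $\gamma_{\min}(M)\,\|g_{\phi,\mathrm{task}}\|$ and the operator-norm bound $\beta(M)\,\|g_{\phi,\perp}\|$, the reverse triangle inequality, and the $L$-smooth descent step at $U^\star=-L^{-1}\nabla_M\mathcal{L}(\theta_0)$. Your side remarks (convexity in the logits is not used here, the smoothness must control the quadratic model along the segment, and the bound is vacuous unless $U_{k_{\mathrm{task}}}^\top J_M^\phi$ has full row rank, i.e.\ the slice is wide enough) are accurate and consistent with the paper.
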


\begin{proof}
We can Write $P:=P_{U_{k_{\mathrm{task}}}}=U_{k_{\mathrm{task}}}U_{k_{\mathrm{task}}}^\top$, an orthogonal projector
($P^2=P$, $P^\top=P$), and recall that $\|A\|_2=\sigma_{\max}(A)$ and
$\sigma_{\min}(A)=\inf_{\|z\|_2=1}\|Az\|_2$. The feature–space slice Jacobian is
$J_M^\phi(x)\in\mathbb{R}^{d_\ell\times |M|}$ and the feature gradient is
$g_\phi=(J_{\mathrm{out}}^{(\ell)})^\top\nabla_f\mathcal L\in\mathbb{R}^{d_\ell}$.
By the chain rule through $\phi_\ell$,
\begin{equation}
\nabla_{M}\mathcal L(\theta_0)
=\Big(\tfrac{\partial \phi_\ell(x)}{\partial (M\odot W^{(\ell)})}\Big)^\top
\tfrac{\partial \mathcal L}{\partial \phi_\ell(x)}
=(J_M^\phi)^\top g_\phi .
\label{eq:resgrad-iclr}
\end{equation}
Decompose $g_\phi$ via the projector $P$ as
$g_\phi=g_{\phi,\mathrm{task}}+g_{\phi,\perp}$ with
$g_{\phi,\mathrm{task}}:=Pg_\phi$ and $g_{\phi,\perp}:=(I-P)g_\phi$,
which are orthogonal since $P$ is orthogonal. Using \eqref{eq:resgrad-iclr} and the triangle inequality,
\begin{equation}
\|\nabla_{M}\mathcal L(\theta_0)\|_2
=\|(J_M^\phi)^\top g_{\phi,\mathrm{task}}+(J_M^\phi)^\top g_{\phi,\perp}\|_2
\ge \|(J_M^\phi)^\top g_{\phi,\mathrm{task}}\|_2 - \|(J_M^\phi)^\top g_{\phi,\perp}\|_2 .
\label{eq:tri-iclr}
\end{equation}

For the first term, note that $g_{\phi,\mathrm{task}}=U_{k_{\mathrm{task}}}U_{k_{\mathrm{task}}}^\top g_\phi$.
Let $A:=(J_M^\phi)^\top U_{k_{\mathrm{task}}}\in\mathbb{R}^{|M|\times k_{\mathrm{task}}}$ and
$z:=U_{k_{\mathrm{task}}}^\top g_\phi\in\mathbb{R}^{k_{\mathrm{task}}}$.
Then
\[
\|(J_M^\phi)^\top g_{\phi,\mathrm{task}}\|_2
=\|A z\|_2
\ge \sigma_{\min}(A)\,\|z\|_2
=\sigma_{\min}\!\big((J_M^\phi)^\top U_{k_{\mathrm{task}}}\big)\,\|U_{k_{\mathrm{task}}}^\top g_\phi\|_2 ,
\]
where the inequality uses the singular–value bound $\|A z\|_2\ge \sigma_{\min}(A)\|z\|_2$.
Because $U_{k_{\mathrm{task}}}$ has orthonormal columns, $\|U_{k_{\mathrm{task}}}^\top g_\phi\|_2=\|Pg_\phi\|_2=\|g_{\phi,\mathrm{task}}\|_2$.
By definition $\gamma_{\min}(M):=\sigma_{\min}(U_{k_{\mathrm{task}}}^\top J_M^\phi)=\sigma_{\min}((J_M^\phi)^\top U_{k_{\mathrm{task}}})$.
Hence
\begin{equation}
\|(J_M^\phi)^\top g_{\phi,\mathrm{task}}\|_2
\ge \gamma_{\min}(M)\,\|g_{\phi,\mathrm{task}}\|_2
\ge \underline{\gamma}\,\|g_{\phi,\mathrm{task}}\|_2,
\label{eq:task-lb-iclr}
\end{equation}
using the spectral–balance lower bound $\gamma_{\min}(M)\ge \underline{\gamma}>0$.

For the second term, use the operator–norm bound and the identity $\|A^\top\|_2=\|A\|_2$:
\[
\|(J_M^\phi)^\top g_{\phi,\perp}\|_2
=\|(J_M^\phi)^\top (I-P) g_\phi\|_2
\le \|(J_M^\phi)^\top (I-P)\|_2\,\|g_{\phi,\perp}\|_2
=\|(I-P) J_M^\phi\|_2\,\|g_{\phi,\perp}\|_2 .
\]
By definition $\beta(M):=\|(I-P) J_M^\phi\|_2$ and by spectral balance $\beta(M)\le c\,\underline{\gamma}$ with $c\in[0,1)$.
The alignment assumption gives $\|g_{\phi,\perp}\|_2\le \rho\,\|g_{\phi,\mathrm{task}}\|_2$ with $\rho\in[0,1)$.
Therefore
\begin{equation}
\|(J_M^\phi)^\top g_{\phi,\perp}\|_2
\le c\,\underline{\gamma}\,\rho\,\|g_{\phi,\mathrm{task}}\|_2 .
\label{eq:orth-ub-iclr}
\end{equation}

Combining \eqref{eq:tri-iclr}, \eqref{eq:task-lb-iclr}, and \eqref{eq:orth-ub-iclr} yields
\[
\|\nabla_{M}\mathcal L(\theta_0)\|_2
\ge \big(1-c\rho\big)\,\underline{\gamma}\,\|g_{\phi,\mathrm{task}}\|_2 .
\]
Since $\rho<1$, $c<1$, $\underline{\gamma}>0$, and $\|g_{\phi,\mathrm{task}}\|_2>0$ by assumption,
the right-hand side is strictly positive, establishing the claimed nonzero restricted gradient.

To obtain a quantitative decrease, assume $\mathcal L$ is $L$-smooth along the slice coordinates,
i.e.\ for any $U\in\mathbb{R}^{|M|}$,
\[
\mathcal L(\theta_0+M\odot U)
\le \mathcal L(\theta_0) + \langle \nabla_{M}\mathcal L(\theta_0), U\rangle + \tfrac{L}{2}\|U\|_2^2 .
\]
Choose the steepest–descent step $U^\star:=-L^{-1}\nabla_{M}\mathcal L(\theta_0)$ (equivalently, $-\lambda_{\max}(H_M)^{-1}\nabla_{M}\mathcal L(\theta_0)$ when $L=\lambda_{\max}(H_M)$).
Then
\[
\mathcal L(\theta_0+M\odot U^\star)
\le \mathcal L(\theta_0) - \frac{1}{2L}\,\|\nabla_{M}\mathcal L(\theta_0)\|_2^2
\le \mathcal L(\theta_0) - \frac{(1-c\rho)^2\,\underline{\gamma}^{\,2}\,\|g_{\phi,\mathrm{task}}\|_2^2}{2\,L},
\]
where the last inequality substitutes the lower bound on $\|\nabla_{M}\mathcal L(\theta_0)\|_2$ derived above.
Taking $L=\lambda_{\max}(H_M)$ gives the stated decrement, completing the proof.
\end{proof}

\begin{figure*}[!htb]
 \begin{center}
   \includegraphics[width=1.0\linewidth]{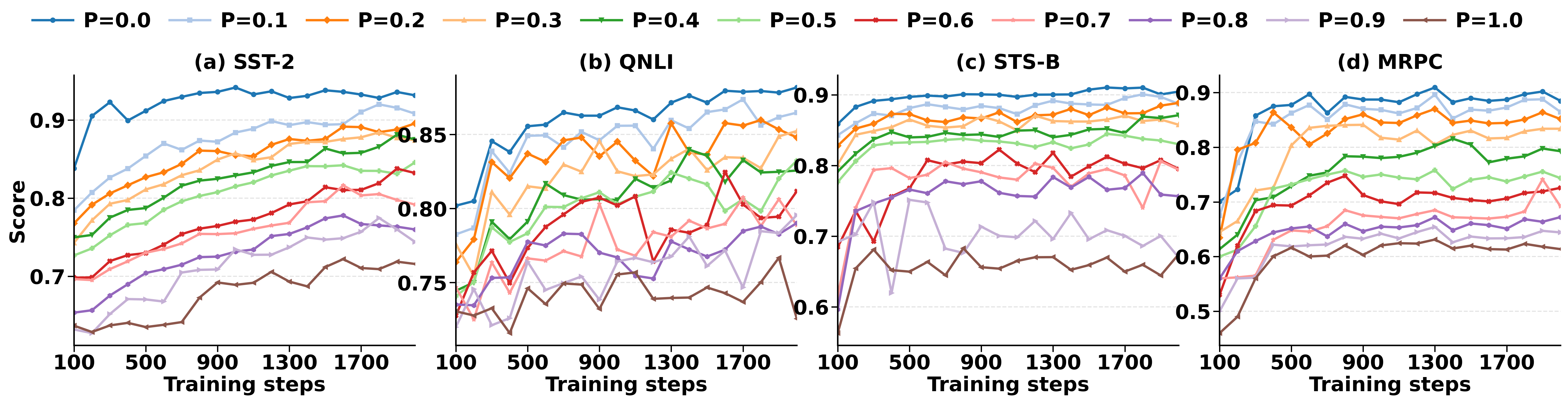}
 \end{center}
 \caption{
 Effect of pruning the frozen backbone on slice fine-tuning.
 A fraction \(p\in[0,1]\) of \emph{non-slice} pretrained weights is set to zero by global magnitude pruning
 (bias and LayerNorm parameters are kept); the slice remains trainable and the rest of the backbone stays frozen.
 Accuracy drops monotonically as \(p\) increases—mild for \(p\!\le\!0.2\text{--}0.3\), steep for \(p\!\ge\!0.6\), and worst at \(p{=}1.0\) when the backbone is fully ablated.
 This confirms that slices rely on the pretrained scaffold: pruning reduces representation energy (lower PCA CEV / NTK mass), shrinking the slice’s effective task overlap and requiring larger ranks to compensate.
 }
 \label{fig:pruning}
\end{figure*}

\medskip

To validate backbone dependence empirically, we design two controlled tests that isolate the role of the frozen backbone while keeping training schedules (epochs, batch size, sequence length, BF16) fixed.

\subsection{Backbone pruning}
Figure~\ref{fig:pruning} progressively prunes a fraction \(p\) of non-slice weights to zero before fine-tuning (slice trainable; biases/LayerNorm kept).
Across SST-2, QNLI, STS-B, and MRPC, performance degrades smoothly with \(p\).
For modest pruning the drop is small, but heavy pruning causes large losses, and removing the backbone entirely (\(p{=}1\)) yields the worst performance.
This pattern matches the PCA/NTK view: pruning alters the centered feature matrix \(\tilde H\), decreasing the principal energy of the representation kernel \(K=\tilde H\tilde H^\top\) and shrinking the accessible task subspace, so a fixed-rank slice captures less of what matters.

\begin{figure*}[!htb]
 \begin{center}
   \includegraphics[width=1.0\linewidth]{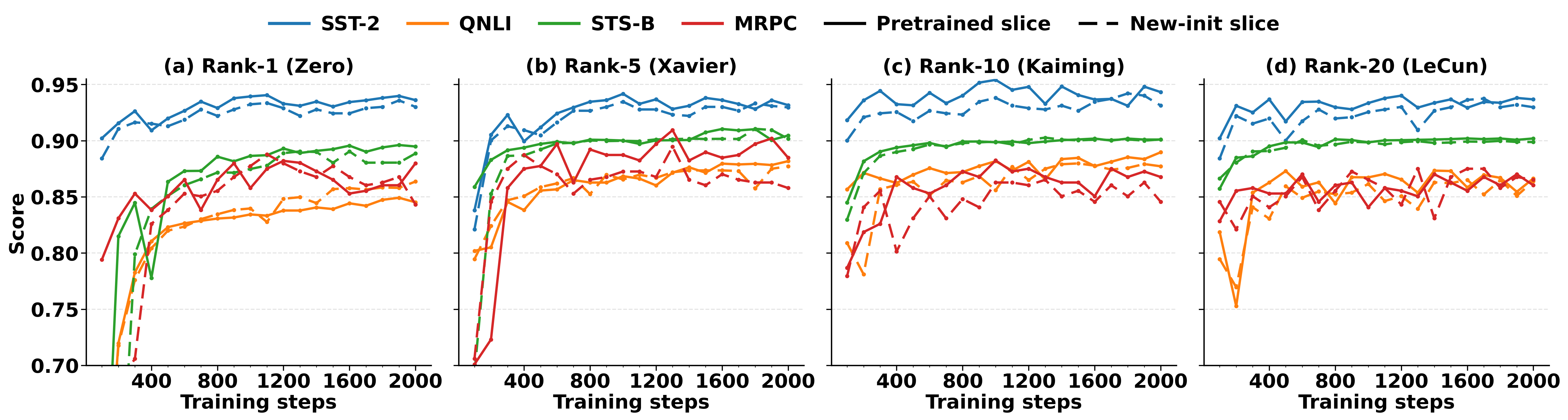}
 \end{center}
 \caption{
 Slice initialization sensitivity on \textbf{SST-2}, \textbf{QNLI}, \textbf{STS-B}, and \textbf{MRPC}.
 For ranks \(r\in\{1,5,10,20\}\), we compare \emph{pretrained} slices (solid) to \emph{newly initialized} slices (dashed; zero/Xavier/Kaiming/LeCun), while the backbone remains frozen at pretrained values.
 Curves converge to similar performance after a short warm-up, indicating that final accuracy is driven by the backbone’s features and the slice’s task overlap, not by the slice’s initial values.
 }
 \label{fig:initilization}
\end{figure*}

\subsection{Slice re-initialization}
Figure~\ref{fig:initilization} compares pretrained vs.\ randomly reinitialized slices for ranks \(r\in\{1,5,10,20\}\).
After a brief warm-up, both settings reach nearly the same accuracy on all four tasks.
This behavior is consistent with the linearized view: with the backbone frozen, the slice acts through its Jacobian block, so optimization is locally quadratic in the slice parameters and converges to a similar solution regardless of the slice’s starting point (provided the backbone retains high task energy).
In contrast, reinitializing and freezing the \emph{backbone} (while training only the slice) severely hurts performance, reinforcing that slices succeed because they ride on a strong pretrained scaffold.

\medskip

Overall, we see,
(i) Strong backbones are essential: degrading them reduces PCA CEV / NTK mass and harms slice effectiveness.
(ii) Slice initialization is largely irrelevant; the backbone’s features determine the attainable accuracy.
(iii) These trends align with the bounds in Lemma~\ref{lem:backbone_alignment} and the PCA/NTK equivalence (Lemma~\ref{lemma:pca_ntk}): when backbone task energy is high, even small ranks work; when it is low (e.g., after pruning), larger ranks are needed.
(iv) Thus, reinitializing the slice has little effect on final performance, whereas reinitializing or degrading the \emph{backbone} has a large effect—explaining why randomly initialized adapter-style PEFT modules work well when the backbone is strong.

\begin{corollary}[Effect of degrading the backbone] \label{cor:backbone}
Let $p\in[0,1]$ denote the pruning fraction applied to the \emph{frozen} backbone (slice remains trainable).
Let $\sigma_j(p)$ be the singular values of $\Phi_\ell$ after pruning, and define the task-energy ratio
\[
E(p)\;:=\;\frac{\sum_{j=1}^{k_{\mathrm{task}}}\sigma_j(p)^2}{\sum_j \sigma_j(p)^2}.
\]
For a slice mask $M$ of rank $r$, define the overlap
\[
\gamma_p^{(r)}(M)\;:=\;\sigma_{\min}\!\big(U_{k_{\mathrm{task}}}(p)^\top J_M^{\phi}(x)\big),
\]
where $U_{k_{\mathrm{task}}}(p)$ spans the top task subspace of the pruned backbone at layer $\ell$.
Assume the alignment and smoothness conditions of Lemma~\ref{lem:backbone_alignment} with constants
$c,\rho<1$ and $L_p=\lambda_{\max}(H_M)$.

\emph{(i) Vanishing guarantee.} If $E(p)=0$ or $\gamma_p^{(1)}(M)=0$, then the lower bound in
Lemma~\ref{lem:backbone_alignment} becomes vacuous:
\[
\big\|\nabla_{M}\mathcal{L}(\theta_0)\big\|
\;\not\ge\; (1-c\rho)\,\gamma_p^{(1)}(M)\,\|g_{\phi,\mathrm{task}}(p)\| \;=\; 0,
\]
so there is \emph{no guaranteed} descent for rank-1 slices. (The gradient may still be nonzero due to the orthogonal component, but the lemma no longer certifies progress.)

\emph{(ii) Diminishing improvement.} If $E(p)>0$ and $\gamma_p^{(r)}(M)>0$, Lemma~\ref{lem:backbone_alignment} yields
\[
\delta_{\min}(p,r)
\;\ge\;
\frac{(1-c\rho)^2}{2L_p}\,\big(\gamma_p^{(r)}(M)\big)^2\,\|g_{\phi,\mathrm{task}}(p)\|^2
\;\;\propto\;\; E(p)\,\big(\gamma_p^{(r)}(M)\big)^2/L_p .
\]
Hence, as pruning increases ($p\uparrow$), both $E(p)$ and typically $\gamma_p^{(r)}(M)$ decrease, shrinking the certified gain $\delta_{\min}(p,r)$.

\emph{(iii) Minimal rank under pruning.} Define the minimal rank
\[
r^\star(p,\tau)\;:=\;\min\Big\{r:\;\sum_{j=1}^{r}\lambda_j(p)\Big/\sum_j\lambda_j(p)\;\ge\;\tau\Big\},
\]
with $\{\lambda_j(p)\}$ the PCA eigenvalues of the pruned representation (Lemma~\ref{lemma:pca_ntk}).
Under spectral balance, there exists $\gamma_0>0$ such that
$r\ge r^\star(p,\tau)$ implies $\gamma_p^{(r)}(M)\ge \gamma_0$ for admissible slices $M$.
Consequently, $r^\star(p,\tau)$ is nondecreasing in $p$ and gives a sufficient rank to recover a positive, $p$-robust guarantee:
\[
\delta_{\min}(p,r^\star)\;\ge\;\frac{(1-c\rho)^2\gamma_0^2}{2L_p}\,\|g_{\phi,\mathrm{task}}(p)\|^2 \;>\;0.
\]
\end{corollary}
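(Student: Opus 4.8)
The plan is to obtain all three parts as direct applications of Lemma~\ref{lem:backbone_alignment} to the \emph{pruned} backbone at level $p$: substitute the pruned feature matrix $\Phi_\ell$ (with singular values $\sigma_j(p)$), the pruned task basis $U_{k_{\mathrm{task}}}(p)$, the overlap constant $\gamma_p^{(r)}(M)$ in place of $\underline{\gamma}$, and $L_p=\lambda_{\max}(H_M)$ in place of $L$; the alignment constants $c,\rho<1$ are inherited by assumption. Part~(i) is then immediate bookkeeping: the lemma's certified lower bound is $(1-c\rho)\,\underline{\gamma}\,\|g_{\phi,\mathrm{task}}\|$, so it collapses to $0$ as soon as $\gamma_p^{(1)}(M)=0$; and when $E(p)=0$ the top-$k_{\mathrm{task}}$ subspace of the pruned representation carries no feature energy, so by the chain rule through $\phi_\ell$ the back-propagated gradient has no component there, i.e.\ $g_{\phi,\mathrm{task}}(p)=P_{U_{k_{\mathrm{task}}}(p)}g_\phi=0$ and the bound is again vacuous. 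The residual gradient may still be nonzero through $g_{\phi,\perp}$, but the lemma no longer certifies descent, which is exactly the claimed conclusion.

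For part~(ii) I would quote the decrement inequality of Lemma~\ref{lem:backbone_alignment} with the pruned constants,
\[
\delta_{\min}(p,r)\;\ge\;\frac{(1-c\rho)^2}{2L_p}\,\big(\gamma_p^{(r)}(M)\big)^2\,\|g_{\phi,\mathrm{task}}(p)\|^2 ,
\]
and then record the two monotone trends that shrink the right-hand side. First, $\|g_{\phi,\mathrm{task}}(p)\|^2$ is controlled by the task-energy ratio $E(p)$ up to factors depending only on the fixed output Jacobian $J_{\mathrm{out}}^{(\ell)}$ and the loss gradient $\nabla_f\mathcal{L}$, which I treat as bounded; this yields the proportionality $\delta_{\min}(p,r)\propto E(p)\,\big(\gamma_p^{(r)}(M)\big)^2/L_p$. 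Second, under global magnitude pruning the perturbation applied to $\Phi_\ell$ has operator norm growing with $p$, so a Weyl-type bound pushes down the small (tail) singular values outside the top-$k_{\mathrm{task}}$ block, and both $E(p)$ and the overlap $\gamma_p^{(r)}(M)=\sigma_{\min}\!\big(U_{k_{\mathrm{task}}}(p)^\top J_M^\phi(x)\big)$ typically decrease; substituting gives the stated diminishing-guarantee statement.

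Part~(iii) is the one requiring a genuine argument. I would define $r^\star(p,\tau)$ through the cumulative PCA explained variance of the pruned representation and use Lemma~\ref{lemma:pca_ntk} to identify that curve with the one coming from the kernel $K$. The load-bearing implication to establish is that, under spectral balance, $r\ge r^\star(p,\tau)$ forces $\gamma_p^{(r)}(M)\ge\gamma_0$ for all admissible slices: a rank-$r$ slice whose width is at least the number of components needed to reach a $\tau$-fraction of the feature energy must align its Jacobian block with a full $\tau$-share of the top task subspace, and the uniform lower bound $\gamma_{\min}(M)\ge\underline{\gamma}$ of Lemma~\ref{lem:backbone_alignment} upgrades this to a slice-independent $\gamma_0>0$. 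Monotonicity of $r^\star(p,\tau)$ in $p$ follows because pruning flattens the spectrum — at any fixed truncation level the ratio $\sum_{j\le r}\lambda_j(p)/\sum_j\lambda_j(p)$ is nonincreasing in $p$ — so more components are needed to cross $\tau$. Plugging $\gamma_0$ into the decrement of Lemma~\ref{lem:backbone_alignment} gives the $p$-robust bound $\delta_{\min}(p,r^\star)\ge\frac{(1-c\rho)^2\gamma_0^2}{2L_p}\|g_{\phi,\mathrm{task}}(p)\|^2>0$.

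The main obstacle is honesty about the monotonicity claims: ``$E(p)$ decreases'' and ``$\gamma_p^{(r)}(M)$ decreases'' are not theorems for arbitrary pruning patterns — they are typical-case statements, which is why the corollary hedges with ``typically''. The rigorous skeleton I can fully defend is (a) the plug-in application of Lemma~\ref{lem:backbone_alignment} and the resulting algebraic identities, (b) the spectral-balance upgrade $r\ge r^\star\Rightarrow\gamma_p^{(r)}\ge\gamma_0$, and (c) the definitional monotonicity of $r^\star$ once one grants that pruning does not increase the cumulative explained-variance ratio at any fixed truncation, itself justified by a Weyl-type perturbation bound on $\Phi_\ell$. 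I would therefore foreground (b)--(c) as the two supporting lemmas and present the rest as immediate consequences.
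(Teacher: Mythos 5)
Your proposal follows essentially the same route as the paper's own proof sketch: apply Lemma~\ref{lem:backbone_alignment} with the pruned quantities $U_{k_{\mathrm{task}}}(p)$, $J_M^{\phi}$, $\gamma_p^{(r)}(M)$, and $L_p$; observe that $E(p)=0$ or $\gamma_p^{(1)}(M)=0$ collapses the certified lower bound (part~i); read off the decrement bound and its $E(p)\,\gamma^2/L_p$ scaling (part~ii); and invoke Lemma~\ref{lemma:pca_ntk} together with spectral balance to get $r\ge r^\star(p,\tau)\Rightarrow\gamma_p^{(r)}(M)\ge\gamma_0$ and the monotonicity of $r^\star$ (part~iii). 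Your additional caveats (the Weyl-type justification for spectrum flattening and the explicit ``typically'' hedges on the monotonicity of $E(p)$ and $\gamma_p^{(r)}(M)$) simply make explicit what the paper's sketch leaves informal, so the two arguments are in substance the same.
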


\paragraph{Proof sketch.}
Apply Lemma~\ref{lem:backbone_alignment} with $U_{k_{\mathrm{task}}}(p)$ and $J_M^\phi$ evaluated on the pruned backbone.
If $E(p)=0$ then $g_{\phi,\mathrm{task}}(p)=0$; if $\gamma_p^{(1)}(M)=0$ then $U_{k_{\mathrm{task}}}(p)^\top J_M^\phi$ is rank-deficient for rank-1 slices.
Either case collapses the lower bound. When both are positive, the decrease bound scales like
$E(p)\cdot(\gamma_p^{(r)}(M))^2/L_p$. By Lemma~\ref{lemma:pca_ntk}, $E(p)$ is the PCA CEV of the pruned features; pruning reduces it, and the spectral-balance argument implies that the minimal rank needed to obtain overlap bounded away from zero grows with $p$, yielding the monotonicity of $r^\star(p,\tau)$.

\begin{corollary}[Adapters as local winners]
Let $f_{\theta_0}$ be a pretrained backbone and insert an adapter with parameters $A$ at layer $\ell$
(e.g., parallel linear $y=W\phi + B A \phi$ or residual bottleneck $h_A(\phi)=\phi + B\,\sigma(A\phi)$).
Assume we operate in the linearized regime around $(\theta_0,A_0)$, so
\[
f_{\theta_0,A}(x)\;\approx\; f_{\theta_0}(x) + J_A(x)\,\mathrm{vec}(A)\,,\qquad 
J_A(x):=\Big[\frac{\partial f}{\partial A}\Big]_{\theta_0,A_0}.
\]
Let $U_{k_{\mathrm{task}}}$ span the task subspace at layer $\ell$, and define the feature–space adapter Jacobian
$J_A^{\phi}(x):=\big[\tfrac{\partial \phi_\ell(x)}{\partial A}\big]_{\theta_0,A_0}$.
If the backbone has positive task energy $E>0$ (nontrivial CEV; Lemma~\ref{lemma:pca_ntk}),
satisfies spectral balance, and the adapter has nontrivial overlap
\[
\gamma_A\;:=\;\sigma_{\min}\!\big(U_{k_{\mathrm{task}}}^\top J_A^{\phi}(x)\big)\;>\;0,
\]
then under the alignment ($\rho<1$) and smoothness ($L<\infty$) conditions of Lemma~\ref{lem:backbone_alignment},
the restricted gradient on the adapter obeys
\[
\big\|\nabla_A \mathcal{L}(\theta_0)\big\|\;\ge\;(1-c\rho)\,\gamma_A\,\|g_{\phi,\mathrm{task}}\|\;>\;0,
\]
and the one–step $L$-smooth decrease bound holds:
\[
\mathcal{L}\!\Big(\theta_0, A-\tfrac{1}{L}\nabla_A\mathcal{L}\Big)\;\le\;
\mathcal{L}(\theta_0,A)\;-\;\tfrac{(1-c\rho)^2}{2L}\,\gamma_A^{2}\,\|g_{\phi,\mathrm{task}}\|^{2}.
\]
Hence small adapters act as \emph{local winners} provided the pretrained backbone supplies sufficient task energy and the adapter’s Jacobian overlaps the task subspace.
\end{corollary}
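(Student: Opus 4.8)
The plan is to reduce the statement to Lemma~\ref{lem:backbone_alignment} by treating the adapter parameters $A$ exactly as the lemma treats a slice mask $M$: the only structural facts used there are (a) that the loss gradient on the trainable block factors, via the chain rule through the frozen features $\phi_\ell$, as $(J_M^\phi(x))^\top g_\phi$ with $g_\phi=(J_{\mathrm{out}}^{(\ell)})^\top\nabla_f\mathcal L$, and (b) that this block's feature-space Jacobian obeys spectral-balance-type bounds. First I would verify that the same factorization holds for the adapter, i.e. $\nabla_A\mathcal L(\theta_0)=(J_A^\phi(x))^\top g_\phi$. In the linearized regime around $(\theta_0,A_0)$ this is immediate once one observes that, for both the parallel form $y=W\phi+BA\phi$ and the residual bottleneck $h_A(\phi)=\phi+B\sigma(A\phi)$, the adapter output is a function of the frozen feature $\phi_\ell$, so $\partial f/\partial A=J_{\mathrm{out}}^{(\ell)}\cdot(\partial\phi_\ell/\partial A)$ with $\partial\phi_\ell/\partial A=:J_A^\phi$. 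Convexity of $\mathcal L$ in the logits guarantees $\nabla_f\mathcal L\neq 0$ away from the optimum; positivity of $E$ guarantees $U_{k_{\mathrm{task}}}$ is nontrivial, and the alignment hypothesis (carried over from the lemma) supplies $\|g_{\phi,\mathrm{task}}\|>0$.

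Next I would replay the three-line estimate from the proof of Lemma~\ref{lem:backbone_alignment}. Write $P:=P_{U_{k_{\mathrm{task}}}}$ and split $g_\phi=g_{\phi,\mathrm{task}}+g_{\phi,\perp}$ with $g_{\phi,\mathrm{task}}=Pg_\phi$, $g_{\phi,\perp}=(I-P)g_\phi$. The triangle inequality gives $\|(J_A^\phi)^\top g_\phi\|\ge\|(J_A^\phi)^\top g_{\phi,\mathrm{task}}\|-\|(J_A^\phi)^\top g_{\phi,\perp}\|$. For the first term, set $z:=U_{k_{\mathrm{task}}}^\top g_\phi$ and use $\|(J_A^\phi)^\top U_{k_{\mathrm{task}}}z\|\ge\sigma_{\min}(U_{k_{\mathrm{task}}}^\top J_A^\phi)\,\|z\|=\gamma_A\,\|g_{\phi,\mathrm{task}}\|$, since $U_{k_{\mathrm{task}}}$ has orthonormal columns so $\|z\|=\|g_{\phi,\mathrm{task}}\|$. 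For the second term, use the operator-norm bound $\|(J_A^\phi)^\top(I-P)g_\phi\|\le\|(I-P)J_A^\phi\|\,\|g_{\phi,\perp}\|$, then invoke spectral balance for the adapter block, $\beta_A:=\|(I-P)J_A^\phi\|\le c\,\gamma_A$, together with the alignment bound $\|g_{\phi,\perp}\|\le\rho\|g_{\phi,\mathrm{task}}\|$, to get $\le c\rho\,\gamma_A\,\|g_{\phi,\mathrm{task}}\|$. Subtracting yields $\|\nabla_A\mathcal L(\theta_0)\|\ge(1-c\rho)\,\gamma_A\,\|g_{\phi,\mathrm{task}}\|$, strictly positive because $c,\rho<1$, $\gamma_A>0$, and $\|g_{\phi,\mathrm{task}}\|>0$.

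For the descent bound I would invoke $L$-smoothness of $\mathcal L$ along the adapter coordinates, $\mathcal L(\theta_0,A+\Delta)\le\mathcal L(\theta_0,A)+\langle\nabla_A\mathcal L,\Delta\rangle+\tfrac{L}{2}\|\Delta\|^2$, set $\Delta=-\tfrac1L\nabla_A\mathcal L$, obtain the standard $\mathcal L(\theta_0,A-\tfrac1L\nabla_A\mathcal L)\le\mathcal L(\theta_0,A)-\tfrac1{2L}\|\nabla_A\mathcal L\|^2$, and substitute the lower bound on $\|\nabla_A\mathcal L\|$ to reach $\mathcal L(\theta_0,A-\tfrac1L\nabla_A\mathcal L)\le\mathcal L(\theta_0,A)-\tfrac{(1-c\rho)^2}{2L}\gamma_A^2\|g_{\phi,\mathrm{task}}\|^2$, which is exactly the claimed decrement.

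The main obstacle is not the estimate—which is a line-by-line transcription of Lemma~\ref{lem:backbone_alignment}—but justifying that the adapter's Jacobian $J_A^\phi$ genuinely plays the role of a slice Jacobian: one must confirm, for each adapter family, that the adapter output factors through the frozen feature $\phi_\ell$ (so the chain rule localizes at layer $\ell$ and $g_\phi$ is the same object as in the lemma) and that the spectral-balance-type bound $\beta_A\le c\,\gamma_A$ is inherited. For the parallel-linear adapter $BA\phi$ this is transparent: $J_A^\phi$ is a linear reshaping built from $B$ and $\phi$, so it inherits the conditioning of the frozen features. For the residual bottleneck the extra factor $\sigma'(A_0\phi)$ enters, and one must assume it is bounded away from degeneracy at the linearization point so that the overlap $\gamma_A$ stays positive and $\beta_A$ remains controlled. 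Flagging this architecture-dependent nondegeneracy condition explicitly—rather than treating it as automatic—is the one place where care is needed.
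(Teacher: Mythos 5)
Your proposal is correct and follows exactly the route the paper intends: the corollary is a direct re-application of Lemma~\ref{lem:backbone_alignment} with the adapter Jacobian $J_A^{\phi}$ substituted for the slice Jacobian $J_M^{\phi}$ (chain-rule factorization $\nabla_A\mathcal L=(J_A^\phi)^\top g_\phi$, triangle inequality with the $\sigma_{\min}$ and operator-norm bounds, then the standard $L$-smooth descent step), which is precisely why the paper states it without a separate proof. Your explicit flagging of the $\sigma'(A_0\phi)$ nondegeneracy condition for the residual-bottleneck case is a useful clarification the paper leaves implicit, but it does not change the argument.
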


\section{Ablation Study} \label{ablation}
\subsection{Rank vs Winner} \label{ab:rank_winner}
Figure~\ref{fig:winner_slice_ablation}(a) analyzes the effect of slice rank on downstream performance. 
As the rank increases, accuracy initially improves because larger slices capture more task-relevant directions within the representation subspace. 
However, beyond a certain point, adding additional parameters yields diminishing returns: accuracy plateaus or slightly declines due to redundancy and potential overfitting. 
This pattern highlights that only a small fraction of the full parameter space is necessary to capture the intrinsic task dimension $k_{\mathrm{task}}$. 

Importantly, when the slice rank equals the full layer dimension, slice training reduces to standard full fine-tuning. 
The ablation therefore illustrates a smooth trade-off between efficiency and accuracy: small ranks already achieve strong performance (supporting the \emph{local winner} property), while larger ranks approximate full fine-tuning but at greater computational cost.

\begin{figure*}[!htb]

 \begin{center}

      \includegraphics[width=0.9941\linewidth]{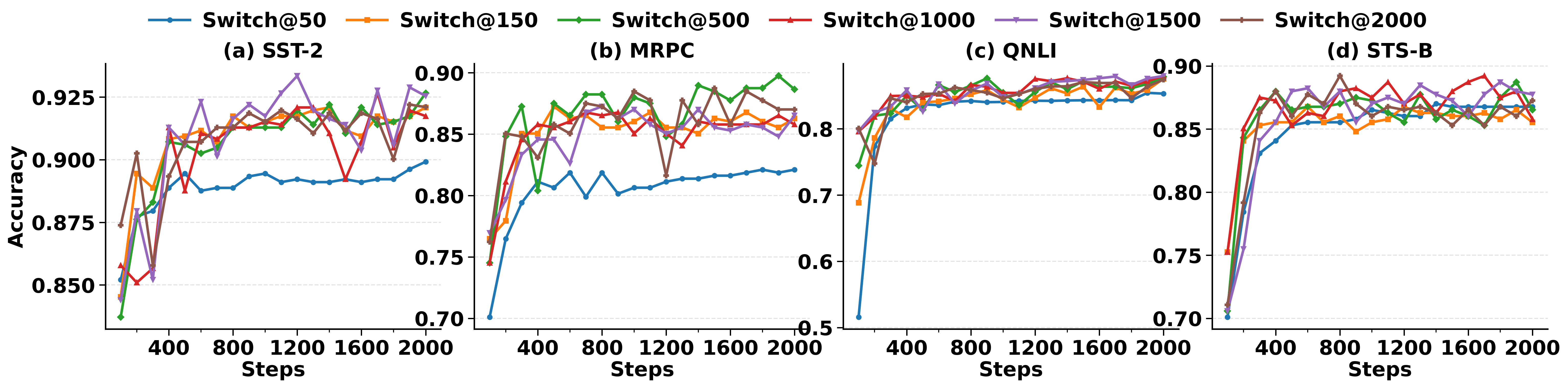}
   \end{center}
    \caption{
Accuracy curves for \textbf{SST-2}, \textbf{MRPC}, \textbf{QNLI}, and \textbf{STS-B} under different 
slice update intervals. Each curve corresponds to a policy where the active slice is changed 
every $N$ training steps (e.g., ``Change every 50 steps’’). The results show that varying the 
slice update interval leads to similar performance trends across tasks, indicating that the 
spectral balance property allows slices to adapt effectively regardless of update frequency.
}
    \label{fig:slice_update}

\end{figure*}

\subsection{Slice Update Interval} \label{ab:interval}
Figure~\ref{fig:slice_update} analyzes the effect of varying the interval $N$ at which the active slice is switched during training. 
We consider a spectrum of update frequencies, ranging from very frequent switching (every $50$ steps) to very infrequent switching (every $2000$ steps), across four representative GLUE tasks: SST-2, MRPC, QNLI, and STS-B. 

We observe three consistent patterns. 
First, when slices are switched too frequently (e.g., every $50$ steps), the model exhibits lower accuracy across all datasets. 
This degradation occurs because each slice has insufficient time to adapt its parameters before being replaced. 
The optimization dynamics become unstable: gradients partially adapt one slice, then abruptly shift to another, preventing any slice from accumulating task-relevant signal. 
As a result, rapid switching reduces the effective learning capacity of the model. 

Second, when slices are switched too late (e.g., at 2000 steps), the performance is comparable to other schedules but does not always achieve the best accuracy. 
Here, a single slice has ample time to adapt, but since all updates are concentrated on one subspace for a long period, the optimization may overfit or fail to leverage complementary directions from other slices. 
This leads to slightly reduced generalization compared to moderate switching intervals.

Third, we find that intermediate switching intervals yield the strongest results, with the optimal value depending on the dataset. 
For instance, SST-2 achieves its peak accuracy when switching every $1500$ steps (rank-1 slice), MRPC benefits most from switching every $500$ steps, STS-B reaches its best accuracy at $100$ steps, while QNLI shows stable performance for intervals between $500$ and $2000$. 
These variations reflect differences in task complexity and the effective task subspace dimension $k_{\mathrm{task}}$: tasks with simpler label structures require less adaptation time per slice, while more complex tasks benefit from longer adaptation before switching.

\begin{table*}[t]
\centering
\small
\setlength{\tabcolsep}{5.5pt}
\renewcommand{\arraystretch}{1.12}

\begin{tabular}{
  l
  |c
  |S S S S S S S
}
\toprule
    &  \multirow{2}{*}{\rotatebox{0}{\textbf{\textbf{Weight Type}}}} 
    & \multicolumn{7}{c}{\cellcolor{headergray}\bfseries Rank $r$} \\
\cmidrule(lr){3-9}
    & 
    & \rkhead{1} & \rkhead{2} & \rkhead{4} & \rkhead{8}
    & \rkhead{32} & \rkhead{64} & \rkhead{128} \\
\midrule

\blkhead{SST-2} & & & & & & & & \\
\zebraA & $W_v$                     & 88.30 & 88.62 & 89.18 & 90.57 & 90.34 & 90.22 & 90.18 \\
\zebraA & $W_q, W_k$                & 88.72 & 90.15 & 91.20 & 91.57 & 91.52 & 91.85 & 91.88 \\
\zebraA & $W_q, W_k, W_v$           & 89.83 & 90.35 & 92.72 & 92.83 & 92.74 & 93.02 & 93.71 \\
\zebraA & $W_q, W_k, W_v, W_o$      & 90.29 & 90.65 & 92.35 & 93.67 & \second{94.06} & 93.48 & 92.23 \\
\zebraA & $W_q, W_k, W_v, W_o, W_i$ & 92.63 & 92.67 & 92.67 & \best{94.72} & 93.98 & 93.26 & 92.44 \\
\midrule

\blkhead{QQP} & & & & & & & & \\
\zebraA & $W_v$                     & 83.62 & 84.90 & 86.10 & 85.90 & 85.70 & 85.60 & 85.55 \\
\zebraA & $W_q, W_k$                & 84.47 & 86.94 & 89.73 & 89.05 & 88.65 & 88.83 & 88.17 \\
\zebraA & $W_q, W_k, W_v$           & 86.08 & 86.70 & 89.62 & 89.23 & 88.95 & 88.81 & 88.72 \\
\zebraA & $W_q, W_k, W_v, W_o$      & 87.26 & 86.94 & \best{89.89} & 89.41 & 89.28 & 89.14 & 89.01 \\
\zebraA & $W_q, W_k, W_v, W_o, W_i$ & 88.35 & 88.79 & \second{89.80} & 89.35 & 89.15 & 88.98 & 88.90 \\
\midrule

\blkhead{QNLI} & & & & & & & & \\
\zebraA & $W_v$                     & 87.55 & 88.34 & 88.83 & 90.72 & 91.05 & 90.94 & 90.69 \\
\zebraA & $W_q, W_k$                & 87.90 & 89.42 & 90.36 & 91.12 & 91.42 & 91.21 & 91.06 \\
\zebraA & $W_q, W_k, W_v$           & 88.42 & 89.70 & 90.92 & 91.66 & 91.88 & 91.73 & 91.50 \\
\zebraA & $W_q, W_k, W_v, W_o$      & 88.76 & 89.95 & 91.20 & 91.88 & 91.95 & 91.81 & 91.42 \\
\zebraA & $W_q, W_k, W_v, W_o, W_i$ & 89.33 & 90.42 & 91.70 & \second{92.10} & \best{92.43} & 91.78 & 91.55 \\
\midrule

\blkhead{MRPC} & & & & & & & & \\
\zebraA & $W_v$                     & 84.60 & 85.42 & 86.21 & 86.34 & 86.65 & 86.10 & 86.90 \\
\zebraA & $W_q, W_k$                & 84.98 & 85.88 & 86.75 & 87.65 & 87.72 & 87.51 & 85.30 \\
\zebraA & $W_q, W_k, W_v$           & 85.20 & 86.12 & 87.05 & 87.14 & 87.29 & 87.62 & 86.40 \\
\zebraA & $W_q, W_k, W_v, W_o$      & 86.61 & 87.45 & 87.32 & 88.05 & 88.18 & 87.95 & 86.65 \\
\zebraA & $W_q, W_k, W_v, W_o, W_i$ & 87.23 & 87.70 & 87.82 & \best{88.39} & \second{88.22} & 87.68 & 86.62 \\

\bottomrule
\end{tabular}

\caption{Accuracy (\%) across slice ranks $r$ for different trained weight subsets on four GLUE tasks. 
$W_q$, $W_k$, $W_v$, and $W_o$ denote the query, key, value, and output projections in self-attention; 
$W_i$ is the MLP (intermediate) projection. 
The notation $W_v$ indicates that \emph{SliceFine} is applied only to the value matrices across all layers 
(similarly for other subsets). 
Best per row is highlighted in {\color{bestc}\textbf{blue}} and second best in {\color{secc}\textbf{orange}}.}

\label{tab:optimal_rank}
\end{table*}

\subsection{What is the optimal slice rank \texorpdfstring{$r$}{r}?} \label{ab:optimal_rank}
Figure~\ref{fig:winner_slice_ablation}(a) and Table~\ref{tab:optimal_rank} vary the slice rank $r\!\in\!\{1,2,4,8,32,64,128\}$ while keeping the training recipe fixed and report accuracy for different trained weight subsets. The curves show a consistent shape across tasks: accuracy rises quickly at small ranks, then saturates, and in a few cases dips slightly at very high ranks.

Accuracy improves sharply when moving from tiny capacity to moderate capacity because increasing $r$ enlarges the portion of the task-relevant subspace that a slice can capture. On SST-2, training $\{W_q,W_k,W_v,W_o,W_i\}$ climbs from 92.63 ($r{=}1$) to 94.72 at $r{=}8$; on MRPC the same subset peaks at 88.39 for $r{=}8$; on QQP the best score 89.89 is already reached at $r{=}4$ with $\{W_q,W_k,W_v,W_o\}$; on QNLI performance continues to inch upward to 92.43 at $r{=}64$. These plateaus indicate that once the slice rank matches the task’s effective dimension in the trained layer, additional directions add little new information.

By following corollary \ref{cor:rank}, if we increase $r$, the slice’s Jacobian gains access to more principal directions of the representation kernel (PCA/NTK view). When $r \gtrsim k_{\mathrm{task}}$, the added directions mainly lie in low-variance residual space; by spectral balance, many of these directions are redundant across slices. Hence gains taper off even though the parameter count grows.

A mild decline at very large ranks appears in several rows (e.g., SST-2 beyond $r{=}8$ for the full $\{W_q,W_k,W_v,W_o,W_i\}$ subset). This is consistent with redundancy increasing curvature and optimization noise, and with limited-data regimes where extra degrees of freedom fit nuisance variation rather than signal. In short, after the task subspace is covered, larger $r$ can slightly hurt conditioning and generalization without expanding useful capacity.

Which weights are included matters more than pushing $r$ very high. Expanding the trained subset to cover both attention and feed-forward projections yields larger gains than increasing rank on a narrow subset. For instance, on SST-2, $\{W_q,W_k,W_v,W_o,W_i\}$ at $r{=}8$ (94.72) outperforms $\{W_q,W_k\}$ even at $r{=}128$ (91.88). The reason is span: adding $W_o$ and $W_i$ exposes complementary directions of the task subspace, increasing the effective rank seen by the output even when $r$ is modest.

A practical takeaway is that the   rank lies in a narrow, task-dependent window. Across the four GLUE tasks, ranks in the range $r\in[4,16]$ are within a few tenths of the best result, with QQP preferring $r{\approx}4$, SST-2 and MRPC preferring $r{\approx}8$, and QNLI tolerating up to $r{\approx}32$–$64$ for small additional gains. A simple rule is to choose the smallest $r$ whose cumulative explained variance of the representation kernel exceeds a target threshold (e.g., $90\%$); this selects the rank where useful directions are covered while avoiding the redundancy that causes late-stage saturation.

Finally, when the slice rank equals the full layer dimension, slice training is equivalent to full fine-tuning. The table shows this level of capacity is unnecessary in practice: moderate ranks already align with the task subspace and deliver near-peak accuracy at a fraction of the trainable parameters.

\begin{figure*}[!htb]

 \begin{center}

      \includegraphics[width=0.999941\linewidth]{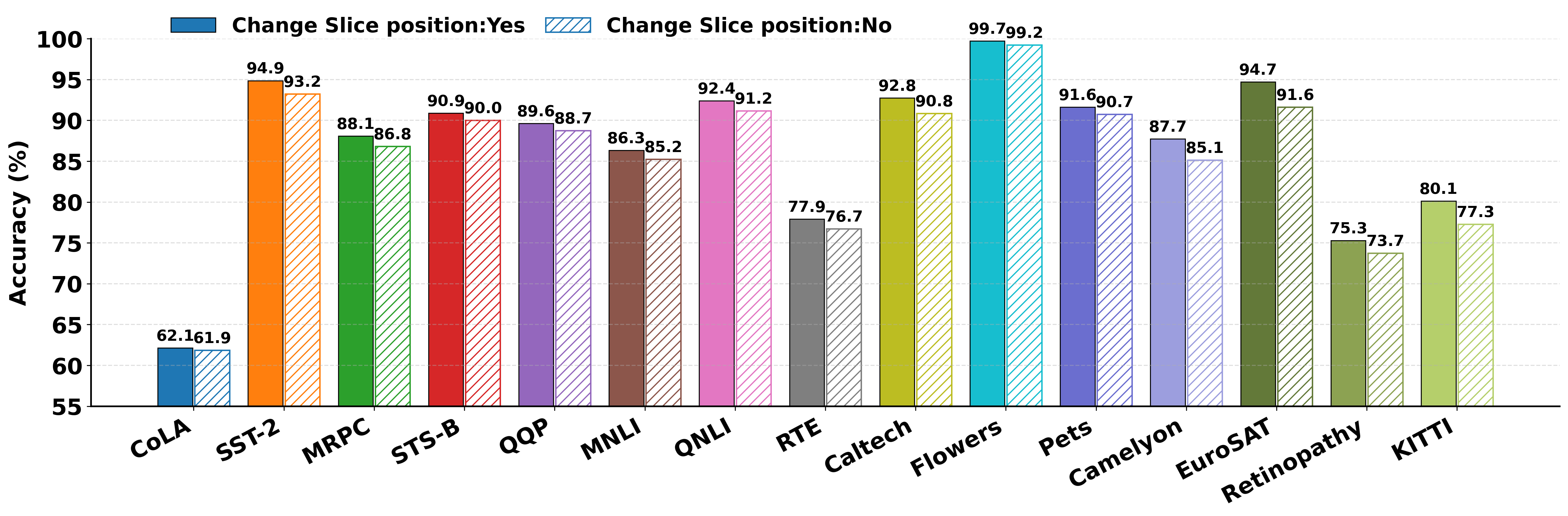}
   \end{center}
  \caption{SliceFine with static and dynamic slices. In the static variant, the slice position remains fixed throughout training. In the dynamic variant, the active slice shifts every $N$ steps, accumulating task-aligned updates without adding adapter parameters.}

    \label{fig:static_dynamic}

\end{figure*}

\subsection{Slice position: static vs.\ dynamic} \label{ab:staticvsdynamic}
We compare two policies for a fixed slice rank and training budget: (i) \emph{static}—train a single slice at a fixed position for all steps, and (ii) \emph{dynamic}—shift the active slice every $N$ steps while freezing previously updated entries (Fig.~\ref{fig:static_dynamic}). Both settings use identical optimizers, batches, and epochs; the number of trainable parameters and optimizer state at any time are the same.

Figure~\ref{fig:static_dynamic} shows that moving the slice yields consistent gains on both text and vision tasks. On GLUE, the dynamic policy improves accuracy over the static policy on \textsc{SST-2} (+1.7), \textsc{MRPC} (+1.3), \textsc{STS-B} (+0.9), \textsc{QQP} (+0.9), \textsc{MNLI} (+1.1), \textsc{QNLI} (+1.2), and \textsc{RTE} (+1.2), with a small change on \textsc{CoLA} (+0.2), for an average gain of \(\approx\)~\(+1.1\) points. On VTAB-1k style image benchmarks and KITTI, the gains are larger: \textsc{Caltech} (+2.0), \textsc{Flowers} (+0.5), \textsc{Pets} (+0.9), \textsc{Camelyon} (+2.6), \textsc{EuroSAT} (+3.1), \textsc{Retinopathy} (+1.6), and \textsc{KITTI} (+2.8), averaging \(\approx\)~\(+1.9\) points.

Switching positions exposes the optimizer to additional task-aligned directions while keeping the per-step compute and memory unchanged. This better \emph{coverage} is most helpful on datasets with higher intrinsic dimension (e.g., EuroSAT, KITTI, Camelyon). When the task is already nearly saturated (e.g., Flowers), dynamic and static policies perform similarly. These results are consistent with the global-ticket view: each visited slice contributes new directions; sweeping across positions gradually spans the task subspace, yielding steady improvements without increasing parameter count.

\section{Baselines} \label{app:baseline}
We compare SliceFine against a broad set of PEFT methods that span the main design choices in the literature. The goal is to test whether training a small slice can stand beside the strongest alternatives under a fair, controlled setup.

We include widely used and recent state-of-the-art methods with public code or clear descriptions. 
All baselines use the same backbones, data splits, tokenization, precision (BF16), optimizer, batch size, and training schedule. 
Capacity-defining hyperparameters (e.g., rank, bottleneck width, prompt length) are tuned on a small grid under a matched trainable-parameter budget; early stopping is applied on the same validation splits, and we report the mean over three seeds. 
This protocol avoids favoring any single family and makes results comparable across methods.

We group baselines into families as follows.
First, adapter-style methods that insert small bottlenecks: Adapters \citep{houlsby2019parameter}, MAM-Adapter \citep{he2021towards}, and PROPETL \citep{zeng2023one}. 
Prompt/prefix methods optimize continuous prompts while freezing backbone weights: Prompt Tuning \citep{lester2021power} and Prefix-Tuning \citep{li2021prefix}. 
Element-wise updates include BitFit \citep{zaken2021bitfit} and LayerNorm tuning \citep{zhao2023tuning}. 
We also evaluate (IA)\textsuperscript{3} \citep{liu2022few} and Diff-Pruning \citep{guo2020parameter}.

Second, low-rank adaptation methods: LoRA \citep{hu2021lora} and extensions AdaLoRA \citep{zhang2023adaptive}, LoKr \citep{edalati2022krona}, LoRAFA \citep{zhang2023lora}, LoRA-XS \citep{balazy2024lora}, LoHa \citep{hyeon2021fedpara}, as well as VeRA \citep{kopiczko2023vera}, HRA \citep{yuan2024bridging}, MiSS \citep{kang2024balancing}, and SHiRA \citep{bhardwaj2024rapid}.

Finally, recent strong baselines tailored to LLM and vision(-language) settings, including Propulsion \citep{kowsher2024propulsion}, RoCoFT \citep{kowsher2024rocoft}, SFT \citep{ansell2024scaling}, and mixture-style adaptation \citep{wu2024mixture}. 
Together, these baselines provide a broad and competitive evaluation landscape across adapters, prompts/prefixes, bias/LayerNorm tuning, low-rank methods, and hybrid strategies.

\section{Dataset Details} \label{app:dataset}
Our evaluation spans diverse datasets to ensure both coverage and robustness. 
Below we summarize the key properties and motivations for each group of tasks.

\textbf{Natural language understanding (GLUE).} 
The GLUE benchmark \citep{wang2018glue} includes eight tasks: 
CoLA (linguistic acceptability), SST-2 (sentiment classification), MRPC (paraphrase detection), 
STS-B (semantic similarity), QQP (duplicate-question detection), MNLI (natural language inference), 
QNLI (question answering inference), and RTE (entailment). 
The dataset sizes range from $\sim$10K to 400K examples, with metrics including accuracy, F1, and correlation.

\textbf{Commonsense reasoning.} 
To train Commonsense reasoning, we use a commonsense-170k dataset from \cite{hu2023llm} and then we evaluate on BoolQ \citep{clark2019boolq} (boolean question answering), PIQA \citep{bisk2020piqa} (physical commonsense), 
SIQA \citep{sap2019socialiqa} (social commonsense), HellaSwag \citep{zellers2019hellaswag} (contextual plausibility), 
WinoGrande \citep{sakaguchi2021winogrande} (coreference disambiguation), 
ARC-Easy/ARC-Challenge \citep{clark2018think} (scientific knowledge QA), and 
OpenBookQA (OBQA) \citep{OpenBookQA2018} (open-domain science QA). 
These datasets range from 3K to 400K samples and test a model’s ability to reason beyond surface-level patterns.

\textbf{Mathematical reasoning.} 
For training mathematical reasoning, we use a math-10k dataset from \cite{hu2023llm} and then we evaluate on MultiArith \citep{roy2016solving}, AddSub \citep{hosseini2014learning}, SingleEq \citep{koncel2015parsing}, 
SVAMP \citep{patel2021nlp}, and GSM8K \citep{cobbe2021training}. 
These benchmarks evaluate symbolic manipulation, arithmetic reasoning, and multi-step problem solving, 
with dataset sizes ranging from 1K to 8.5K examples.

\textbf{Image recognition.} 
From VTAB-1K \citep{zhai2019large}, we use Caltech101 (object recognition), 
Flowers102 (fine-grained classification), Oxford Pets (species/breed recognition), 
Camelyon (medical histopathology), EuroSAT (satellite imagery), 
Retinopathy (diabetic retinopathy detection), and KITTI-Dist (autonomous driving). 
Each dataset contains $\sim$1K labeled examples for training, making them a strong test for low-data transfer learning.

\textbf{Video recognition.} 
For temporal reasoning, we evaluate on UCF101 \citep{soomro2012ucf101} (human action recognition), 
Kinetics-400 \citep{zisserman2017kinetics} (large-scale action classification with 400 categories), 
and HMDB51 \citep{kuehne2011hmdb} (human motion recognition). 
These datasets test models on spatio-temporal understanding and long-range context.

These datasets provide complementary challenges across language, vision, and video domains, 
ensuring that our evaluation probes both reasoning capability and generalization ability.

\section{Models} \label{app:models}
We select representative pretrained backbones across language, vision, and video domains to evaluate the proposed slice-based fine-tuning method. 
Our choice of models follows two principles: (i) using widely adopted baselines that allow fair comparison with prior PEFT studies, and (ii) including both medium- and large-scale models to test scalability. 

For \textbf{language modeling and reasoning}, we fine-tune medium- to large-scale LLMs, including \texttt{LLaMA-3B} \citep{dubey2024llama}, \texttt{Gemma-3 12B} \citep{team2025gemma}, and \texttt{DeepSeek-RI-8B} \cite{guo2025deepseek}. 
These models provide complementary architectural diversity (Meta, Google, and DeepSeek releases) and represent current practice in instruction tuning and reasoning evaluation. 

For \textbf{natural language understanding (GLUE)}, we use \texttt{RoBERTa-base} (125M parameters) and \texttt{RoBERTa-large} (355M parameters) \citep{liu2019roberta}. 
These models are standard baselines in PEFT literature and allow direct comparison with methods such as LoRA, Adapters, RoCoFT, and Prompt Tuning. 

For \textbf{vision tasks}, we adopt \texttt{ViT-Base-Patch16-224} \citep{wu2020visual}, a widely used Vision Transformer with 86M parameters. 
This model is commonly used in VTAB-1K evaluations and provides a strong backbone for testing PEFT under limited data conditions. 

For \textbf{video tasks}, we use \texttt{VideoMAE-base} \citep{tong2022videomae}, a transformer-based video representation model pretrained on large-scale action datasets. 
VideoMAE is particularly suitable for evaluating parameter-efficient methods on spatio-temporal recognition problems such as UCF101, Kinetics-400, and HMDB51.

This diversity ensures that our results are not tied to a single architecture or domain, but instead reflect the general applicability of the Winner Slice Theorem across modalities.

\section{Hyperparameter} \label{app:hyperparameter}
For all experiments, we train with AdamW (\(\beta_1{=}0.9,\beta_2{=}0.999\)), cosine decay with a linear warmup of 3\% of total steps, gradient clipping at 1.0, BF16 precision, and weight decay \(0.01\) for text models and \(0.05\) for vision/video. For LLMs (LLaMA, DeepSeek, Gemma) on \texttt{math\_10k} we use batch size \(1\) with gradient accumulation \(4\) (effective batch \(4\)), \(1\) epoch, max sequence length \(1024\), learning rate \(5\times10^{-5}\); we then evaluate on all math–reasoning sets without further tuning. For \texttt{commonsense170k} we use the same batch/accumulation/epochs and sequence length, with learning rate \(1\times10^{-4}\). For all image datasets we train ViT backbones for \(10\) epochs with batch size \(16\), input resolution \(224\times224\), learning rate \(3\times10^{-4}\), and RandAugment kept at default; layer-wise LR decay is not used. For GLUE we train \(4\) epochs with batch size \(32\), max sequence length \(256\), learning rate \(2\times10^{-5}\) (RoBERTa-base/large share the same schedule). For video classification we use batch size \(16\), \(2\) epochs, \(16\) frames at \(224\times224\) with sampling stride \(4\), and learning rate \(1\times10^{-4}\). Dropout and LayerNorm parameters follow the backbone defaults; no additional $L_2$ regularization. For all downstream tasks, we set the switch interval to \(N=500\) steps and shift the active slice position every \(N\) steps until the epoch completes or early stopping is triggered.

\section{Detailed Results} \label{app:details_resutls}

In this appendix, we provide extended results that complement the main text. 
These include additional commonsense and mathematical reasoning benchmarks with larger LLMs, as well as further breakdowns on GLUE with RoBERTa-large. 
All SliceFine variants are shown as shaded rows. Best scores are highlighted in \textcolor{blue}{blue}, and second best in \textcolor{orange}{orange}.

Table~\ref{tab:LLM_results_details} reports results on commonsense and math reasoning benchmarks using \texttt{Gemma-3 12B} and \texttt{DeepSeek-R1-8B}. 
The trends observed in the main text (with LLaMA-3B) remain consistent at larger scales. 
Slice-based fine-tuning achieves accuracy comparable to or exceeding strong baselines such as LoRA, AdaLoRA, RoCoFT, and HRA. 
In particular, Slice-5RC achieves $83.35\%$ average accuracy on commonsense reasoning and $83.97\%$ on math reasoning with Gemma-3 12B, outperforming AdaLoRA while using substantially fewer trainable parameters. 
For DeepSeek-R1-8B, SliceFine variants again consistently surpass low-parameter baselines, highlighting the robustness of the \emph{local winner} property across architectures and scales.

In addition, to assess natural language understanding, we fine-tune \texttt{RoBERTa-base} on GLUE (Table~\ref{tab:roberta_base_glue}). Across tasks, slice training consistently outperforms classic baselines and matches or exceeds state-of-the-art PEFT methods. For instance, with RoBERTa-large, Slice-5RC achieves $86.35\%$ average accuracy, exceeding LoRAFA  ($85.55\%$) and matching heavier approaches such as SFT ($85.61\%$). 
Remarkably, Slice-1R achieves $84.79\%$, already surpassing AdaLoRA ($84.71\%$) with fewer than $0.1$M trainable parameters.

Table~\ref{tab:roberta_large_glue} provides detailed results on the GLUE benchmark with \texttt{RoBERTa-large}. 
Metrics follow standard conventions: CoLA (MCC), SST-2 (accuracy), MRPC/QQP (accuracy/F1), STS-B (Pearson/Spearman), and MNLI/QNLI/RTE (accuracy). 
Here, SliceFine continues to deliver strong results. 
Slice-5RC achieves $89.60\%$ average, outperforming AdaLoRA ($88.93\%$) and matching or surpassing other state-of-the-art methods such as MoSLoRA and PROPETL. 
Even a single slice (Slice-1R) achieves $86.94\%$, already stronger than classical baselines like BitFit ($86.67\%$) and Prefix Tuning ($86.11\%$).

These extended results further reinforce the conclusions of the main paper:  
(i) slices are consistently competitive across LLMs of different sizes and models,  
(ii) performance gains hold across both commonsense and mathematical reasoning tasks,  
and (iii) the efficiency–performance trade-off is favorable, with slices using fewer than $0.5$M trainable parameters while rivaling or surpassing state-of-the-art PEFT methods. 
This consistency across datasets, backbones, and domains highlights the universality of the Winner Slice Theorem in practice.

\begin{table*}[t]
\centering

\setlength{\tabcolsep}{3pt}
\begin{adjustbox}{max width=\textwidth}
\begin{tabular}{
l l
S[table-format=2.2]
S[table-format=2.2] S[table-format=2.2] S[table-format=2.2] S[table-format=2.2] S[table-format=2.2] S[table-format=2.2] S[table-format=2.2] S[table-format=2.2] S[table-format=2.2]
S[table-format=2.2] S[table-format=2.2] S[table-format=2.2] S[table-format=2.2] S[table-format=2.2] S[table-format=2.2]
}
\toprule
 &  &  &
\multicolumn{8}{c}{\textbf{Commonsense Reasoning}} & \multicolumn{6}{c}{\textbf{Math Reasoning}} \\
\cmidrule(lr){4-12}\cmidrule(lr){13-18}
\textbf{LLM} & \textbf{Method} & {\#TTPs} & \multicolumn{1}{c}{BoolQ} & \multicolumn{1}{c}{PIQA} & \multicolumn{1}{c}{SIQA} & \multicolumn{1}{c}{H.Sw.} & \multicolumn{1}{c}{W.Gra.} & \multicolumn{1}{c}{ARCe} & \multicolumn{1}{c}{ARCc} & \multicolumn{1}{c}{OBQA} & \multicolumn{1}{c}{Avg.} &
\multicolumn{1}{c}{M.Ar.} & \multicolumn{1}{c}{G.8K} & \multicolumn{1}{c}{A.S.} & \multicolumn{1}{c}{\text{Se.Eq}} & \multicolumn{1}{c}{S.MP} & \multicolumn{1}{c}{Avg.} \\
\midrule

\multirow{1}{*}{\rotatebox{270}{Gemma-3$_{12B}$}} & Prefix & 57.24 & 70.60 & 82.62 & 80.29 & 79.89 & 75.78 & 76.16 & 60.70
  & 73.27 & 74.91 & 88.87 & 73.12 & 84.35 & 82.46 & 56.48 & 77.06\\
  
& AdaLoRA & 52.30 & \best{73.93} & 83.24 & 81.60 & 91.62 & 81.09 & 81.21 & 64.72
  & 79.55 & 79.62 & 92.11 & 82.41 & 89.48 & 84.84 & 64.48 & 82.66\\
& VeRA & 25.00 & 73.32 & 83.74 & 80.89 & 91.67 & 81.31 & 80.90 & 64.17
  & 79.02 & 79.38 & 92.72 & 81.09 & 90.09 & 87.13 & 63.97  & 83.00\\
& LoRA & 21.79 & 73.12 & \second{84.15} & 81.20 & 91.81 & 81.71 & 81.52 & 64.69
  & \best{81.36}  & 79.75& 92.01 & 80.79 & 89.28 & \best{87.87} & 64.19 & 82.83\\
& RoCoFT & 10.89 & 73.42 & 83.84 & 82.01 & 91.16 & 81.45 & 81.56 & 64.72
  & 80.56 & 79.84 & 91.91 & \second{82.38}& 89.88 & 87.08 & 65.53 & 83.26\\
& HRA & \second{ 9.29} & 73.22 & 83.95 & 81.80 & 91.31 & 81.34 & \best{82.38  }& 64.84
  & 80.50  & 79.81& 92.32 & 81.50 & 89.68 & 87.08 & 65.39 & 83.19

\\
\cline{2-18}
\vspace{-0.3cm}
\\
  \rowcolor{oursbg} & \textit{SliceFine-1R} & \best{ 2.76} & 72.87 & \best{84.20} & 80.29 & 90.41 & 79.79 & 79.86 & 63.50
  & 80.94 & 78.98 & 91.27 & 80.04 & 88.00 & 83.36 & 64.17 & 81.37\\
\rowcolor{oursbg} & \textit{SliceFine-1C} & \best{ 2.76} & 72.32 & 83.31 & 81.35 & 91.61 & 80.84 & 80.92 & 62.34
  & 79.98 & 79.08 & 91.48 & 81.10 & 89.17 & 84.49 & 65.02  & 82.25\\

\rowcolor{oursbg} & \textit{SliceFine-1RC} & \best{ 2.76} & 72.18 & 83.67 & \second{82.60}& 91.96 & 79.10 & \second{82.20} & 62.36
  & \second{81.25 }& 79.44  & 92.85 & \best{82.39}& 90.59 & \second{87.86  }& 64.05& 83.54\\

\rowcolor{oursbg} & \textit{SliceFine-5R} & 10.89 & 72.92 & 83.60 & 82.58 & \second{91.99 } & \second{82.11 } & 82.14 & \second{65.31}
  & 81.19 & \second{80.22} & \best{92.94} & 82.32 & \second{90.52} & 87.80 & \second{66.00 } & \second{83.91}\\
\rowcolor{oursbg} & \textit{SliceFine-5C} & 10.89 & \second{73.93} & 84.04 & \best{82.62} & 91.39 & 81.53 & 81.60 & 64.89
  & 80.66 & 79.95 & 92.27 & 81.78 & 89.93 & 87.22 & 65.57 & 83.35\\
\rowcolor{oursbg} & \textit{SliceFine-5RC} & 10.89 & 72.96 & 83.65 & \best{82.62} & \second{92.05} & \best{82.11 }& 82.19 & \best{65.35 }
  & 81.24 & \best{80.27}  & \second{92.93} & 82.37 & \best{90.57 }& 87.85 & \best{66.04 }& \best{83.95}\\

\midrule

\multirow{1}{*}{\rotatebox{270}{DeepSeek-$_{8B}$}} & Prefix & 38.09 & 71.15 & 81.71 & 78.76 & 79.78 & 75.21 & 74.70 & 60.09
  & 73.59  & 74.37& 88.81 & 73.38 & 84.85 & 82.64 & 54.71  & 76.88\\
& AdaLoRA & 33.05 & 71.66 & 82.21 & 79.37 & 91.65 & 79.17 & 79.37 & 62.12
  & 77.34 & 77.86 & \best{93.56} & 81.71 & 89.42 & 84.58 & 62.93 & 82.53\\
& VeRA & 15.05 & 70.14 & 81.40 & 80.59 & \second{92.44 }& 79.38 & 78.46 & 61.91
  & 77.04& 77.98 & 91.39 & 80.39 & 89.22 & 87.19 & 62.32  & 82.10\\
& LoRA & 13.77 & 71.76 & 82.11 & 79.78 & 91.76 & \best{80.28} & 79.68 & 62.32
  & 77.75 & 78.19 & 92.09 & 81.00 & 87.70 & 83.46 & 62.52 & 81.35\\
& RoCoFT & 6.90& \second{72.88 } & \best{82.72} & 81.10 & 91.47 & 79.68 & 79.47 & 62.32
  & 79.58 & 78.65& 93.51 & 82.64 & 89.32 & 84.88 & 64.45  & 82.96\\
& HRA & \second{6.25}& 72.67 & 82.03 & 80.90 & 91.77 & 79.47 & 79.39 & 62.64
  & \second{80.13} & 78.58 & 93.21 & 82.94 & \best{90.79}& 84.53 & \second{64.83} & 83.27
  
\\
\cline{2-18}
\vspace{-0.3cm}
\\

\rowcolor{oursbg} & \textit{SliceFine-1R} &\best{ 2.18} & 71.34 & 81.24 & 80.40 & 90.79 & 78.00 & 77.85 & 61.24
  & 77.85 & 77.34 & 92.47 & 81.13 & 89.25 & 82.01 & 63.03 & 81.58\\
\rowcolor{oursbg} & \textit{SliceFine-1C} &\best{ 2.18} & 71.28 & 81.31 & 80.45 & 91.99 & 79.04 & 77.88 & 61.05
  & 78.89 & 77.74 & 92.71 & 81.22 & 88.42 & 81.14 & 62.86 & 81.27\\

\rowcolor{oursbg} & \textit{SliceFine-1RC} &\best{ 2.18} & 71.43 & 81.62 & 79.73 & \best{92.45 }& 80.09 & 77.13 & 62.04
  & \best{80.14}& 78.08 & 93.22 & 82.54 & 88.84 & 82.51 & 62.88  & 82.00 
\\
\rowcolor{oursbg} & \textit{SliceFine-5R} & 6.89 & 72.37 & \second{82.56 } & \second{81.67} & 92.38 & 80.23 & \second{80.07} & \second{62.99}
  & 80.08& \second{79.08 } & 93.14 & \second{84.48} & 90.77 & \second{87.44} & \second{64.83} & \second{84.13}\\  
\rowcolor{oursbg} & \textit{SliceFine-5C} & 6.89 & \best{73.18} & 82.01 & 81.13 & 91.77 & 79.71 & 79.55 & 62.58
  & 79.56  & 78.69& \second{93.52} & 82.93 & 90.18 & 84.87 & 64.41 & 83.18\\
\rowcolor{oursbg} & \textit{SliceFine-5RC} & 6.89 & 72.41 & 82.60 & \best{81.71} & 92.43 & \second{80.27  }& \best{80.12} & \best{63.02}
  & 80.12 & \best{79.08 }& 93.20 & 84.53 & \second{90.82 }& \best{87.49} & \best{64.86  }& \best{84.18}\\

\midrule

\end{tabular}
\end{adjustbox}
\caption{Commonsense and math reasoning with Gemma-3-12B and DeepSeek-R1-8B.
SliceFine (shaded) rivals or surpasses baselines such as LoRA and AdaLoRA while using far fewer trainable parameters (\#TTPs). 
Best results are in \textcolor{blue}{blue}, second-best in \textcolor{orange}{orange}.}
\label{tab:LLM_results_details}
\end{table*}

\begin{table*}[t]
\centering

\setlength{\tabcolsep}{4pt}
\begin{adjustbox}{max width=\textwidth}
\begin{tabular}{
l  l  l
S[table-format=2.2]
S[table-format=2.2]
c
c
c
S[table-format=2.2]
S[table-format=2.2]
S[table-format=2.2]
S[table-format=2.2]
}
\toprule
\textbf{LM} & \textbf{PEFT} & \textbf{\#TTPs} &
\textbf{CoLA} & \textbf{SST-2} & \textbf{MRPC} & \textbf{STS-B} & \textbf{QQP} & \textbf{MNLI} & \textbf{QNLI} & \textbf{RTE} & \textbf{Avg.} \\
\midrule

\multirow{18}{*}{\rotatebox{270}{\textbf{RoBERTa\textsubscript{Large}}}}

& FT                 & 355.3M & 65.92 & 95.03 & \best{92.00}/\best{94.00} & 91.98/92.14 & 91.22/\second{88.27} & 89.13 & 92.72 & 81.12 & 88.50 \\ \cmidrule(lr){2-12}
& Adapter\textsuperscript{S} & 19.8M & 65.34 & 95.90 & 89.58/90.38 & {92.62}/92.14 & {91.28/87.42} & 90.51 & 94.62 & 85.45 & 88.66 \\
& Prompt-tuning      & 1.07M  & 60.97 & 94.27 & 73.50/76.13 & 78.10/78.59 & 81.09/75.26 & 68.46 & 89.36 & 60.33 & 76.01 \\
& Prefix-tuning      & 2.03M  & 59.48 & 95.64 & 88.29/89.70 & 91.04/91.43 & 88.96/85.65 & 88.85 & 93.05 & 73.80 & 85.99 \\
& (IA)\textsuperscript{3}    & 1.22M  & 60.73 & 94.34 & 86.05/87.31 & 92.36/86.11 & 89.32/85.96 & 88.40 & 94.69 & 81.38 & 86.06 \\
& BitFit             & 0.22M & 67.12 & 95.77 & \second{91.16}/91.79 & 91.81/\best{93.87} & 89.62/86.49 & 90.16 & 94.81 & 88.01 & 88.67 \\
& LoRA               & 1.84M  & 64.20 & {96.20} & 87.32/87.96 & 91.37/91.88 & 90.53/86.72 & 90.92 & {94.90} & 80.19 & 87.47 \\
& AdaLoRA            & 2.23M  & 65.81 & 94.71 & 89.21/90.40 & 91.81/91.88 & 90.00/86.20 & 90.08 & 95.12 & 77.99 & 87.56 \\
& MAM Adapter        & 4.20M  & {66.98} & 95.36 & 89.73/92.20 & 92.73/92.10 & 90.43/86.53 & 91.12 & 94.34 & 87.09 & 88.96 \\
& PROPETL\textsubscript{Adapter} & 5.40M & 65.91 & 95.78 & 89.93/91.33 & 91.96/91.44 & 90.81/87.35 & {91.30} & 94.75 & 88.14 & 88.97 \\
& PROPETL\textsubscript{Prefix}  & 26.8M& 62.62 & 95.93 & 90.04/91.60 & 91.11/90.86 & 89.10/86.44 & 90.44 & 94.38 & 79.97 & 87.50 \\
& PROPETL\textsubscript{LoRA}    & 4.19M & 61.94 & 96.21 & 87.34/89.37 & 91.48/90.90 & \second{91.36}/\best{88.43} & 90.86 & 94.74 & 83.13 & 87.80 \\
& MoSLoRA            & 3.23M  & {67.65} & 96.62 & 89.55/\second{92.66} & 90.54/91.98 & 90.39/87.31 & 90.27 & 94.78 & 82.18 & 88.54 \\
& RoCoFT             & 0.67M & 67.55 & {96.59} & 89.59/91.51 & {92.75/92.01} & {91.19/87.62} & {91.28} & 94.79 & 87.84 & 89.34 
\\
\cline{2-12}
\vspace{-0.3cm}
\\
\rowcolor{oursbg}
& \textit{SliceFine-1R}  & 0.22M & 65.38 & 95.09 & 88.06/90.21 & 92.01/90.22 & \best{91.74}/86.72 & 90.17 & 94.41 & 86.27 & 88.21 \\
\rowcolor{oursbg}
& \textit{SliceFine-1C}  & 0.22M & 64.98 & 95.28 & 88.42/89.99 & 91.26/90.95 & 90.15/85.89 & 90.73 & 93.78 & 86.57 & 88.00 \\
\rowcolor{oursbg}
& \textit{SliceFine-1RC} & 0.22M & 67.68 & 95.52 & 89.91/90.32 & 91.89/90.59 & 90.17/85.98 & 90.72 & 93.48 & 87.05 & 88.48 \\
\rowcolor{oursbg}
& \textit{SliceFine-5R}  & 1.11M & 67.23 & \best{96.68} & 90.07/90.37 & \second{93.26}/92.69 & 91.11/87.11 & \second{91.88} & \best{95.57} & \best{88.37} & \second{89.49} \\
\rowcolor{oursbg}
& \textit{SliceFine-5C}  & 1.11M & \second{67.76} & 96.33 & 90.23/89.92 & 93.22/\second{92.96} & 91.16/87.89 & 91.22 & \second{95.18} & 87.48 & 89.40 \\
\rowcolor{oursbg}
& \textit{SliceFine-5RC} & 1.11M & \best{67.98} & \second{96.63} & 90.90/90.49 & \best{93.89}/92.82 & 91.29/87.13 & \best{91.69} & 94.56 & \second{88.20} & \best{89.60} \\
\bottomrule
\end{tabular}
\end{adjustbox}
\caption{\textbf{RoBERTa-large on GLUE.} CoLA uses MCC; SST-2 accuracy; MRPC/QQP accuracy/F1; STS-B Pearson/Spearman; MNLI/QNLI/RTE accuracy. Best in
\textcolor{bestc}{\textbf{blue}}, second best in \textcolor{secc}{\textbf{orange}}. \textit{SliceFine} rows are shaded.}
\label{tab:roberta_large_glue}
\end{table*}

\begin{table*}[t]
\centering

\setlength{\tabcolsep}{6pt}
\begin{adjustbox}{max width=0.95\textwidth}
\begin{tabular}{
l  l  l
S[table-format=2.2]
S[table-format=2.2]
c
c
c
S[table-format=2.2]
S[table-format=2.2]
S[table-format=2.2]
S[table-format=2.2]
}
\toprule
\textbf{LM} & \textbf{PEFT} & \textbf{\#TTPs} &
\textbf{CoLA} & \textbf{SST-2} & \textbf{MRPC} & \textbf{STS-B} & \textbf{QQP} & \textbf{MNLI} & \textbf{QNLI} & \textbf{RTE} & \textbf{Avg.} \\
\midrule

\multirow{24}{*}{\rotatebox{270}{\textbf{Roberta\textsubscript{Base}}}} & FT & 124.6M & 59.90 & 92.64 & 85.22/87.85 & 89.98/90.63 & \best{90.25}/86.59 & 86.17 & 90.71 & 72.70& 84.80
 \\ \cline{2-12}
& Adapter\textsuperscript{S} & 7.41M & 61.08 & 93.64 & 89.82/91.13 & 89.94/89.78 & 90.02/87.38 & 86.71 & \second{92.20} & 73.98&85.97
\\
& Prompt tuning & 0.61M & 49.47 & 92.14 & 70.54/81.46 & 81.97/83.39 & 83.04/78.22 & 81.06 & 79.99 & 57.90&76.29
 \\
& Prefix-tuning & 0.96M & 59.63 & 93.63 & 84.20/85.33 & 88.44/88.51 & 88.01/84.15 & 85.36 & 90.86 & 54.77&82.08 
 \\
& (IA)\textsuperscript{3} & 0.66M & 58.42 & 93.96 & 83.10/85.18 & 90.08/90.15 & 87.85/84.16 & 84.12 & 90.66 & 70.89&83.51
\\
& BitFit & 0.083M & 60.17 & 91.35 & 87.34/88.72 & 90.38/90.34 & 87.12/83.99 & 84.48 & 90.84 & \best{78.07}&84.91 
 \\
 &RoCoFT & 0.249M & 62.10 & 93.89& 87.89/\second{89.96} & 90.17/90.27& 89.85/\second{86.27} & 85.31& 91.69 &76.73&85.83\\
& LoRA & 0.89M & 60.28 & 93.02 & 86.20/88.32 & 90.60/90.54 & 89.09/84.97 & 86.05 & 92.10 & 74.62&85.07

\\
& AdaLoRA & 1.03M & 60.00 & 94.15 & 86.25/88.44 & 90.47/\best{90.86} & 88.82/84.63 & 86.71 & 91.16 & 70.29&84.71
 \\
& MAM Adapter & 1.78M & 58.35 & 93.98 & 87.23/88.51 & \second{91.04}/90.60 & 88.40/82.93 & \second{87.02} & 90.04 & 72.83&84.63
 \\
& PROPETL \textsubscript{Adapter} & 1.87M & \best{64.22} & 93.61 & 86.72/88.25 & 90.14/90.65 & 89.34/85.91 & 86.41 & 91.39 & 75.82&85.68 
 \\
& PROPETL \textsubscript{Prefix} & 10.49M & 60.52 & 93.29 & 87.09/87.81 & 90.54/90.19 & 88.22/85.06 & 85.79 & 91.19 & 63.16&83.90
 \\
& PROPETL \textsubscript{LoRA} & 1.77M & 58.27 & 94.54 & 87.04/89.06 & 90.76/90.19 & 88.70/85.73 & 86.94 & 91.73 & 66.94& 84.54 
\\ 
& MoSLoRA & 1.67M  & 60.79 & 93.73 & 86.51/87.80 & 90.45/89.39 & 89.16/86.12 & \best{87.39} & 90.12 & 75.13 &85.14 \\
&LoRA-XS & 0.26M & 58.46 & 92.84&87.03/87.62&89.93/89.66&87.03/84.16&84.89&90.01 & 76.58& 84.38\\
& VeRA& 0.043M & 60.76 & 94.32& 85.94/87.92&89.67/89.16&87.72/85.52&85.90&89.75& 75.66& 84.76\\
& LoRAFA& 0.44M &60.30 &93.49&87.94/\best{90.15}&90.20/\second{90.78}&88.72/85.61&85.66&91.59&76.58& 85.55\\
& SFT& 0.90M& \second{63.99} &94.63&87.61/89.13&89.20/88.97&86.87/84.75&86.75&91.84&\second{77.95}& 85.61\\
&Diff Pruning& 1.24M & 62.92 &93.47&88.11/89.70&89.64/90.43&88.27/85.73&85.64&91.88&77.82& 85.78
\\
\cline{2-12}
\vspace{-0.3cm}
\\
\rowcolor{oursbg}
& \textit{SliceFine-1R} & 0.083M & 60.44 & 92.36 & 86.11/88.72 & 89.67/88.23 & \second{90.20}/85.87 & 84.41 & 91.26 & 75.38 & 84.79  \\
\rowcolor{oursbg}
& \textit{SliceFine-1C}  & 0.083M & 60.07 & 92.54 & 86.46/88.50 & 88.94/88.94 & 88.64/85.05 & 85.87 & 90.66 & 75.64 &84.66  \\
\rowcolor{oursbg}
& \textit{SliceFine-1RC} & 0.083M & 62.56 & 92.77 & 87.92/88.83 & 89.56/88.59 & 88.66/85.14 & 85.86 & 90.37 & 76.06 & 85.12  \\

\rowcolor{oursbg}
& \textit{SliceFine-5R}  & 0.415M & 62.15 & \best{94.87} & 88.08/88.88 & 90.89/90.64 & 89.58/86.26 & 86.48 & \best{92.39} & 77.91 & \second{86.19}  \\
\rowcolor{oursbg}
& \textit{SliceFine-5C}  & 0.415M & 62.64 & 94.53 & \second{88.23}/88.44 & 90.85/90.91 & 89.63/87.03 & 86.33 & 92.01 & 77.13 & 86.16  \\
\rowcolor{oursbg}
& \textit{SliceFine-5RC} & 0.415M & 62.84 & \second{94.81} & \best{88.89}/89.00 & \best{91.51}/90.77 & 89.76/\best{86.28} & 86.77 & 91.41 & 77.76 & \best{86.35}  \\

\bottomrule
\end{tabular}
\end{adjustbox}
\caption{\textbf{RoBERTa-base on GLUE.} CoLA uses MCC; SST-2 accuracy; MRPC/QQP accuracy/F1; STS-B Pearson/Spearman; MNLI/QNLI/RTE accuracy.}
\label{tab:roberta_base_glue}
\end{table*}

\section{Random Mask} \label{app:random_slice}
\begin{table*}[ht]
\centering
\scalebox{.80}{
\setlength{\tabcolsep}{4pt}
\begin{tabular}{l|c|cccccccc} 
\hline
\textbf{LM} & \textbf{\# TTPs} & \textbf{CoLA} & \textbf{SST2} & \textbf{MRPC} & \textbf{STS-B} & \textbf{QQP} & \textbf{MNLI} & \textbf{QNLI} & \textbf{RTE}  \\
\hline

\multirow{1}{*}{\textbf{Roberta\textsubscript{Base}}} & 12.4M 
& 63.81 
& 94.85 
& 88.66/90.77 
& 90.81/89.85 
& 88.99/87.31 
& 87.44 
& 92.92 
& 79.62 \\

\multirow{1}{*}{\textbf{Roberta\textsubscript{Large}}}  & 35.5M 
& 65.70 
& 96.11 
& 90.72/91.88 
& 91.79/92.48 
& 91.18/86.36 
& 90.58 
& 95.44 
& 88.29 \\
\hline

\end{tabular}
}
\caption{GLUE results with \emph{unstructured} random masks on RoBERTa. 
Each layer updates a random 10\% of weights per matrix (same \#TTPs across tasks). 
Paired scores follow GLUE conventions (task-specific metrics; e.g., MRPC: F1/Acc, STS-B: Spearman/Pearson). 
Random selection attains strong accuracy without structural slices.}
\label{tab:roberta_glue_random}
\end{table*}

\begin{table*}[htbp]
\centering
\scalebox{.75}{
\setlength{\tabcolsep}{3.4pt}
\begin{tabular}{l|c|cccccccc|ccccc}
\hline
\textbf{LLM} &  \textbf{\# TTPs} &\textbf{BoolQ} & \textbf{PIQA} & \textbf{SIQA} & \textbf{H.Sw.} & \textbf{W.Gra.} & \textbf{ARCe} & \textbf{ARCc} & \textbf{OBQA} & \textbf{M.Ar.} & \textbf{G.8K} & \textbf{A.S.} & \textbf{S.eEq} & \textbf{S.MP}\\ \hline

\multirow{1}{*}{\textbf{BLOOMz$_{7B}$}} & 70.4M 
&65.44 & 74.98 & 73.81 & {56.01} &72.48 & 73.16 & {56.62} & {72.77} &79.55 & 70.73 & {71.04} &  71.22& {54.59}\\

\multirow{1}{*}{{\textbf{GPT-J$_{6B}$}}} & 60.3M 
& 66.10 & 68.23 & 68.76 & 45.81 & 66.81 & 64.77 &46.58 & 65.09 & 89.72 & 72.24 & 80.32 & 82.41 & 56.18\\

\multirow{1}{*}{{\textbf{LLaMA2$_{7B}$}}} & 71.2M 
& 69.74 & 79.85 & 77.61 & 89.13 & 76.75 & 76.23 &60.83 & 77.36 & 90.08& 76.92 & 85.89 & 82.23 & 60.49 \\

\multirow{1}{*}{{\textbf{LLaMA2$_{13B}$}}}  & 129.8M 
&  71.08 & 83.12 & 79.70 & 91.59 & 82.86 & 84.20 & 67.25 & 81.01 & 91.22 & 79.61 & 87.48 & 87.34 & 66.57 \\
\hline
\end{tabular}
}
\caption{Commonsense and math reasoning with \emph{unstructured} random masks on LLMs. 
Only 1\% of weights per matrix are trainable (\#TTPs shown). 
Despite the small budget, random selection yields competitive accuracy across BLOOMz-7B, GPT-J-6B, and LLaMA2-7B/13B.}

\label{table:LLM_results_random}
\end{table*}

We also study an unstructured variant where, instead of training a contiguous row/column slice, a \emph{random mask} selects individual weights to update. For a layer $W^{(\ell)}\!\in\!\mathbb{R}^{d_\ell\times d_{\ell-1}}$, fix a per–layer budget $m_\ell$ (to match the trainable–parameter count of a structural slice). At iteration $t$, draw a binary mask
\[
M^{(\ell)}_t \in \{0,1\}^{d_\ell\times d_{\ell-1}}, 
\qquad 
\|M^{(\ell)}_t\|_0 = m_\ell,
\]
by sampling $m_\ell$ entries uniformly without replacement (equivalently, $M^{(\ell)}_t \sim \mathrm{Bernoulli}(p_\ell)$ with $p_\ell=m_\ell/(d_\ell d_{\ell-1})$ and conditioning on the exact count). The layer update is restricted to the masked entries,
\[
W^{(\ell)} \leftarrow W^{(\ell)} - \eta_t \big(M^{(\ell)}_t \odot \nabla_{W^{(\ell)}}\mathcal{L}(\theta_t)\big),
\]
and every $N$ steps we resample a fresh mask $M^{(\ell)}_{t+N}$ with the same budget. Over $K$ mask refreshes, the accumulated increment equals $\Delta W^{(\ell)}=\sum_{i=1}^K M^{(\ell)}_{t_i}\odot U^{(\ell)}_{t_i}$, and the linearized effect is $f_{\theta_0+\Delta\theta}(x)\approx f_{\theta_0}(x)+\sum_{i=1}^K J_{M^{(\ell)}_{t_i}}(x)\,\mathrm{vec}(U^{(\ell)}_{t_i})$. Under spectral balance, a uniformly sampled mask has, in expectation, the same average overlap with the task subspace as any other subset of the same size; consequently, the restricted gradient magnitude concentrates around a fraction of the dense gradient (approximately scaling with the selection rate), and repeated resampling increases the span of visited Jacobian directions.

Tables~\ref{tab:roberta_glue_random} and \ref{table:LLM_results_random} evaluate this random–mask scheme at fixed budgets. On GLUE with RoBERTa backbones, selecting $10\%$ of weights per matrix as trainable achieves strong performance across tasks (e.g., SST-2 $94.85$ for \texttt{base}, $96.11$ for \texttt{large}; QNLI $92.92$ / $95.44$), comparable to structural slices of similar \#TTPs. On commonsense and mathematical reasoning with LLMs, training only $1\%$ of weights per matrix still yields competitive accuracy across diverse datasets for BLOOMz-7B, GPT-J-6B, and LLaMA2-7B/13B. These findings align with the local-winner view: when the backbone retains high task energy, many small subsets—structured or unstructured—can drive effective adaptation.

Despite similar accuracy at matched \#TTPs, unstructured masks are less hardware–efficient. Structural slices (vertical or horizontal) update contiguous blocks, avoid storing full binary masks, enable fused GEMM fragments, and keep optimizer state compact and cache–friendly. In contrast, random masks require maintaining and applying binary selectors, induce scattered memory access, and expand optimizer state over irregular indices. In our training logs, this manifests as higher wall–clock time and memory overhead for the same parameter budget. For deployment and large–scale runs, structural slices therefore remain preferred: they preserve the accuracy of random selection while being simpler and faster to execute.

\section{Implementation Details} \label{app:implementation}
We instantiate the theoretical recipe (\S\ref{lemma:pca_ntk}, Corollary~\ref{cor:rank}, Lemma~\ref{lem:backbone_alignment}) with a light wrapper \texttt{SliceLinear} around each selected \texttt{nn.Linear} (Listings~\ref{lst:slicelinear}). For a layer $W^{(\ell)}\!\in\!\mathbb{R}^{d_\ell\times d_{\ell-1}}$, a slice of rank $r$ is a contiguous block of either $r$ rows  or $r$ columns . At training step $t$, a binary mask $M_\ell(t)$ marks the active block; only those entries have \texttt{requires\_grad=True} and receive an increment $U^{(\ell)}_t$ supported on $M_\ell(t)$, while all other entries remain frozen at their pretrained values (plus any increments learned when they were active earlier). The mask moves every $N$ steps according to a deterministic sweep (index-0 start, stride $r$) or a randomized policy, generating a sequence $\{M_{\ell,i}\}_{i=1}^K$ over $K$ distinct positions. In the linearized view used in the theory, the accumulated update after visiting $K$ positions acts like block coordinate descent on the Jacobian blocks $\{J_{M_{\ell,i}}\}$; when their span reaches the task dimension (Corollary~\ref{cor:rank}), the method attains the global-ticket behavior predicted by the theory.

Practically,  Listing~\ref{lst:slicelinear} partitions $W$ into $(\mathrm{part\_A}, \mathrm{part\_T}, \mathrm{part\_B})$ where only $\mathrm{part\_T}$ (the active slice) is trainable. For column (column) slices, we split along the input feature axis; for row slices, along the output axis. The forward pass composes three \texttt{F.linear} calls to reconstruct $Wx{+}b$ exactly. Because only $\mathrm{part\_T}$ has \texttt{requires\_grad=True}, autograd accumulates gradients and optimizer state only for the active slice. 

 The next position is an $r$-stride shift with wrap-around (training pseudocode \ref{alg:slicefine_train}). The interval $N$ controls the adaptation–coverage trade-off: smaller $N$ increases coverage of $\{M_{\ell,i}\}$ (diversity of $J_{M_{\ell,i}}$) but gives each slice fewer consecutive updates; larger $N$ deepens per-slice adaptation but delays coverage. Our ablations find broad, task-dependent sweet spots (e.g., $N\!\in[100,500]$ for STS-B/QNLI and $N\!\in[500,1500]$ for MRPC/SST-2), consistent with the theory that each visited slice contributes additional task-aligned directions until the combined span reaches $k_{\mathrm{task}}(\tau)$.

 Rank $r$ sets capacity. By Corollary~\ref{cor:rank}, choose $r\!\ge\!k_{\mathrm{task}}(\tau)$ estimated once from frozen features on a small calibration set; familiar domains (high CEV) often admit $r{=}1$, while unfamiliar domains benefit from modestly larger $r$. Spectral balance implies robustness to position, so a deterministic sweep from index-0 is sufficient and easy to track; purely random placements behave comparably (Appendix~\ref{app:random_slice}). row, column, or alternating patterns all satisfy the same guarantees as long as the chosen $r$ meets the rank criterion.

 The forward computes the exact $Wx{+}b$ in three chunks; FLOPs are essentially those of the original layer, as in most PEFT methods. The savings arise in (i) backward: gradients are formed only for the active block (cost proportional to the slice), and (ii) optimizer/memory: state scales with the number of trainable entries ($O(d_\ell r)$ for row or $O(d_{\ell-1} r)$ for column), with \#APs$=0$. Mixed precision (BF16) and per-parameter weight decay can be applied only to trainable entries. We keep biases and LayerNorm parameters frozen by default; enabling them is straightforward but not required by the theory.

For row+column(RC) mode, we alternate slice orientation across training blocks: during the first block of $N$ steps we train a \emph{row} slice (contiguous rows) of rank $r_v$ in each selected layer; during the next block we train a \emph{column} slice (contiguous columns) of rank $r_h$, and repeat. Denote the masks by $M^{(\ell)}_{\mathrm{vert}}(t)$ and $M^{(\ell)}_{\mathrm{horiz}}(t)$. Within each block we advance the slice position by a stride equal to its rank (wrap–around), i.e., a cyclic sweep over admissible positions for the current orientation. This alternation exposes complementary Jacobian blocks—row–oriented and column–oriented—so the accumulated span $\mathrm{span}\{J_{M^{(\ell)}_{\mathrm{vert}}},\,J_{M^{(\ell)}_{\mathrm{horiz}}}\}$ grows toward the task subspace faster than using a single orientation, consistent with spectral balance and the global–ticket view. A practical rule is to pick ranks so that within one or two alternations the combined capacity meets the rank criterion,
\[
r_v + r_h \;\ge\; k_{\mathrm{task}}(\tau),
\]
while keeping the same switching interval $N$. The per–block costs are $O(d_\ell r_v)$ for row blocks and $O(d_{\ell-1} r_h)$ for column blocks, with zero auxiliary parameters.

\begin{lstlisting}[language=Python, caption={API sketch (pytorch-style) for SliceFine}, label={lst:slicelinear}, frame=single]
class SliceLinear(nn.Module):
    # ... (ctor as given)
    def reinit_from_full(self, W_full: torch.Tensor, position: int):
        # clamp position
        if self.mode == "column":
            C = W_full.shape[1]; position = min(position, C - self.rank)
            self.part_A = nn.Parameter(W_full[:, :position], requires_grad=False)
            self.part_T = nn.Parameter(W_full[:, position:position+self.rank], requires_grad=True)
            self.part_B = nn.Parameter(W_full[:, position+self.rank:], requires_grad=False)
            self.a_end, self.t_end = position, position + self.rank
        else:  # row
            R = W_full.shape[0]; position = min(position, R - self.rank)
            self.part_A = nn.Parameter(W_full[:position, :], requires_grad=False)
            self.part_T = nn.Parameter(W_full[position:position+self.rank, :], requires_grad=True)
            self.part_B = nn.Parameter(W_full[position+self.rank:, :], requires_grad=False)
    def forward(self, x):
        if self.mode == "column":
            # compute three partial linears, then add bias once
            y = (F.linear(x[..., :self.a_end], self.part_A) +
                 F.linear(x[..., self.a_end:self.t_end], self.part_T) +
                 F.linear(x[..., self.t_end:], self.part_B))
            return y + (self.bias if self.bias is not None else 0.0)
        else:  # row
            y = torch.cat([F.linear(x, self.part_A),
                           F.linear(x, self.part_T),
                           F.linear(x, self.part_B)], dim=-1)
            return y + (self.bias if self.bias is not None else 0.0)
\end{lstlisting}

\begin{algorithm}[t]
\caption{SliceFine: PEFT with Dynamic Slices}\label{alg:slicefine_train}
\begin{algorithmic}[1]
\Require Pretrained backbone $\theta_0$; selected layers $\mathcal{L}$; replace all $W^{(l)}, l \in L$ with $r$ rank SliceLinear~\ref{lst:slicelinear}  ; switching interval $N$; steps $T$; loss $\mathcal{L}$
\State Initialize $\theta \gets \theta_0$; for each $\ell\in\mathcal{L}$ choose an initial slice mask $M^{(\ell)}(0)$ (row or column, width $r$)
\State Initialize slice increments $\Delta W^{(\ell)} \gets 0$ for all $\ell$
\For{$t=1,\dots,T$}
  \State \textbf{Forward:} For each $\ell\in\mathcal{L}$, use
  \[
  W^{(\ell)} \;=\; W^{(\ell)}_0 \;+\; M^{(\ell)}(t)\odot U^{(\ell)}(t)
  \]
  with $U^{(\ell)}(t)$ supported only on the active slice; compute prediction $f_\theta(x_t)$
  \State \textbf{Loss/Grad:} $g \gets \nabla_\theta \mathcal{L}(f_\theta(x_t),y_t)$; restrict to slice coords:
  \[
  g^{(\ell)}_{\text{slice}} \;=\; M^{(\ell)}(t)\odot \nabla_{W^{(\ell)}}\mathcal{L}
  \]
  \State \textbf{Update active slice:} $U^{(\ell)}(t{+}1)\gets U^{(\ell)}(t) - \eta\, g^{(\ell)}_{\text{slice}} \quad \forall \ell\in\mathcal{L}$
  \If{$t \bmod N = 0$} \Comment{commit \& move slice}
    \For{$\ell\in\mathcal{L}$}
      \State \textbf{Commit:} $\Delta W^{(\ell)} \gets \Delta W^{(\ell)} + M^{(\ell)}(t)\odot U^{(\ell)}(t{+}1)$
      \State Reset slice buffer: $U^{(\ell)}(t{+}1)\gets 0$
      \State Freeze committed weights: $W^{(\ell)}_0 \gets W^{(\ell)}_0 + M^{(\ell)}(t)\odot \Delta W^{(\ell)}$
      \State \textbf{Move slice:} choose next mask $M^{(\ell)}(t{+}1)$ \quad (cyclic shift or random)
    \EndFor
  \Else
    \State Keep masks: $M^{(\ell)}(t{+}1)\gets M^{(\ell)}(t)$
  \EndIf
\EndFor
\State \textbf{Output:} Fine-tuned weights $W^{(\ell)}_0 + \Delta W^{(\ell)}$ with committed slice updates
\end{algorithmic}
\end{algorithm}

\end{document}